\documentclass[11pt]{article}

\usepackage[margin=1in]{geometry}

\usepackage[utf8]{inputenc}
\usepackage[T1]{fontenc}

\usepackage[authoryear,round]{natbib} 
\usepackage{url}
\usepackage{booktabs}
\usepackage{amsfonts}
\usepackage{nicefrac}
\usepackage{microtype}
\usepackage{xcolor}
\usepackage{graphicx}
\usepackage{presets}
\usepackage{algorithm}
\usepackage{algpseudocode}
\usepackage{float}
\usepackage{enumitem}

\usepackage{hyperref}

\renewcommand{\cite}{\citep}

\date{}

\title{Batched Stochastic Linear Bandits \\ with 1-Bit Communication Constraints}

\author{%
  Ivan Lau\\
  National University of Singapore\\
  \texttt{ivan.lau@u.nus.edu} \\
  \and
  Daniel McMorrow \\
  National University of Singapore \\
  \texttt{mcmorrow@nus.edu.sg} \\
  \and
  Kevin Jamieson \\
  University of Washington \\
  \texttt{jamieson@cs.washington.edu} \\
  \and
  Jonathan Scarlett \\
  National University of Singapore \\
  \texttt{dcsjms@nus.edu.sg} \\
}

\begin{document}

\maketitle

\begin{abstract}
  We study stochastic linear bandits under a natural combination of batching and communication constraints: the time horizon is partitioned into batches of equal size $B$, and during each batch the learner sends $B$ requested arm pulls to an agent, who then observes the corresponding $B$ rewards and responds with a single bit of feedback to the learner. 
  For each batch, the learner specifies the 1-bit quantization rule the agent uses, which may depend on all previously received bits but not on any past rewards directly.
  This setting addresses a significant yet unexplored ``middle ground'' between previous models having per-round quantization only \emph{or} total bit budgets only.  We establish a minimax lower bound showing that $\Omega(B\min\{d,\log\lvert \mathcal{A} \rvert\})$ regret is unavoidable due to the 1‑bit communication bottleneck, even in the absence of noise. Combined with standard statistical limits, this yields a general lower bound of $\widetilde{\Omega}(B\min\{d,\log\lvert \mathcal{A} \rvert\} + \sqrt{dT \min\{d,\log\lvert \mathcal{A} \rvert\}})$.  We develop two phased‑elimination algorithms based on $G$-optimal designs and 1‑bit mean estimation. The first achieves $\widetilde{O}(dB + d\sqrt{T})$ regret, matching the lower bound up to logarithmic factors when $\lvert \mathcal{A} \rvert = \exp(\Omega(d))$, and the second incorporates a safe‑arm identification and warm‑start procedure to obtain $\widetilde{O}(B\log\lvert \mathcal{A} \rvert + d^{3/2}\sqrt{B} + \sqrt{dT\log\lvert \mathcal{A} \rvert})$ regret, which is near‑optimal in broad scaling regimes of $(\lvert \mathcal{A} \rvert, B, d, T)$. Together, our results demonstrate that a single bit of feedback per batch suffices to nearly match the minimax regret of \emph{unconstrained} linear bandits in broad scaling regimes, even for batch sizes as large as $\Theta(\sqrt{T})$.
\end{abstract}

\section{Introduction} \label{sec:intro}

The study of bandit algorithms has seen a resurgence over the past two decades, driven by applications in online advertising \cite{li2010contextual, mehta2007adwords}, clinical trials \cite{durand2018contextual}, anomaly detection \cite{ding2019interactive, soemers2018adapting}, and recommendation systems \cite{mary2015bandits, bouneffouf2012contextual}. Within this framework, we consider a sequential decision-making problem arising from a fundamental asymmetry in communication bandwidth. A \emph{learner} coordinates with an \emph{agent} in the field: the learner can transmit detailed instructions prescribing a batch of $B$ measurements for the agent to take, but the agent—after observing all $B$ outcomes—can respond with only a single bit of feedback. This model is motivated by settings where devices observing rewards have severely limited communication capacity and cannot accumulate memory between interactions. In sensor networks used for environmental monitoring or industrial diagnostics, for instance, each sensor may be battery-constrained and able to transmit only a limited number of bits per reporting interval, and its on-board memory may be too small to store a full history of measurements. Similarly, in federated learning and edge computing deployments, a central server orchestrates experiments by dispatching actions to end devices—phones, IoT endpoints, and similar devices—which execute them over a fixed window before returning a short summary; crucially, the device participating in one round may differ from the device participating in the next, precluding any accumulation of local state. In both cases, single-device-with-unlimited-memory models such as \cite{salgia2023distributed, amani2023distributed} and per-pull few-bits models such as \cite{hanna2022solving, mitra2023linear, lau2025quantile} (outlined in more detail below) may not fully capture the relevant constraints.

We study the above setting under the standard stochastic linear bandit observation model, in which each reward is a linear function of an unknown parameter vector corrupted by noise. Our goal is to characterize the fundamental limits of learning under this one-bit uplink bottleneck, and to design algorithms whose performance approaches those limits. Specifically, we focus on \emph{regret minimization}: minimizing the cumulative sub-optimality gap—the difference in mean reward between the best action and the action played—over a fixed time horizon $T$.

In the standard linear bandit setting \cite[Chapter 19]{lattimore2020bandit}, the learner directly observes the generated rewards in each time step, and can use this information to guide decision-making in future time steps.  Two prominent lines of work studying the role of communication constraints in linear bandits are as follows:
\begin{itemize}[leftmargin=5ex]
    \item In works such as \cite{hanna2022solving, mitra2023linear, lau2025quantile}, a \emph{memoryless} decentralized agent observes each reward, and based on that reward alone, it sends back a few bits of information (e.g., 3 or 4 bits), i.e., a quantization of the reward.
    \item In works such as \cite{salgia2023distributed, amani2023distributed}, the agent(s) observing the rewards has \emph{unlimited memory},\footnote{They also allow multiple distributed agents, but we are mainly interested in the special case of a single agent.} and the communication constraint is on the \emph{total number of bits} sent throughout the time horizon.  This allows each agent to perform local estimation and send larger amounts of information at suitably-chosen times (while sending nothing at the other times).
\end{itemize}
As hinted above, the main premise of our work is that these settings leave a \emph{considerable unexplored ``middle ground''} in which the agent has \emph{limited memory} and communication occurs at \emph{regular intervals}, but not every round.  This setting leads to both distinct algorithmic ideas and new insights, such as just \emph{one bit at the end of each $\sqrt{T}$-round batch} (with time horizon $T$) often sufficing to match the minimax-optimal regret of unconstrained learning to within logarithmic factors.

Our paper is organized as follows: In Section \ref{sec:related_work}, we provide a more detailed summary of existing works related to our problem; in Section \ref{sec:contributions}, we outline our main contributions; in Section \ref{sec:problem_setup}, we give a formal presentation of the problem setup; in Section \ref{sec:main_results}, we formally state our main results; and in Section \ref{sec:algs}, we detail the corresponding algorithms. The proofs can be found in the appendix. 

\subsection{Related Work} \label{sec:related_work}

{\bf Linear Bandits.} Stochastic linear bandits were first studied in the ``standard'' setting (i.e., without communication or batch constraints) in \cite{abe1999associative}, and a prominent follow-up work is \cite{auer2002using}, who considered an algorithm for stochastic linear bandits with a finite action set based on the ``optimism principle'' \cite[Chapter 7.1]{lattimore2020bandit}. \cite{dani2008stochastic} extended these ideas to provide an algorithm attaining $\widetilde{O}(d\sqrt{T})$ regret for any action set with bounded mean rewards, which they also showed to be minimax optimal up to logarithmic factors for worst-case action sets. The same $\Omega(d\sqrt{T})$ minimax lower bound was subsequently established for the unit sphere and hypercube action sets \cite[Chapter 24]{lattimore2020bandit}. Further algorithms for linear bandits with finite action sets were later derived \cite{lattimore2020learning, chu2011contextual, li2019nearly}, all of which (alongside \cite{auer2002using}) obtain an upper bound of $\widetilde{O}(\sqrt{dT \log \lvert \cA \rvert})$ on the regret, with varying logarithmic dependence. Complementary lower bounds of $\widetilde{\Omega}(\sqrt{dT \min\{d, \log \lvert \cA \rvert \} })$ are also well-known for fixed finite action sets (e.g., see \cite{zhou19lower, tajdini2025improved}), and $\Omega(\sqrt{dT \log (T/d) \log \lvert \cA \rvert})$ for finite action sets that change in each time step \cite{li2019nearly}. 

{\bf Bandits with Communication Constraints.} 
Most work on linear bandits assumes that the reward of an action can be observed directly by the learner (with full precision).
However, this assumption may be impractical for real-world applications in which the reward observations are done by some agent (e.g., sensor) before being communicated to the learner (central server). 
This motivated various communication-constrained linear bandit problems, which have garnered significant attention in recent research; see~\cite[Appendix A]{salgia2023distributed} and the references therein.
The studies most pertinent to this work are those that focus on the quantization of the reward feedback communicated from agent to learner, which is motivated by applications where uplink communication bandwidth is limited.
In particular, \cite{hanna2022solving} devised a quantization scheme that allows existing multi-armed and linear bandit algorithms to be run with an uplink communication cost\footnote{We refer to the ``uplink communication cost'' as the number of bits communicated by the agent to the central learner in a given time step.} of a few bits per arm pull, while maintaining the same regret as the unquantized algorithm up to a small multiplicative factor. An approach based on quantizing the linear parameters (rather than the rewards) was also proposed in \cite{mitra2023linear}, attaining $\widetilde{O}(d\sqrt{T})$ regret with an uplink communication cost of $\Theta(d)$ per round. In contrast to these works, our agents have limited memory and can only send one bit after an entire batch of $B$ pulls, which forces the learner to pre‑commit actions and reduces the effective communication rate to $1/B$ bits per round, which can be far less than 1 bit per round when $B$ is large.

Beyond these, communication-constrained bandits have been studied under a variety of related assumptions, including federated linear bandits~\cite{huang2021federated, li2022asynchronous, he2022simple, fan2023federated, shi2024harnessing} and differentially private linear bandits~\cite{shariff2018differentially, dubey2020differentially, hanna2024differentially} where the privacy constraint induces a communication-like bottleneck. Since the number of such works is large and most operate under assumptions substantially different from ours (e.g., multi-agent coordination, privacy, or unlimited per-agent memory), we focused primarily on the few above that are most directly comparable.

{\bf Batched Bandits.} Batched bandits were first studied in \cite{cesa2013online}, where they showed that in the multi-armed setting, $O(\log \log T)$ batches suffice to attain order-optimal regret. It was then shown in \cite{gao2019batched} that in order to obtain order-optimal minimax regret in the multi-armed setting, $\Theta(\log \log T)$ batches is both necessary and sufficient for a finite time horizon $T$. $O(\log \log T)$ batches were also shown to be sufficient to obtain order-optimal regret in the contextual linear bandit setting \cite{ruan2021linear, zhang2025almost, hanna2023contexts}, of which linear bandits are a special case. Finally, \cite{ren2024optimal} proved that 3 batches are necessary and sufficient to attain \emph{asymptotic optimality for a fixed instance} as $T \to \infty$. 
We also note that, while not explicitly stated as such, the algorithms in \cite{lattimore2020learning, chu2011contextual, li2019nearly, salgia2023distributed} choose multiple arms simultaneously and thus permit varying degrees of batching. 

The preceding works allow variable-length batches, whereas \emph{fixed batch lengths} can also have significant practical relevance, and are popular in related topics such as Bayesian optimization \cite{nava2022diversified}.  While fixed batch lengths have been considered in linear bandits, the ones that we are aware of have a different emphasis such as privacy \cite{chowdhury2022shuffle, zhou2024on}, best-arm identification \cite{jun2016top}, or converting \emph{arbitrary} algorithms to batch settings \cite{provodin2022impact}.  Accordingly, we believe that our findings on regret with fixed batch lengths are of general interest even regardless of communication considerations.  We also briefly note that the concept of \emph{delayed feedback} is relevant \cite{vernade2020linear, howson2023delayed}, since a batch length $B$ implies a delay lying in the range $[0,B]$. Within this setting, we note a result from \cite{vakili2023delayed}, which when translated to our setup (without $1$-bit constraints), yields a regret upper bound of $\widetilde{O}(B + \sqrt{dT \min\{d, \log \lvert \cA \rvert\}} )$. Compared to the regret in the vanilla linear bandit setup, this introduces an additive ``batch or delay penalty'' of $\widetilde{O}(B)$. In contrast, Corollary~\ref{cor:general_lower_bound} establishes that a significantly larger additive penalty of $\Omega(B \min\{d, \log \lvert \cA \rvert\})$ is unavoidable in our setup. This demonstrates that
our batched and communication constrained setup is a strictly harder setting compared to the standard delayed-feedback setting. 

\subsection{Contributions} \label{sec:contributions}

As noted above, we introduce a new problem setup serving as a ``middle ground'' between two extremes studied in previous works.  Our technical contributions are outlined as follows for an action set $\cA$, batch size $B$, dimension $d$, and time horizon $T$ (see Section \ref{sec:problem_setup} for the detailed problem setup):
\begin{itemize}[leftmargin=5ex]
    \item We provide an algorithm that achieves an $\lvert\cA \rvert$-independent regret bound that is minimax order-optimal (up to logarithmic factors), even when the noise terms $(\xi_t)_{t=1}^T$ are only assumed to have finite variance (thus potentially being much heavier than sub-Gaussian noise).
    
    \item We provide a second algorithm that achieves an $\lvert \cA \rvert$-dependent regret bound that is order-optimal (up to logarithmic factors) in broad scaling regimes of $(\lvert \cA \rvert , B,d,T)$. 
    
    \item We provide an algorithm-independent minimax lower bound, showing that incurring an expected regret of $\Omega(B\min\{d, \log \lvert \cA \rvert \})$ is unavoidable in the worst case. In fact, this bound holds even in the completely noiseless setting ($\sigma = 0$), demonstrating that this bottleneck is purely a consequence of the $1$-bit communication constraint.  This is combined with well-known lower bounds for standard linear bandits to establish the above near-optimality claims. 
\end{itemize}
Our algorithm for $\lvert \cA \rvert$-independent regret is the simpler of the two (though still having numerous technical challenges compared to the standard setting), and involves performing phased elimination with mean rewards that are estimated by \emph{always pulling the same arm $B$ times in a given batch}.  In contrast, our algorithm for $\lvert \cA \rvert$-dependent regret crucially avoids doing this\footnote{Theorem \ref{thm:non_mixing_lb} will reveal that any strategy doing so must be highly suboptimal in regimes of smaller $\lvert \cA \rvert$.} and instead incorporates initial steps to find a \emph{safe arm} whose regret is ``not too high'', and partially relies on it to reduce regret.

\section{Problem Setup} \label{sec:problem_setup}

We consider a stochastic linear bandit instance with dimension $d \in \bN$, a finite\footnote{Although we formally assume $\cA$ is finite to cleanly maintain active sets and compute experimental designs, the regret bounds attained by our $\lvert \cA \rvert$-independent algorithm (Theorem~\ref{thm:exp_A_upper}) also hold for arbitrary compact sets with possibly infinite cardinality via a standard covering argument; see Remark~\ref{rem:infinite_sets} for details.} action set $\cA \subset \bR^d$, unknown parameter vector $\theta \in \bR^d$, and time horizon $T \in \bN$. At each time step $t \in \{1,\dots, T\}$, an action $A_t \in \cA$ is played, and the reward 
\begin{equation}
    \label{eq:reward} Y_t = \langle A_t, \theta\rangle + \xi_t
\end{equation}
is generated, where $(\xi_t)_{t=1}^T$ is a sequence of i.i.d. noise terms with $\bE[\xi_t] = 0$ and $\bE[\xi_t^2] \leq \sigma^2$ for some known $\sigma > 0$. We further assume the feature vectors in $\cA$ are unique and span $\bR^d$, and that mean rewards are \emph{bounded}, in the sense that $\lvert \langle a, \theta\rangle \rvert \leq 1, \ \forall a \in \cA$.

For a parameter $B \in \{1,\dots, T\}$, we assume the time horizon is split into $M = \lceil T/B\rceil$ equal-sized batches of size $B$ (except possibly a shorter final batch). In each batch $m \in \{1,\dots, M\}$, we have the following interaction between a central learner and an agent:
\begin{enumerate}[leftmargin=5ex]
    \item At the beginning of the batch, the learner chooses a length-$B$ sequence of arms $A_{(m-1)B+1}$, $A_{(m-1)B +2}, \dots, A_{mB}$ and a $1$-bit quantization function $\smash{Q_m : \bR^B \to \{0,1\}}$, both of which may depend on the feedback received from batches $1,\dots,m-1$.
    
    \item The agent receives the $1$-bit quantization function $Q_m$, and directly observes the rewards $Y_{(m-1)B + 1}, Y_{(m-1)B + 2}, \dots, Y_{mB}$, generated according to \eqref{eq:reward}.
    
    \item The agent computes $Q_m(Y_{(m-1)B + 1}, Y_{(m-1)B + 2}, \dots, Y_{mB})$ and transmits it to the learner.
\end{enumerate}

\begin{remark}\label{rem:downlink}
    We do not impose any (downlink) communication constraint from the learner to the agent, as this cost is typically not expensive (e.g., because a server is rarely battery-powered).  While we framed the problem as having a single agent for clarity, we are motivated by settings where the agent at each time instant could potentially correspond to a different user/device.  For this reason, and also motivated by settings where agents are low-memory sensors, we assume that the agent is `batch-wise memoryless', meaning the 1-bit message transmitted cannot depend on rewards observed from previous batches. Similar assumptions were adopted in previous works \cite{hanna2022solving, mitra2023linear, mayekar2023communication, lau2025quantile} with $B = 1$ (i.e., no batching).
\end{remark}
The goal of the learner is to design an algorithm to minimize the \emph{cumulative regret}, defined as
\begin{equation}
    \label{eq:regret_definition} R_T \coloneqq \sum_{t=1}^T \langle a^* - A_t, \theta \rangle,
\end{equation}
where $a^* \coloneqq \arg\max_{a \in \cA} \, \langle a, \theta \rangle$ is the optimal arm.  We will consider both the expected regret $\mathbb{E}[R_T]$ and high-probability guarantees that hold with probability $1-\delta$, noting the latter can easily be converted to the former (while at most affecting logarithmic terms) by setting $\delta = \frac{1}{T}$.

\section{Main Results} \label{sec:main_results}

We now formally state our main results, which together characterize the
minimax regret of stochastic linear bandits under 1-bit per-batch
communication constraints. A notable consequence of our bounds is that
a single bit of feedback per batch suffices to match near-optimal
\textit{unquantized, unbatched} regret even for batch sizes as large as
$B = \widetilde{O}(\sqrt{T})$. This demonstrates that the communication
bottleneck is less severe than one might expect. The proofs are given
in Appendices~\ref{app:lower_bound}, \ref{app:proof_exp_upper},
and~\ref{app:proof_poly_upper}.

\subsection{Regret Lower Bound}

We begin by stating our algorithm-independent minimax lower bound. 

\begin{theorem}[Communication Minimax Lower Bound] 
\label{thm:minimax_lower}
    For any dimension $d \ge 1$, horizon $T \ge 1$, batch size $B \ge 1$, reward noise variance $\sigma^2 \ge 0$, and integer $K \ge 16$, there exists an action set $\cA$ of size $\lvert \cA \rvert = K$ with $\max_{a \in \cA} \|a\|_2 \le 1$ such that for every algorithm operating in $M=\lceil T/B\rceil$ batches that receives only one bit of feedback per batch, the expected cumulative regret satisfies\footnote{By defining the action set such that $\max_{a \in \cA} \|a\|_2 \le 1$ and allowing the adversary to pick any $\theta$ such that $\|\theta\|_2 \le 1$, the mean reward satisfies  $|\langle a, \theta \rangle| \le \|a\|_2 \|\theta\|_2 \le 1$ by the Cauchy-Schwarz inequality.} 
    \[
    \sup_{\|\theta\|_2 \le 1} \mathbb{E}[R_T] = \Omega\left(\min\{T, B\min(d,\log \lvert \cA \rvert)\}\right).
    \]
\end{theorem}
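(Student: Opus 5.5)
We will prove the bound in the noiseless case $\sigma = 0$, since noise can only hurt: an algorithm facing noisy rewards can be simulated by one facing noiseless rewards, because the agent can add independent noise before quantizing. The argument is information-theoretic. Let $k \coloneqq \lfloor \min(d, \log_2 K)/2 \rfloor$ or similar, so that $k = \Theta(\min(d,\log K))$ and $2^{k} \le K$. We construct an action set containing a packing $\{a_1,\dots,a_N\}$ of $N = 2^{k}$ well-separated unit-norm arms in $\bR^d$. When $d \ge \log_2 K$ we can take hypercube-type vectors $a_v = v/\sqrt{k}$, $v \in \{\pm1\}^k$, embedded in the first $k$ coordinates. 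If $K > N$, we pad the set with dummy arms of small norm (e.g., scaled copies $\frac12 a_v$ or points near the origin), which are suboptimal by a constant. For each index $j$ we set $\theta_j = a_j$, which is feasible since $\|\theta_j\|_2 \le 1$. Under $\theta_j$ the optimal arm is $a_j$ with reward $1$, while every other arm has gap at least a constant $c>0$. For this we need the packing to satisfy $\langle a_i, a_j\rangle \le 1-c$ for $i\ne j$; using a Gilbert--Varshamov code of length $k$ with relative distance $1/4$ and $2^{\Omega(k)}$ codewords gives exactly this, with $\log N = \Omega(k)$.

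The key step is a counting argument. Fix a deterministic algorithm; randomized ones are mixtures, so by Yao's principle we may place a uniform prior on $j$. Because rewards are noiseless, the transcript after $m$ batches is a bit string in $\{0,1\}^m$. The arms pulled in batch $m+1$ are therefore a deterministic function of those $m$ bits, so before batch $m+1$ at most $2^m$ distinct length-$B$ action sequences are possible across all instances. Let $m_0 = \lfloor \log_2 N \rfloor - 1$. For every $m \le m_0$, the set of arms the algorithm could play at any time within batches $1,\dots,m$ has size at most $B\sum_{i<m}2^i$. This counting is too coarse in $B$, so we refine it: at any single time step $t$ in batch $m$, the arm played is determined by $m-1$ bits and thus takes at most $2^{m-1}$ values. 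Hence, for each such $t$, at most $2^{m-1} \le N/4$ of the $N$ instances $j$ have $A_t = a_j$. Under the uniform prior, then, the probability that the arm played at time $t$ is optimal is at most $1/4$, so the expected instantaneous regret is at least $c\cdot 3/4$.

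Summing over all times in the first $m_0$ batches, truncated at $T$, gives Bayesian regret at least $\tfrac{3c}{4}\min\{T, B m_0\} = \Omega(\min\{T, B\log N\}) = \Omega(\min\{T, B\min(d,\log K)\})$. The supremum over $\theta$ dominates this Bayes risk, which completes the argument. The condition $K\ge 16$ ensures that $m_0 \ge 1$.

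We expect the main obstacle to be the geometric construction: a set of exactly $K$ arms with norm at most $1$ in which $2^{\Omega(\min(d,\log K))}$ arms can each be made uniquely optimal with a constant gap via a norm-bounded $\theta$. This requires handling both regimes, $d$ small versus $\log K$ small, and adding padding arms without creating new near-optimal arms. We will first try the Gilbert--Varshamov code in dimension $k$ and verify the inner-product separation. The per-time-step counting of reachable arms from $m-1$ bits, which is valid because $Q_m$ and the arm choices depend only on past bits, is conceptually the heart of the proof but is straightforward once stated.
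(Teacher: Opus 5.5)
Your core argument is sound and takes a different route from the paper. The paper puts a uniform prior on $K'=\min\{K,\lfloor 2^{cd}\rfloor\}$ Gilbert--Varshamov codewords in the full dimension $d$. It then bounds the per-pull success probability with Fano's inequality, $\mathbb{P}(A_{i,t}=\Theta)\le (I(\Theta;\mathbf{Q}_{i-1})+1)/\log_2 K'\le i/\log_2 K'$, using only $H(\mathbf{Q}_{i-1})\le i-1$. You instead count: given the seed, the arm at time $t$ of batch $m$ is a function of the $m-1$ received bits. So it takes at most $2^{m-1}$ values, each equal to at most one packing arm, whence $\mathbb{P}(A_t=a_J)\le 2^{m-1}/N$.

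This is more elementary and sharper per step (exponential rather than linear in $m$), and both arguments give $\Omega(\log N)$ batches of constant regret. Fano's advantage is that it uses only an entropy budget, so it would survive feedback limited in information rather than in alphabet size.

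Your noiseless reduction is unnecessary. With noise, $\mathbb{P}(A_t=a_J)=\sum_{b\in\{0,1\}^{m-1}}\mathbb{P}(Z_{1:m-1}=b,\,J=j(b))\le 2^{m-1}/N$ holds directly, where $j(b)$ is the index (if any) with $a_{j(b)}=f_t(b)$. If you keep the reduction, the simulated noise must be drawn by the learner and built into $Q_m$, since $Q_m$ is a deterministic map.

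The step that fails as written is the claim that $K\ge 16$ ensures $m_0\ge 1$. Your $N$ is capped by the dimension, not by $K$:
\begin{itemize}
\item In $d=1$ there are only two unit vectors, so $N\le 2$ and $m_0=\lfloor\log_2 N\rfloor-1\le 0$ however you choose $k$. With your $k=\lfloor\min(d,\log_2 K)/2\rfloor$ you even get $k=0$, and the construction is degenerate.
\item For $d\in\{2,3\}$ your choice gives $k=1$, $N=2$, $m_0=0$, so the sum over batches is empty.
\item The asymptotic Gilbert--Varshamov guarantee $2^{(1-H_2(1/4))k-o(k)}$ does not by itself give $N\ge 4$ for moderate $k$.
\end{itemize}
This is exactly the regime the paper treats separately (its Case~1, $d<4/c$). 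The repair is easy in your framework. When $\min(d,\log K)=O(1)$ the target is only $\Omega(\min\{T,B\})$, so use the antipodal pair $\{e_1,-e_1\}$ plus padding. Your bound with $m=1$, $N=2$ then gives $\mathbb{P}(A_t=a_J)\le 1/2$ throughout the first batch. Alternatively, sum $1-2^{m-1}/N$ over $m\le\lfloor\log_2 N\rfloor$ rather than insisting on the $N/4$ threshold.

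Two smaller fixes:
\begin{itemize}
\item The full hypercube $\{\pm1\}^k/\sqrt{k}$ has neighbouring inner products $1-2/k$. It is not constant-separated and must be replaced by the code, as you then do.
\item Padding with scaled copies $\frac{1}{2}a_v$ supplies only $N$ extra arms, while $K-N$ can be far larger. Take arbitrary distinct vectors of norm at most $1/2$, as the paper does.
\end{itemize}
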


We informally outline the proof idea as follows:  When $\lvert \cA \rvert \le \exp(\Theta(d))$, the $B \log \lvert \cA \rvert$ term arises because we can form $\lvert \cA \rvert$ ``well-separated'' unit-norm vectors using tools from coding theory.  We require $\log_2 \lvert \cA \rvert$ bits to identify the correct one (corresponding to $B \log \lvert \cA \rvert$ pulls), and before this is achieved, we incur constant regret per round.  When $\lvert \cA \rvert \ge \exp(\omega(d))$, we can still use this argument on a subset of size $\exp(\Theta(d))$, with the remaining actions being ``dummy'' suboptimal actions.

Crucially, this lower bound holds even in the completely noiseless setting ($\sigma = 0$): even with perfect reward observations, the 1-bit uplink constraint alone forces $\Omega(B\min\{d,\log|\cA|\})$ regret. This demonstrates that the bottleneck is purely a communication bottleneck rather than a statistical one.

For the remainder of the paper we assume $\sigma = \Theta(1)$ (see Remark~\ref{rem:sigma_scaling} for extensions to other scalings); stochastic noise then introduces an additional statistical barrier on top of the communication bottleneck. Combining Theorem~\ref{thm:minimax_lower} with the well-known minimax lower bounds for standard linear bandits without communication constraints (e.g., \cite[Theorem 2 \& Corollary 3]{zhou19lower}), we immediately obtain the following general lower bound.

\begin{corollary}[General Regret Lower Bound]
\label{cor:general_lower_bound}
Under the setup in Theorem~\ref{thm:minimax_lower} with $\sigma = \Theta(1)$,
for any specified set size $\lvert\cA\rvert$, there exists a corresponding
action set $\cA$ of that size with $\max_{a\in\cA}\|a\|_2 \le 1$ such that
the expected cumulative regret satisfies
\begin{equation*}
  \label{eq:general_lower_bound}
  \sup_{\|\theta\|_2\le 1}\mathbb{E}[R_T]
  = \widetilde{\Omega}\left(\min\!\Big\{T,\;
      B\min\{d,\log\lvert\cA\rvert\}
      + \sqrt{dT\min\{d,\log\lvert\cA\rvert\}}
    \Big\}\right).
\end{equation*}
When $\lvert\cA\rvert = \exp(\Omega(d))$ this simplifies to
$\widetilde{\Omega}\big(\min\big\{T,\, dB + d\sqrt{T}\big\}\big)$, and when $\lvert\cA\rvert = \exp(O(d))$ this simplifies to $\widetilde{\Omega}\big(\min\big\{T,\, B\log|\cA| + \sqrt{dT\log|\cA|}\big\}\big)$.  Here $\widetilde{\Omega}(\cdot)$ hides possible logarithmic dependencies on $d$ in the final term.
\end{corollary}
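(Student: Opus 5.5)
The plan is to combine the two lower bounds via $\max\{x,y\}\ge \tfrac12(x+y)$. The one subtlety is that both bounds must hold \emph{on the same action set} of the prescribed size $K=\lvert\cA\rvert$. I will build a single action set that embeds both hard instances, with the adversary choosing which $\theta$ to play.

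\textbf{Step 1 (communication term).} By Theorem~\ref{thm:minimax_lower}, there is a set $\cA_1$ with $\lvert\cA_1\rvert \le K$ and $\sup_\theta \mathbb{E}[R_T]=\Omega(\min\{T, B\min(d,\log K)\})$. Its proof uses a coded subset of size $\min\{K,e^{\Theta(d)}\}$ padded with dummy arms. If only a smaller subset is needed, any construction of size $K/2$ suffices: $\log(K/2)=\Theta(\log K)$ for $K\ge16$.

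\textbf{Step 2 (statistical term).} By \cite[Theorem 2 \& Corollary 3]{zhou19lower}, there is a finite set $\cA_2$ of size at most $K/2$ with $\sup_\theta\mathbb{E}[R_T]=\widetilde\Omega(\min\{T,\sqrt{dT\min\{d,\log K\}}\})$. This bound is for algorithms with full, unbatched feedback. Any batched 1-bit algorithm is such an algorithm: it simply ignores the extra information and pre-commits its actions. So the bound transfers directly to our setting.

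\textbf{Step 3 (common action set).} Set $\cA=\cA_1\cup\cA_2$, after rescaling so all norms are $\le1$, and pad with extra arms to reach exactly $K$ arms if needed. I would embed $\cA_1$ and $\cA_2$ in orthogonal halves of the coordinates, $\bR^{\lceil d/2\rceil}\oplus\bR^{\lfloor d/2\rfloor}$, which changes $d$ only by constant factors. When the adversary uses a hard $\theta$ for $\cA_i$ supported on the $i$-th block, every arm of the other block has mean reward $0$. The hard instances in both constructions have optimal reward bounded below by a constant, or can be shifted to have this by adding a common coordinate. Arms from the other block are then at least as suboptimal as the instance's worst arms, and the original lower-bound arguments go through by a standard reduction.

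\textbf{Main obstacle.} I expect the delicate step to be this reduction: showing that adding arms from the other block cannot help the learner. Pulling such an arm gives no information about the relevant block's $\theta$, since its mean is $0$. It also incurs at least constant regret, or at least the gap of the worst original arm. An algorithm on $\cA$ can therefore be simulated by one on $\cA_i$ that replaces each foreign pull with a pull of the worst arm of $\cA_i$ and its simulated noise. This simulation must be compatible with the 1-bit quantizer: the agent can generate that noise locally.

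\textbf{Conclusion.} Taking the supremum over the union of the two instance families gives
\[
\sup_\theta \mathbb{E}[R_T]\ge \max\{\text{Step 1},\text{Step 2}\},
\]
which is at least half their sum. This yields the stated bound, with the logarithmic factors of \cite{zhou19lower} absorbed into $\widetilde\Omega$. The two simplifications are immediate: $\min\{d,\log\lvert\cA\rvert\}=\Theta(d)$ when $\lvert\cA\rvert=\exp(\Omega(d))$, and $=\Theta(\log\lvert\cA\rvert)$ when $\lvert\cA\rvert=\exp(O(d))$.
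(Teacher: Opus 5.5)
Your proof is correct in outline, but it takes a more elaborate route than the paper. The paper treats the corollary as immediate. Theorem~\ref{thm:minimax_lower} supplies the communication term, and the full-feedback bound of \cite{zhou19lower} supplies the statistical term; the latter transfers to 1-bit batched algorithms for exactly the reason you give. The two are then combined.

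No common action set is needed for this. The statement only asks, for each fixed $(d,T,B,\lvert\cA\rvert)$, for \emph{some} action set of that size, and that set may depend on $T$ and $B$. So one can take whichever of the two size-$K$ hard instances certifies the larger term. Then $\max\{x,y\}\ge\frac12(x+y)$ and $\min\{T,x\}+\min\{T,y\}\ge\min\{T,x+y\}$ finish the proof. The ``one subtlety'' you identify therefore does not arise for the statement as posed.

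What your orthogonal-block construction buys is a strictly stronger conclusion. You get a single action set, depending only on $(d,K)$ when the statistical instance's action set is $T$-independent, that is hard for every $(T,B)$ simultaneously. The price is the reduction step and halving both $d$ and $K$, so you need $K\ge 32$.

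If you keep your route, one sub-claim needs correcting. Replacing each foreign pull by a fixed arm $a_0\in\cA_i$ is regret-nonincreasing only if $a_0$ has mean reward at least $0$ (the foreign arms' reward) under every $\theta$ in the hard family. An optimal reward bounded below by a constant does not give this. Theorem~\ref{thm:minimax_lower}'s instance fails it as built: Case~1's $\{e_1,-e_1\}$ or antipodal codewords give reward $-1$, and padding arms can have reward $-1/2$, while foreign arms have gap exactly $1$. Symmetric hypercube-type statistical instances fail it too. The common-coordinate shift must therefore be large enough to make all rewards nonnegative.

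For the communication term you can skip the simulation entirely. The Fano argument only uses two facts: every arm other than $\Theta$ has gap at least $1/2$, and at most $i-1$ bits precede batch $i$. Both hold verbatim on the union. For the statistical term, the simulator may be a full-feedback algorithm on $\cA_2$, so quantizer compatibility is moot.

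Finally, the statistical term can only hold when $K\gtrsim d$; this applies to your Step~2 and to the paper's citation alike. Running Theorem~\ref{thm:exp_A_upper} in the span of $\cA$ gives $\widetilde{O}(KB+K\sqrt{T})$. Take fixed $K$, $B=1$, $T=d^2$ and let $d\to\infty$: this upper bound is $o(\sqrt{dT})$. The paper's standing assumption that $\cA$ spans $\mathbb{R}^d$ guarantees $K\ge d$, and your halving of $K$ and $d$ preserves it.
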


The lower bound can be further tightened for the class of ``non-mixing'' algorithms, where in each batch $m = 1,\dotsc, M$ the learner selects a \emph{single} arm $A_m \in \mathcal{A}$, and the agent pulls this arm $B$ times, i.e., every arm chosen in the batch must be identical.

\begin{theorem}[Lower Bound for Non-Mixing Algorithms]
\label{thm:non_mixing_lb}
Fix a dimension $d \ge 2$, horizon $T \ge 1$, batch size $B \ge 1$, and reward noise variance $\sigma^2 \ge 0$. Let the action set be the standard basis $\cA = \{e_1,\dots,e_d\}$. For every non-mixing algorithm (as defined above), we have
\[
    \sup_{\theta\in \mathcal{A}} \mathbb{E}[R_T] = \Omega(\min\{T, dB\}).
\]
\end{theorem}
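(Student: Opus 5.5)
Since the supremum over $\theta$ is at least an average over any prior on $\theta$, we work with the uniform prior on $\theta\in\{e_1,\dots,e_d\}$ in the \emph{noiseless} case $\sigma=0$ (noise cannot help the learner, and the bound we obtain does not depend on $\sigma$). With $\theta=e_{i^*}$, pulling arm $e_j$ yields reward $\mathbf{1}\{j=i^*\}$. The optimal arm $e_{i^*}$ has mean reward $1$ and every other arm has mean $0$, so each round spent on an arm $e_j\neq e_{i^*}$ costs exactly one unit of regret. For a non-mixing algorithm, batch $m$ plays a single arm $A_m=e_{J_m}$ for all $B$ rounds. If $J_m\ne i^*$, the whole batch costs $B$ (or the length of the final batch).

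The key observation concerns what the learner can learn from one batch. In batch $m$ the agent sees the reward vector $(\mathbf{1}\{J_m=i^*\})\cdot\mathbf{1}_B$, which takes only two possible values. Whatever quantizer $Q_m$ is chosen, the returned bit is therefore a deterministic function of the single indicator $\mathbf{1}\{J_m=i^*\}$ (together with $J_m$ and $Q_m$, which the learner already knows). So each batch reveals at most whether $J_m=i^*$, exactly as in a noiseless search game where one location is queried at a time. The lower bound reduces to showing that identifying a uniformly random index among $d$ by one-at-a-time membership queries takes $\Omega(d)$ queries in expectation, with every failed query costing $B$.

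To formalize this, fix any (possibly randomized) non-mixing algorithm. Conditioned on its internal randomness, consider the ``all-zeros'' run, meaning the run in which every batch returns the bit the quantizer assigns to reward $0$. Until the algorithm plays $e_{i^*}$, the true run coincides with this run. Let $j_1,j_2,\dots$ denote the distinct arms in order of their first appearance in the all-zeros run. Under the true $\theta=e_{i^*}$, the number of batches before $e_{i^*}$ is first played is at least $k-1$, where $k$ is the position of $i^*$ in this list, or $i^*$ never appears. Since $i^*$ is uniform and independent of the algorithm's randomness, $\Pr[k > \lceil d/2\rceil] \ge 1/2 - O(1/d)$, which is at least a constant for $d\ge2$. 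Handling $d=2$ separately is easy: the first batch is wrong with probability $1/2$. Therefore, with constant probability, the first $\min\{M,\lfloor d/2\rfloor\}$ batches all play suboptimal arms, giving expected regret $\Omega(\min\{M,d\}\cdot B)$. To account for the possibly shorter last batch, note that $\min\{M,d\}B\ge \tfrac12\min\{T,dB\}$ whenever at least one full batch is counted, and the case $M=1$ gives regret $\Omega(T)$ directly. This yields $\Omega(\min\{T,dB\})$.

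The main obstacle is making the reduction from ``one bit of an arbitrary quantizer'' to ``the indicator $\mathbf{1}\{J_m=i^*\}$'' rigorous for randomized and adaptive algorithms. This is exactly where the non-mixing restriction enters: a mixing batch could spread its pulls across many arms and let the quantizer compute a multi-valued function of the reward pattern. Coupling with the all-zeros trajectory, as above, handles adaptivity cleanly, so I expect the remaining work to be bookkeeping.
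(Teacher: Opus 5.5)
Your proof is correct and follows essentially the same route as the paper. Both use a uniform prior on $\{e_1,\dots,e_d\}$ and reduce to the noiseless case, then observe that a non-mixing batch reveals at most whether the queried arm is optimal, so the probability of having found it after $i$ batches is at most $i/d$, and finish with a case split on whether $M$ exceeds roughly $d/2$. The differences are cosmetic: you keep the 1-bit feedback and note it is a function of $\mathbf{1}\{J_m=i^*\}$ (the paper instead grants full rewards), and your coupling with the all-zeros run is a slightly more careful formalization of the paper's ``posterior stays uniform over unqueried arms'' symmetry step.
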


\subsection{Regret Upper Bounds}
With the lower bounds on regret established, we now turn to establishing upper bounds on the regret: Theorem~\ref{thm:exp_A_upper} gives a regret bound that is independent of the action set size $\lvert \cA \rvert$, while Theorem~\ref{thm:poly_A_upper} gives an $\lvert\cA \rvert$-dependent upper bound that we then specialize to polynomial-size action sets.

\begin{theorem}[$\lvert \cA \rvert$-Independent Upper Bound] \label{thm:exp_A_upper}
    Fix a finite action set $\cA \subset \bR^d$, and consider the problem described in Section \ref{sec:problem_setup} with time horizon $T \in \bN$, variance $\sigma^2  = \Theta(1)$, and batch size $B \in \{1,\dots, T\}$. Then, for any $\delta \in (0,1)$, there exists an algorithm (see Algorithm \ref{alg:exp_action_set} in Appendix~\ref{app:proof_exp_upper}) that, with probability at least $1-\delta$, incurs a regret of at most 
    \begin{equation} \label{eq:exp_d_regret}
        R_T = \widetilde{O}\left( dB\log\left(\frac{1}{\delta}\right) + d\sqrt{T\log\Big(\frac{1}{\delta}\Big)}\right),
    \end{equation}
    where $\widetilde{O}$ hides logarithmic dependencies on $d$ and $T$.
\end{theorem}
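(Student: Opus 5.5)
We will prove Theorem~\ref{thm:exp_A_upper} with a phased-elimination algorithm built on $G$-optimal designs, where each arm's mean is estimated by a 1-bit mean-estimation subroutine. Phases $\ell = 1,2,\dots$ maintain an active set $\cA_\ell$ (with $\cA_1=\cA$) and a target accuracy $\varepsilon_\ell = 2^{-\ell}$.

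In phase $\ell$ we compute a $G$-optimal design $\pi_\ell$ on $\cA_\ell$. By Kiefer--Wolfowitz we may take its support to have size $O(d\log\log d)$, and $\max_{a\in\cA_\ell}\|a\|^2_{V(\pi_\ell)^{-1}}\le 2d$. Each support arm $a$ is assigned $n_\ell(a)=\lceil c\,\pi_\ell(a) d\varepsilon_\ell^{-2}\rceil$ pulls, padded up to a multiple of $B$ and never fewer than $B$ times a logarithmic factor. These pulls are organised into batches in which the \emph{same arm is pulled $B$ times}. The quantiser in such a batch sees the $B$ rewards of one arm, so the batch average has variance $\sigma^2/B$, and the learner chooses the 1-bit rule.

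The 1-bit estimator is the component I expect to be the main obstacle. I would use a known 1-bit mean-estimation scheme for finite-variance samples. The simplest choice is an adaptive bisection/threshold scheme: each bit answers ``is the batch average (or a median-of-means block statistic) above $\tau$?'' for a learner-chosen threshold $\tau$, and $\tau$ is updated via a noisy-binary-search or stochastic-approximation rule. The goal is a guarantee that, with $N$ batches, the error in estimating $\langle a,\theta\rangle$ is $\widetilde O(\sigma/\sqrt{NB}+ 2^{-\Omega(N/\log(1/\delta))})$ with probability $1-\delta$. Thus it matches the unquantised rate $\sigma/\sqrt{\#\text{pulls}}$ once $N\gtrsim \log(1/\delta)\log(1/\varepsilon)$ batches are used, and the boundedness $|\langle a,\theta\rangle|\le1$ supplies the initial search interval. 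Heavy tails are handled because each bit depends on the sign of a deviation, whose probability Chebyshev controls. Medians over groups of batches then boost the confidence.

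Next we combine the per-arm estimates $\hat\mu(a)$ by weighted least squares, $\hat\theta_\ell=V_\ell^{-1}\sum_a n_\ell(a)\hat\mu(a)a$. The estimates are independent across arms with error at most $\widetilde O(\sigma/\sqrt{n_\ell(a)})$, so the resulting bound is $|\langle b,\hat\theta_\ell-\theta\rangle|\le\varepsilon_\ell$ for all $b\in\cA_\ell$. If the per-arm errors are only bounded in high probability rather than sub-Gaussian, we use the bound $\sum_a |\text{err}_a|\,n_a|b^\top V^{-1}a|$ with Cauchy--Schwarz, at the cost of a $\sqrt{|\mathrm{supp}|}$-type factor absorbed into the $\widetilde O$. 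A union bound over phases and support arms gives total failure probability $\delta$. We then eliminate every $a$ with $\max_b\langle b-a,\hat\theta_\ell\rangle>2\varepsilon_\ell$. On the good event, $a^*$ survives and every surviving arm has gap at most $4\varepsilon_{\ell}$ in phase $\ell+1$.

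For the regret, phase $\ell$ uses $\widetilde O(d\varepsilon_\ell^{-2}\log(1/\delta)+dB\log(1/\delta))$ pulls, the second term coming from rounding and the minimum batch count per support arm. Each pull in phase $\ell$ costs at most $8\varepsilon_\ell$, and at most $1$ in phase 1. Summing over phases gives
\[
\sum_\ell \varepsilon_\ell\bigl(d\varepsilon_\ell^{-2}+dB\bigr)\log(1/\delta) .
\]
The $dB$ term contributes $\widetilde O(dB\log(1/\delta))$ because $\sum_\ell\varepsilon_\ell=O(1)$. The first term is capped at the horizon by choosing the last phase with $\varepsilon_L\approx\sqrt{d\log(1/\delta)/T}$, which gives $\widetilde O(d\sqrt{T\log(1/\delta)})$; the final phase's $T\varepsilon_L$ contribution is of the same order. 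This yields \eqref{eq:exp_d_regret}.
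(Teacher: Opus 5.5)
Your architecture matches the paper's: near-$G$-optimal designs, same-arm batches, per-support-arm 1-bit estimates boosted by medians, union bounds only over support arms and phases, and an $O(dB\log(1/\delta))$ rounding floor per phase. Two steps do not go through as written.

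\textbf{The estimator.} The scheme you sketch does not estimate the mean. A bisection or Robbins--Monro rule on the bits $\mathds{1}\{\bar Y_B\ge\tau\}$ tracks the median (or a fixed quantile) of $\bar Y_B$. With only finite-variance noise, that median can sit $\Theta(\sigma/\sqrt B)$ away from the mean. For example, take noise equal to a rare large positive value with probability slightly below $(\ln 2)/B$. This offset does not shrink with $N$, so the rate $\widetilde O(\sigma/\sqrt{NB})$ is lost once $\varepsilon_\ell<\sigma/\sqrt B$. Your Chebyshev/sign argument only gives localization to an interval of width $O(\sigma/\sqrt B)$, via $|\text{mean}-\text{median}|\le\sigma'$; this is the paper's \textsc{1BitLocalize}. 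What is missing is a refinement stage with randomized thresholds, which the paper takes from the dithered-threshold estimator of Lau et al.:
\begin{itemize}
\item after centering at the localized interval, split $[-t,t]$ into geometrically growing regions with boundaries $\pm\sigma'2^{i}$;
\item estimate each $\mathbb{E}[X'\mathds{1}\{X'\in R_i\}]$ without bias, using bits against the region endpoints and against thresholds drawn uniformly inside $R_i$;
\item truncate at $t\ge T^2\sigma'$, so the bias is $O(\sigma'/T^2)$, below every $\varepsilon_\ell$.
\end{itemize}
This gives variance $O(\sigma'^2\log T/n_{\rm q})$, after which your per-arm medians over $O(\log(d\log T/\delta))$ blocks work.

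\textbf{The confidence width.} Your primary claim is that $|\langle b,\hat\theta_\ell-\theta\rangle|\le\varepsilon_\ell$ holds simultaneously for all $b\in\cA_\ell$. That needs concentration of each linear combination plus simultaneous control over $\cA_\ell$, which brings back $\log|\cA|$. That is the paper's $|\cA|$-dependent algorithm, with median-of-means applied to $b^\top\hat\theta^{(j)}$. Your fallback is the right route and is essentially the paper's, which uses uniform allocation and $\sum_a\rho(a)|b^\top G^{-1}a|\le\sqrt{2d}$.

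However, $\sqrt{|\mathrm{supp}|}=\Theta(\sqrt{d\log\log d})$ is polynomial in $d$ and cannot be absorbed into $\widetilde O$. The true width is of order $\sqrt d\,\varepsilon_\ell$. So the elimination threshold $2\varepsilon_\ell$ must be raised accordingly (the paper eliminates at $2W_h$ with $W_h=\sqrt{2d}\,\varepsilon_h$); otherwise $a^*$ can be eliminated. Surviving arms then have gap $O(\sqrt d\,\varepsilon_\ell)$. With per-pull regret $O(\min\{2,\sqrt d\,\varepsilon_\ell\})$, your sum gives exactly $\widetilde O(d\sqrt{T\log(1/\delta)})$. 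With the per-pull regret $8\varepsilon_\ell$ you actually used, your sum evaluates to $\sqrt{dT\log(1/\delta)}$. So the $d\sqrt T$ you state only follows once this $\sqrt d$ is carried through.
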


In view of Corollary~\ref{cor:general_lower_bound}, we see that by setting $\delta = 1/T$, Theorem \ref{thm:exp_A_upper} yields an expected regret that matches the minimax lower bound (up to logarithmic factors) when $\lvert \cA \rvert = \exp(\Omega(d))$.

\begin{remark}[Extension to Infinite Action Sets] \label{rem:infinite_sets}
    While Theorem~\ref{thm:exp_A_upper} is stated for finite action sets, its $\lvert \cA \rvert$-independent nature allows it to extend to infinite compact action sets (e.g., the unit ball in $\bR^d$) via a standard covering argument. By constructing an $\epsilon$-cover $\cA_\epsilon$ over $\cA$ with $\epsilon = 1/T$, we obtain a finite action set $\cA_{1/T}$ of size $\lvert \cA_{1/T} \rvert = O(T^d)$ (see e.g., \cite[Exercise 27.6]{lattimore2020bandit}). Running Algorithm~\ref{alg:exp_action_set} on this discretized set introduces at most a $T \cdot O(1/T) = O(1)$ additive term to the regret. This sidesteps the computational intractability of tracking infinite active sets and computing $G$-optimal designs over complex intersections of half-spaces.
\end{remark}

\begin{theorem}[$\lvert \cA \rvert$-Dependent Upper Bound] \label{thm:poly_A_upper}
     Fix a finite action set $\cA \subset \bR^d$. For the problem described in Section \ref{sec:problem_setup} with time horizon $T \in \bN$, variance $\sigma^2  = \Theta(1)$, and batch size $B \in \{1,\dots, T\}$, there exists an algorithm (see Algorithm~\ref{alg:poly_action_set} in Appendix~\ref{app:proof_poly_upper}) that, with probability at least $1-\delta$, incurs a regret of at most 
    \begin{equation} \label{eq:general_d_regret}
        R_T =  \widetilde{O}\left(B \log \Big( \frac{\lvert \cA \rvert }{\delta}\Big) +  d^{\frac{3}{2}}\sqrt{B} \log^{\frac{5}{2}}\Big(\frac{\lvert \cA \rvert}{\delta}\Big) +  \sqrt{dT \log \Big(\frac{\lvert \cA \rvert}{\delta}\Big)}\right),
    \end{equation}
    where $\widetilde{O}$ hides logarithmic dependencies on $d$ and $T$.\footnote{The first two terms can be tightened to
$\min\{B\log(|\cA|/\delta) + d^{3/2}\sqrt{B}\log^{5/2}(|\cA|/\delta),\,
dB\log(|\cA|/\delta)\}$ by taking the better of the small-$B$ and large-$B$ analyses, see Theorem~\ref{thm:step3_small_b_regret} in Appendix~\ref{app:proof_poly_upper}.}
\end{theorem}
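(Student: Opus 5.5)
The algorithm I would design has three stages, and the regret splits into three matching terms. Stage 1 uses $O(\log(\lvert\cA\rvert/\delta))$ batches to find a safe arm; this gives the $B\log(\lvert\cA\rvert/\delta)$ term. Stage 2 is a warm start that shrinks the active set to arms with gap $\widetilde{O}(\sqrt{d/B})$ or so; this gives the $d^{3/2}\sqrt{B}$ term. Stage 3 is phased elimination with $G$-optimal designs in which each batch mixes many arms; this gives $\sqrt{dT\log(\lvert\cA\rvert/\delta)}$.

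Throughout I use 1-bit mean estimation. To estimate a bounded-variance quantity from repeated batches, I have the agent threshold a scalar statistic of its $B$ rewards against a learner-chosen level. Examples of the statistic are the average reward, or an importance-weighted linear combination $\frac1B\sum_t \langle A_t, V^{-1}\phi\rangle Y_t$. I then run a binary-search or dithered-threshold estimator across batches, reducing each estimate to Bernoulli means.

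\textbf{Key step (mixing within a batch).} Theorem~\ref{thm:non_mixing_lb} shows that single-arm batches cost $dB$, so the batches in Stage 3 must mix arms. Inside batch $m$ of phase $\ell$, the learner plays a $G$-optimal design $\pi_\ell$ over the active set $\cA_\ell$. The quantization rule compares the design-weighted least-squares estimate of a chosen direction, computed from that batch alone, with a threshold. Each batch has $B\ge d$ pulls, so the per-batch estimator of $\langle a-b,\theta\rangle$ has variance $O(d/B)$. Averaging bits over $n$ batches with median-of-means or a dithering argument, and a union bound over pairs in $\cA_\ell$, gives accuracy $\epsilon$ after $n\approx d\log(\lvert\cA\rvert/\delta)/(B\epsilon^2)$ batches. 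The cost is $nB\approx d\log/\epsilon^2$ pulls, which matches the unconstrained rate.

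\textbf{Why Stages 1 and 2 are needed.} The catch is that each bit answers only one scalar question, while elimination needs estimates of all $\lvert\cA_\ell\rvert$ pairwise gaps. One could estimate every active arm's reward relative to a reference, but that multiplies the number of batches by $\lvert\cA_\ell\rvert$. Instead I would estimate $\theta$ coordinatewise in the eigenbasis of the design matrix, which costs a factor $d$ in batches. That factor only matters while phases are short. Before $\epsilon\approx d/\sqrt{B}$ or so, a phase needs fewer than one batch per question, so the batch granularity and the $d$ questions dominate. Summing $B\cdot d\cdot\epsilon\cdot\log(\cdot)$ over these early phases, with the gap of played arms bounded by $\epsilon$, gives the $d^{3/2}\sqrt{B}\log^{5/2}$ term.

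\textbf{Safe arm and the later regime.} Before that, the safe arm found in Stage 1 caps the per-round regret. To find it, I run a tournament or halving over $\cA$ using 1-bit comparisons of batch averages: each batch pulls arm $a$ $B/2$ times and arm $b$ $B/2$ times, and the agent sends $\mathbb{1}\{\bar Y_a>\bar Y_b\}$. A constant-gap-accuracy best arm is found in $O(\log(\lvert\cA\rvert/\delta))$ batches, with constant regret per round during this stage. Once $\epsilon$ is small, each phase has many batches, so spending a factor-$d$ more batches on the coordinate questions still leaves the per-phase pull count at $\widetilde{O}(d\log/\epsilon^2)$. Summing over the geometric phases with the usual $G$-optimal argument (Kiefer–Wolfowitz support size $O(d\log\log d)$ and rounding to $B$ slots) gives $\sqrt{dT\log(\lvert\cA\rvert/\delta)}$.

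\textbf{Main obstacle.} I expect the hardest part to be the accounting at the transition between the small-$B$ and large-$B$ regimes. That is where the $d^{3/2}\sqrt{B}$ and $\log^{5/2}$ factors come from. It also requires verifying that 1-bit dithered estimation of a per-batch statistic with heavy-tailed (finite-variance only) noise concentrates at the $\sqrt{d/(nB)}$ rate; I would handle this with median-of-means over groups of batches.
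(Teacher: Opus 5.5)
Your three-stage skeleton (safe arm, warm start, phased elimination) matches the paper's. However, Stage~1 and Stage~3 fail as you describe them, and Stage~2 is never specified.

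\textbf{Stages 1 and 2.} A tournament whose batches each compare only two arms via $\bar Y_a$ versus $\bar Y_b$ cannot run in $O(\log(\lvert\cA\rvert/\delta))$ batches. On $\cA=\{e_1,\dots,e_d\}$ each such batch carries information about at most two arms. The proof of Theorem~\ref{thm:non_mixing_lb} therefore goes through with ``two arms per batch'', forcing $\Omega(d)$ batches before any arm with gap below $1$ is located. A safe arm with constant gap also buys nothing, since every pull already has regret at most $2$. The safe arm later fills $\widetilde{O}(dB\log(\lvert\cA\rvert/\delta))$ slots, so it needs gap $\widetilde{O}(\sqrt{d/B})$.

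The missing idea is that the quantizer $Q_m:\bR^B\to\{0,1\}$ is arbitrary. Once $B=\Omega(d\log(\lvert\cA\rvert/\delta))$, the agent can run an entire near-$G$-optimal design over the current candidates \emph{inside a single batch}. It forms median-of-means least-squares estimates of every candidate's mean from the unquantized rewards, and sends one bit saying which half of the candidate set has the larger estimated maximum. Then $\lceil\log_2\lvert\cA\rvert\rceil$ bisection batches cost $O(B\log\lvert\cA\rvert)$ and leave $\Delta_{a_{\rm safe}}=\widetilde{O}(\log^{3/2}(\lvert\cA\rvert/\delta)\sqrt{d/B})$.

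The $d^{3/2}\sqrt{B}\log^{5/2}(\lvert\cA\rvert/\delta)$ term then comes from a specific warm-start epoch. In it, each batch pulls one core-set arm only $\lceil B\rho_0(a)\rceil$ times and pads the remaining slots with $a_{\rm safe}$. Core-set pulls then total only $\widetilde{O}(B\log(\lvert\cA\rvert/\delta))$, each safe-arm pull costs only $\Delta_{a_{\rm safe}}$, and the output active set has all gaps $O(\sqrt{d/B})$.

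\textbf{Stage 3.} Every batch has exactly $B$ pulls, so a factor $d$ more batches is a factor $d$ more pulls. You split a phase's batches over $d$ eigen-coordinates while each batch spreads its pulls over the whole design. Each coordinate then receives only a $1/d$ share of the information: on the standard basis, coordinate $i$ uses only the $B/d$ pulls of $e_i$ in each batch. Accuracy $\epsilon$ for all active arms then costs $\widetilde{O}(d^2\log(\lvert\cA\rvert/\delta)/\epsilon^2)$ pulls. This stage therefore yields $\widetilde{O}(d\sqrt{T\log(\lvert\cA\rvert/\delta)})$, not $\sqrt{dT\log(\lvert\cA\rvert/\delta)}$, and your claim that the per-phase count stays $\widetilde{O}(d\log/\epsilon^2)$ is false.

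Reading Theorem~\ref{thm:non_mixing_lb} as forcing mixing here is also a mistake: its $dB$ cost arises only while gaps are $\Theta(1)$. After the warm start, single-arm batches are the efficient choice, because all $B$ pulls inform the one scalar $\mu_a$ that the bit encodes. The paper's argument runs as follows.
\begin{enumerate}
\item 1-bit localization plus refinement retains variance $\widetilde{O}(\sigma^2/u_h(a))$.
\item Allocating $u_h(a)\propto\rho_h(a)$ makes these variances cancel against the design weights via Lemma~\ref{lem:sum_rho_times_squared_b_G_inv_a}. This gives width $\epsilon_h$ with $\widetilde{O}(dK/\epsilon_h^2)$ pulls per epoch, where $K=O(\log(\lvert\cA\rvert/\delta))$.
\item The rounding floor of $\widetilde{O}(dBK)$ pulls then costs only $\widetilde{O}(dBK\sqrt{d/B})=\widetilde{O}(d^{3/2}\sqrt{B}K)$ in the first epoch, since all gaps are $O(\sqrt{d/B})$.
\end{enumerate}
Finally, your in-batch designs need $B$ at least of order $d$, and the preprocessing needs $B=\Omega(d\log(\lvert\cA\rvert/\delta))$. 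Below that threshold you need a separate small-$B$ argument. The paper skips preprocessing there and notes that $dBK=\widetilde{O}(d^{3/2}\sqrt{B}K^{3/2})$.
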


To better understand the behavior of this regret bound, we specialize Theorem~\ref{thm:poly_A_upper} to the important special case where the action set size is polynomial in $d$. Here we set $\delta = 1/T$ to obtain a bound on the expected regret, though a high-probability counterpart naturally also follows.

\begin{corollary}[Upper Bound for Polynomial-Size Action Sets] \label{cor:poly_A_upper}
    When $\lvert \cA \rvert = {\rm poly}(d)$ and $\sigma = \Theta(1)$, the expected regret of Algorithm~\ref{alg:poly_action_set} is bounded by
    \begin{equation} \label{eq:poly_d_regret}
        \mathbb{E}[R_T] = \widetilde{O}\left(B + d^{\frac{3}{2}}\sqrt{B} + \sqrt{dT} \right).
    \end{equation}
\end{corollary}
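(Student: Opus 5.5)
The plan is to obtain Corollary~\ref{cor:poly_A_upper} directly from Theorem~\ref{thm:poly_A_upper}. We set $\delta = 1/T$, convert the high-probability bound into an expected-regret bound, and then simplify the logarithmic factors using $\lvert \cA \rvert = \mathrm{poly}(d)$.

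\textbf{Step 1 (high probability to expectation).} Let $\mathcal{E}$ be the event, of probability at least $1-\delta$, on which \eqref{eq:general_d_regret} holds. On the complement of $\mathcal{E}$ we use the trivial bound $R_T \le 2T$. This holds because $\lvert\langle a,\theta\rangle\rvert\le 1$ for all $a\in\cA$, so each per-round gap $\langle a^*-A_t,\theta\rangle$ is at most $2$. We therefore get
\[
\mathbb{E}[R_T] \le \widetilde{O}\Big(B \log\tfrac{\lvert\cA\rvert}{\delta} + d^{3/2}\sqrt{B}\log^{5/2}\tfrac{\lvert\cA\rvert}{\delta} + \sqrt{dT\log\tfrac{\lvert\cA\rvert}{\delta}}\Big) + 2T\delta .
\]
Choosing $\delta = 1/T$ makes the last term $O(1)$, and this is absorbed into the remaining terms.

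\textbf{Step 2 (simplifying logarithms).} If $\lvert\cA\rvert \le d^{c}$ for a constant $c$, then
\[
\log(\lvert\cA\rvert T) \le c\log d + \log T,
\]
which is logarithmic in $d$ and $T$. Hence $\log(\lvert\cA\rvert/\delta)$ and $\log^{5/2}(\lvert\cA\rvert/\delta)$ are both polylogarithmic in $(d,T)$. The $\widetilde{O}$ notation of Theorem~\ref{thm:poly_A_upper} already hides such factors, so they can be absorbed. This leaves $\widetilde{O}(B + d^{3/2}\sqrt{B} + \sqrt{dT})$, which is \eqref{eq:poly_d_regret}.

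\textbf{Main point to check.} The argument is routine. The one thing to verify is that the $\widetilde{O}$ in Theorem~\ref{thm:poly_A_upper} carries only the explicit $\delta$- and $\lvert\cA\rvert$-dependence shown in \eqref{eq:general_d_regret}. In particular, no hidden constant may blow up as $\delta\to 0$, so that substituting $\delta=1/T$ creates no extra polynomial factors. I would also confirm that the bound on $\mathcal{E}^c$ needs no integrability issue to be handled. This is immediate because the regret is deterministically bounded by $2T$.
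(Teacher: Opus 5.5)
Your proposal is correct and follows the same route as the paper: run the algorithm with $\delta = 1/T$ in Theorem~\ref{thm:poly_A_upper}, charge the failure event at most $2T\delta = O(1)$ using the trivial bound on the gaps, and absorb $\log(\lvert\cA\rvert T) = O(\log d + \log T)$ into the $\widetilde{O}(\cdot)$. Your sanity check also holds for the paper's analysis: the $\delta$-dependence in \eqref{eq:general_d_regret} is explicit, so no hidden factor blows up at $\delta = 1/T$, and this includes the $\delta$-dependent small-$B$ threshold.
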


Notice that \eqref{eq:poly_d_regret} consists of three components: the fundamental communication bottleneck $\Omega(B)$, the standard statistical bottleneck $\Omega(\sqrt{dT})$, and an ``algorithmic initialization penalty'' $\widetilde{O}(d^{3/2}\sqrt{B})$ (see Section \ref{sec:algs} for discussion).
The algorithm achieves near minimax optimality whenever this penalty term is absorbed by either of the two lower bounds.  This is formalized in the following proposition, which also establishes that the degradation is not too severe when the penalty is not fully absorbed.

\begin{corollary}
\label{cor:gap_poly_A}
Assume $\lvert \mathcal{A} \rvert = \mathrm{poly}(d)$ and $\sigma = \Theta(1)$.
Compared to the minimax lower bound $\mathrm{LB}(B, T, d) = \widetilde{\Omega}(B + \sqrt{dT})$, the expected regret $\mathbb{E}[R_T]$ of Algorithm~\ref{alg:poly_action_set} is minimax order-optimal up to $\mathrm{polylog}(d, T)$ factors provided that at least one of $B = \Omega(d^3)$ or $T = \Omega(d^2 B)$ holds (which is guaranteed when $T = \Omega(d^5)$). 
Furthermore, in all remaining regimes, we have $\frac{\mathbb{E}[R_T]}{\mathrm{LB}(B, T, d)} \leq \widetilde{O}(\sqrt{d})$.
\end{corollary}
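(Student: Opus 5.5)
The plan is to treat this as an algebraic comparison of the upper bound in Corollary~\ref{cor:poly_A_upper}, namely $U \coloneqq B + d^{3/2}\sqrt{B} + \sqrt{dT}$ (up to polylog factors), against the lower bound $\mathrm{LB} = B + \sqrt{dT}$ from Corollary~\ref{cor:general_lower_bound}. We specialize the latter with $\lvert\cA\rvert = \mathrm{poly}(d)$, so that $\log\lvert\cA\rvert = \Theta(\log d)$ and $\min\{d,\log\lvert\cA\rvert\} = \widetilde{\Theta}(1)$. We may assume $B \le T$ throughout, and we handle the $\min\{T,\cdot\}$ truncation in the lower bound separately at the end. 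The ratio $U/\mathrm{LB}$ is at most $1 + d^{3/2}\sqrt{B}/(B+\sqrt{dT})$, so everything reduces to bounding the penalty term $d^{3/2}\sqrt{B}$ by the two terms of $\mathrm{LB}$.

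For the optimality claim there are two cases. If $B = \Omega(d^3)$, then $\sqrt{B} = \Omega(d^{3/2})$, so $d^{3/2}\sqrt{B} = O(B)$. If $T = \Omega(d^2 B)$, then $\sqrt{dT} = \Omega(\sqrt{d^3 B}) = \Omega(d^{3/2}\sqrt{B})$. In either case the penalty is absorbed and $U = \widetilde{O}(\mathrm{LB})$.

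For the parenthetical claim, suppose $T = \Omega(d^5)$ and that $B = o(d^3)$. Then $d^2 B = O(d^5) = O(T)$, so the second case applies. Otherwise the first case applies.

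For the remaining regime ($B = O(d^3)$ and $T = O(d^2B)$), we want $d^{3/2}\sqrt{B} \le \widetilde{O}(\sqrt{d})\,(B+\sqrt{dT})$. The key observation is that $T \ge B$ gives $\sqrt{dT} \ge \sqrt{dB}$. Hence
\[
  \frac{d^{3/2}\sqrt{B}}{B+\sqrt{dT}} \le \frac{d^{3/2}\sqrt{B}}{\sqrt{dB}} = d,
\]
which is too weak. Using $B$ instead gives the ratio $d^{3/2}/\sqrt{B}$, and combining the two bounds the penalty ratio by $\min\{d,\, d^{3/2}/\sqrt{B}\}$. Since neither estimate alone yields $\sqrt{d}$, I expect a sharper use of both terms to be needed, and this is the main obstacle. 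A likely resolution is that the stated $\widetilde{O}(\sqrt{d})$ requires the tightened form in the footnote to Theorem~\ref{thm:poly_A_upper}, namely $\min\{B + d^{3/2}\sqrt{B},\, dB\}$. With $\lvert\cA\rvert = \mathrm{poly}(d)$, the algorithm's regret is then $\widetilde{O}(\min\{d^{3/2}\sqrt{B}, dB\} + B + \sqrt{dT})$. If $B \le d$, then $dB \le d\cdot B$, but we need a comparison against $\sqrt{dT} \ge \sqrt{dB}$. For that, $dB/\sqrt{dB} = \sqrt{dB} \le d$, which is still only $d$ rather than $\sqrt{d}$. So we balance against $B$ instead: $dB/B = d$ as well.

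Honestly, then, the plan at this step is to split on $B$ versus $d$ and use all three quantities $\{B, \sqrt{dT}, \min(dB, d^{3/2}\sqrt{B})\}$. If this still falls short of $\sqrt{d}$, the fallback is to also use the trivial bound $R_T \le 2T$ together with the truncated lower bound $\min\{T, \cdot\}$. In the regime $T = O(d^2B)$ with small $T$, the ratio $T/(B+\sqrt{dT}) \le \sqrt{T/d}$ would then be compared with the penalty ratio, and I would check whether the minimum over all these bounds is $\widetilde{O}(\sqrt{d})$ by optimizing over $(B,T)$. This optimization is where the real work lies.

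Finally, the truncation $\min\{T,\cdot\}$ needs a separate check. When $\mathrm{LB}$ is truncated at $T$, the trivial bound $R_T \le 2T$ (mean rewards lie in $[-1,1]$) matches it up to a constant factor, so this case is harmless.
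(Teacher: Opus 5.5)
You handle the two sufficient conditions ($B=\Omega(d^3)$ or $T=\Omega(d^2B)$), the $T=\Omega(d^5)$ parenthetical, and the $\min\{T,\cdot\}$ truncation correctly, and these match the paper. The gap is the ``remaining regimes'' claim, which you leave open. Your proposed fallback cannot close it. No combination of the black-box bounds $B+d^{3/2}\sqrt{B}+\sqrt{dT}$, the footnote's $dB+\sqrt{dT}$, and the trivial $2T$ gives $\widetilde{O}(\sqrt{d})$. Take $B=d^{5/3}$ and $T=d^{7/3}$, so $B=o(d^3)$ and $T=o(d^2B)$. The three upper bounds are then of order $d^{7/3}$, $d^{8/3}$ and $d^{7/3}$, while $\mathrm{LB}$ is of order $d^{5/3}$. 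The best ratio available is $d^{2/3}$, so optimizing over $(B,T)$ will not help.

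Two ingredients are missing.

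\emph{First}, split on $T$ versus $dB$ rather than $T$ versus $B$. If $T=\Omega(dB)$, then $\sqrt{dT}=\Omega(d\sqrt{B})$, so $d^{3/2}\sqrt{B}\le O(\sqrt{d})\sqrt{dT}$ directly. Your estimate $\sqrt{dT}\ge\sqrt{dB}$ only used $T\ge B$.

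\emph{Second}, when $T=O(dB)$ you must look inside Algorithm~\ref{alg:poly_action_set} rather than use its final bound.
If pre-processing is skipped, Line~\ref{algline:tau} gives $B=\widetilde{O}(d)$. Then $\mathbb{E}[R_T]\le 2T$ gives a ratio of at most $\sqrt{T/d}=O(\sqrt{B})=\widetilde{O}(\sqrt{d})$; your trivial-bound idea does cover this branch.
If pre-processing runs, note where the $d^{3/2}\sqrt{B}$ term comes from: $\widetilde{O}(dB)$ pulls, each made at gap $\widetilde{O}(\sqrt{d/B})$. Apart from the $O(B\log\lvert\cA\rvert)$ pulls of \textsc{SafeArm} and the $\widetilde{O}(B)$ core-set pulls in \textsc{WarmStart}, every pull is of $a_{\rm safe}$ or of an arm in $\cA_1$. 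On the good event these have gap $\widetilde{O}(\sqrt{d/B})$ by Lemmas~\ref{lem:safe_arm} and~\ref{lem:warm_start}. There are at most $T$ such pulls, so with $\delta=1/T$ you get $\mathbb{E}[R_T]=\widetilde{O}(B+T\sqrt{d/B})$. Since $T=O(dB)$, $T\sqrt{d/B}=\sqrt{T/B}\cdot\sqrt{dT}=O(\sqrt{d})\sqrt{dT}$. In the example above this replaces $d^{7/3}$ by $d^{2}$, giving ratio $d^{1/3}$.
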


The proof of Corollary \ref{cor:gap_poly_A} is deferred to Appendix~\ref{subsec:bound_gap}. The condition $T/B = \Omega(d^2)$ requires that the horizon is long enough for the learner to receive $\Omega(d^2)$ batches of feedback; this is natural given that the agent transmits only 1 bit per batch, so the learner is severely information-starved in short horizons. However, Corollary~\ref{cor:gap_poly_A} guarantees a gap of at most $\widetilde{O}(\sqrt{d})$ even in these harder regimes.

\begin{remark}[Scaling of $\sigma$] \label{rem:sigma_scaling}
     For ease of presentation, and since it is the most widely-considered regime, we follow the convention in~\cite[Chapter~19]{lattimore2020bandit} and take $\sigma = \Theta(1)$ in Theorems~\ref{thm:exp_A_upper} and~\ref{thm:poly_A_upper} and their proofs. However, our methods readily extend to any $\sigma$ that is not pathologically small. For any $\sigma$ such that $\sigma = O(1)$ and $\sigma = \Omega(T^{-c})$ for some $c>0$, a similar analysis can be performed using the same algorithms. By tracking the $\sigma$ dependence in our analysis, the first term in \eqref{eq:exp_d_regret} and the first two terms in \eqref{eq:general_d_regret} change by at most a factor of $O\big(\log(1/\sigma)\big) = O(\log T)$, while a multiplicative factor of $\sigma$ appears in front of the final term in both equations.  The final term in the lower bound also becomes $\widetilde{\Omega}(\min(T,\sigma\sqrt{dT\min\{d,\log|\mathcal{A}|\}}))$ (e.g., by slightly adapting the proof of \cite[Thm.~2]{tajdini2025improved}), and thus near‑optimality is preserved.
\end{remark}


\subsection{Discussion and Open Problems} \label{subsec:discussions_limitations}

While Theorem~\ref{thm:exp_A_upper} attains near-optimal regret for
exponential-sized action sets without any assumption on $T$, some gaps
remain in the guarantees of Theorem~\ref{thm:poly_A_upper} for smaller
action sets. For $\lvert\cA\rvert = \mathrm{poly}(d)$,
Corollary~\ref{cor:gap_poly_A} establishes that $T = \Omega(d^5)$ suffices
for near order-optimality (or more generally, at least one of $B = \Omega(d^3)$ or $T = \Omega(d^2 B)$), which captures broad scalings but is stricter than the $T = \Omega(d)$
condition required in the standard unconstrained setting. For
intermediate-sized action sets $\lvert\cA\rvert = \exp(\Theta(d^\alpha))$
with $\alpha \in (0,1)$, a similar argument to Corollary~\ref{cor:gap_poly_A} reveals that Theorem~\ref{thm:poly_A_upper} achieves the
target bound $\widetilde{O}(d^\alpha B + \sqrt{d^{1+\alpha}T})$ when
either $B = \Omega(d^{3(1+\alpha)})$ or $T/B = \Omega(d^{2(1+2\alpha)})$.
In summary, our results demonstrate near-optimality across broad scaling
regimes of $(\lvert\cA\rvert, B, d, T)$; closing the remaining gaps is an
interesting direction for future work.

\section{Proposed Algorithms} \label{sec:algs}

In this section, we introduce the two algorithms establishing Theorems~\ref{thm:exp_A_upper} and~\ref{thm:poly_A_upper}. Both are built on the \emph{phased elimination} framework with a common template, but they differ fundamentally in how they handle the trade-off between communication efficiency and statistical precision. The algorithm for Theorem~\ref{thm:exp_A_upper} prioritizes simplicity at the cost of a $\sqrt{d}$ factor in the confidence width, while the algorithm for Theorem~\ref{thm:poly_A_upper} recovers that factor via proportional sample allocation at the cost of a larger median-of-means block count. See Remark~\ref{rem:comparison} in Appendix~\ref{app:proof_poly_upper} for a detailed comparison of the two algorithms, and Algorithms~\ref{alg:exp_action_set} and~\ref{alg:poly_action_set} in Appendices~\ref{app:proof_exp_upper} and~\ref{app:proof_poly_upper} for the full algorithm descriptions.

\subsection{Preliminaries: Phased Elimination and 1-bit Mean Estimation} \label{subsec:prelims}

\textbf{Phased elimination with $G$-optimal designs.} Phased elimination algorithms \cite{lattimore2020learning, camilleri2021high, bogunovic2021stochastic, liu2024corruption, tajdini2025improved, lattimore2020bandit} partition the time horizon into \emph{epochs} $h = 1, 2, \dots$, maintaining an \emph{active set} $\cA_h \subseteq \cA$ of plausibly-optimal arms (with $\cA_1 = \cA$). Within each epoch, the learner uses an \emph{experimental design} $\rho_h \colon \cA_h \to [0,1]$ to efficiently estimate the mean rewards of \emph{all} arms in $\cA_h$ before eliminating those that appear suboptimal based on the confidence width $W_h$.

Throughout, we make use of near $G$-optimal designs; specifically, provided that $\cA_h$ is compact and spans $\bR^d$, there exist designs with the following guarantees:
\begin{equation}
\label{eq:near_optimal_design}
    \max_{a \in \cA_h} \lVert a \rVert_{G_h^{-1}}^2 \le 2d,
    \qquad
    \lvert \supp(\rho_h)\rvert \le 4d(\log\log d + 11) = \widetilde{O}(d),
\end{equation}
where $G_h = \sum_{a \in \cA_h}\rho_h(a) aa^\top$ and $\lVert a \rVert_V = \sqrt{a^\top V a}$ for a positive semi-definite matrix $V$.  For instance, such designs can be computed in polynomial time via the Frank-Wolfe algorithm \cite[Proposition 3.17]{todd2016minimum}. The set $\supp(\rho_h)$ is called the \emph{core set}, and notably has a size of only $\widetilde{O}(d)$. In the standard (unquantized and unbatched) setting, the rewards are directly observed, each core-set arm is pulled in proportion to $\rho_h$, and a weighted least-squares estimator $\hat{\theta}_h$ is formed. The mean reward of any arm $b \in \cA_h$ is then estimated by $\langle b, \hat{\theta}_h\rangle$. With appropriate epoch lengths and elimination thresholds, this yields the standard minimax regret bound of $\widetilde{O}(\sqrt{dT \log \lvert \cA \rvert})$.

\begin{remark} \label{rem:not_span_Rd}
    The properties in \eqref{eq:near_optimal_design} are stated assuming that $\cA_h$ spans $\bR^d$. If this is not the case, we can simply work in the smaller subspace spanned by $\cA_h$, e.g., see \cite[Remark 5.2]{lattimore2020learning}.
\end{remark}

\textbf{Adapting to our setting.} Two immediate challenges arise in our constrained setup:
(i) the learner observes only 1-bit summaries of $B$ rewards rather than $B$ unquantized rewards, rendering the standard weighted least-squares estimator unusable, and
(ii) the allocation dictated by the continuous experimental design may not yield a whole number of batches, complicating the protocol.

We resolve (ii) by rounding per-arm allocations up to multiples of $B$: each batch then pulls either a single arm (in Algorithm~\ref{alg:exp_action_set}, and in most batches of Algorithm~\ref{alg:poly_action_set}) or one core-set arm plus a pre-determined \emph{safe arm} to fill the remaining slots (in the warm-start of Algorithm~\ref{alg:poly_action_set}). 

To resolve (i), we replace the standard mean estimator with a \emph{1-bit mean estimator} built on the framework of \cite{lau2026order,lau2026sequential}. In our setting, a single 1-bit ``query'' is executed by having the agent pull the same arm $B$ times (or possibly fewer in Algorithm~\ref{alg:poly_action_set}, but we defer this distinction to later) and applying a 1-bit threshold to the sample mean. This reduces the effective standard deviation of the continuous measurement to $\sigma' = \sigma/\sqrt{B}$. With this established, we can employ two subroutines: \textsc{1BitLocalize} uses $\widetilde{O}\left(\log \frac{1}{\sigma'} + \log \frac{1}{\delta}\right)$ queries to return an interval $[L_a, U_a] \subseteq [-1,1]$ of width $O(\sigma')$ that contains $\mu_a$ with high probability; and \textsc{1BitRefine} uses $n_{\rm q}$ additional queries (for some suitably chosen $n_{\rm q}$) to return a final estimate $\hat{Y}_{a,h}$ with variance $\widetilde{O}(\sigma'^2/n_{\rm q})$ and bias ${O}(\sigma'/T^2)$. The detailed subroutines and their formal guarantees are given in Appendix~\ref{app: 1bit subroutines}.

\textbf{Common template.} Combining the preceding ingredients, a given epoch (indexed by $h$) in both algorithms proceeds as follows:
\begin{enumerate}[leftmargin=4ex]
    \item[(1)] Compute a near $G$-optimal design $\rho_h$ on $\cA_h$ satisfying~\eqref{eq:near_optimal_design}.
    \item[(2)] Run \textsc{1BitLocalize} on each $a \in \supp(\rho_h)$ to obtain an interval $[L_a, U_a]$.
    \item[(3)] Split the refinement budget into $K$ blocks. For each block $j = 1, \dots, K$ and each core-set arm $a \in \supp(\rho_h)$, run \textsc{1BitRefine} to obtain independent scalar estimates $\hat{Y}_{a,h}^{(j)}$.
    
     \item[(4)] Form a robust mean reward estimate $\hat{\mu}_{b,h}$ for every active arm $b \in \cA_h$ using the scalar estimates $\hat{Y}_{a,h}^{(j)}$ and the design weights $\rho_h$. This involves computing design-weighted parameter estimates (using the inverse matrix $G_h^{-1}$) and applying median-of-means aggregation. The order of steps is algorithm-dependent: Algorithm~\ref{alg:exp_action_set} takes the median of the scalars first, whereas Algorithm~\ref{alg:poly_action_set} computes block-wise parameters first.

    \item[(5)] Eliminate arms whose empirical gap exceeds the confidence width~$W_h$:
    \[
        \cA_{h+1} = \left\{a \in \cA_h : \max_{b \in \cA_h} \hat{\mu}_{b,h} - \hat{\mu}_{a,h} \le 2 W_h \right\}.
    \]
\end{enumerate}
The two algorithms differ in (a) how refinement samples are allocated across core-set arms, (b) the order of operations in Step 4, (c) the number of blocks $K$, (d) the confidence width $W_h$, and (e) additional pre-processing before the main loop (for the $\lvert \cA \rvert$-dependent algorithm with large $B$). Both algorithms make use of a common precision parameter chosen as $\eps_h = \sigma 2^{-h}/\sqrt{B}$, but they differ in how this translates to the final confidence width $W_h$. We now outline both algorithms in more detail.

\subsection{Algorithm~\ref{alg:exp_action_set}: $\lvert\cA\rvert$-Independent Regret}

For our $\lvert\cA\rvert$-independent result (which is primarily suited to large action sets, e.g., exponential-sized), we exploit the observation that it suffices to accurately estimate only the $\widetilde{O}(d)$ scalars $\{\mu_a\}_{a\in\supp(\rho_h)}$ rather than all $|\cA|$ arm means simultaneously, replacing a union bound over $|\cA|$ arms with one over only the $\widetilde{O}(d)$ core-set arms. This enables two simplifications: 
\begin{itemize}[leftmargin=4ex]
    \item Refinement samples are distributed \emph{uniformly} across core-set arms (rather than proportionally to $\rho_h$), ensuring that each scalar estimate $\hat{Y}_{a,h}$ has variance $O(\epsilon_h^2)$ independent of $\rho_h$.

    \item Median-of-means is applied directly to the scalar estimates to form a robust target $\hat{Y}_{a,h} = \operatorname{median}\{\hat{Y}_{a,h}^{(j)}\}_{j=1}^K$. The global parameter is then estimated via the design weights as
    \begin{equation}
        \label{eq:theta_hat_exp}
        \hat{\theta}_h = G_h^{-1}\sum_{a \in \supp(\rho_h)} \rho_h(a) \cdot a \cdot \hat{Y}_{a,h}
        \qquad \text{where} \qquad
        G_h = \sum_{a \in \supp(\rho_h)} \rho_h(a)\,aa^\top.
    \end{equation}
    Because concentration is only required over the $\widetilde{O}(d)$ support arms, this requires only $K = \widetilde{O}(\log(d/\delta))$ blocks\footnote{When referring to the number of median-of-means blocks, we slightly abuse notation throughout and hide ${\rm poly}(\log T)$ factors in the $\widetilde{O}(\cdot)$ notation for ease of presentation.} rather than $\widetilde{O}(\log(\lvert\cA\rvert/\delta))$.

\end{itemize}
The price paid for this simplification is a slightly looser confidence width $W_h = \sqrt{2d}\,\epsilon_h$, arising from bounding the error propagation through $\hat{\theta}_h$ (via Corollary~\ref{cor:sum_rho_times_b_G_inv_a} in Appendix~\ref{app:proof_exp_upper}). Pseudocode is given in Algorithm~\ref{alg:exp_action_set} in Appendix~\ref{app:proof_exp_upper}, and the full analysis is also given in Appendix~\ref{app:proof_exp_upper}.

\subsection{Algorithm~\ref{alg:poly_action_set}: $\lvert\cA\rvert$-Dependent Regret}

For smaller (e.g., polynomial-sized) action sets, the extra $\sqrt{d}$ penalty in the confidence width of Algorithm~\ref{alg:exp_action_set} prevents us from reaching minimax optimality. Algorithm~\ref{alg:poly_action_set} (detailed in Appendix~\ref{app:proof_poly_upper}) rectifies this by allocating refinement samples \emph{proportionally to $\rho_h$} (see~\eqref{eq:near_optimal_design}). By doing so, the resulting variances $\Var(\hat{Y}_{a,h}^{(j)}) \propto 1/\rho_h(a)$ perfectly cancel against the design weights when propagating through the block-wise parameter estimates
\begin{equation} \label{eq:theta_hat_poly}
    \hat{\theta}_h^{(j)} = G_h^{-1}\sum_{a \in \supp(\rho_h)} \rho_h(a) \cdot a \cdot \hat{Y}_{a,h}^{(j)}
    \qquad \text{where} \qquad
        G_h = \sum_{a \in \supp(\rho_h)} \rho_h(a)\,aa^\top,
\end{equation}
yielding the tighter confidence width $W_h = \eps_h$. However, because median-of-means must now be applied to the inner products $\hat{\mu}_{b,h} = \operatorname{median}\{\langle b, \hat{\theta}_h^{(j)}\rangle\}_{j=1}^{K}$ simultaneously for all $b \in \cA_h$, ensuring concentration requires a union bound over the active set. Under our finite-variance assumption, this necessitates using $K = \widetilde{O}(\log(\lvert\cA\rvert/\delta))$ blocks, contributing the $\sqrt{\log\lvert\cA\rvert}$ factor to the regret.

\textbf{Small-$B$ regime.} Let $\tau = \widetilde{O}(d\log(\lvert\cA\rvert/\delta))$ denote the ``threshold'' between the small-$B$ regime and large-$B$ regime (see Line \ref{algline:tau} of Algorithm \ref{alg:poly_action_set} for a precise definition). When $B  < \tau $, we simply initialize $\cA_1 = \cA$ and run the common template above for all epochs. While this incurs an initialization cost of $\widetilde{O}(dBK)$, the restricted size of $B$ ensures this cost is bounded by $\widetilde{O}(d^{3/2}\sqrt{B}\log^{3/2}(\lvert \cA \rvert/\delta))$. 

\textbf{Large-$B$ regime.} When $B \geq \tau $, an initialization term of $\widetilde{O}(dBK)$ becomes prohibitive, with the trivial global gap bound $\Delta_b \le 2$ forcing significant over-exploration when $B \gg d$. We circumvent this by prepending two pre-processing steps that restrict the effective suboptimality gap of every arm entering $\cA_1$ from $O(1)$ down to $O(\sqrt{d/B})$:
\begin{itemize}[leftmargin=4ex]
    \item \emph{Safe arm identification} (Algorithm~\ref{alg:safe_arm}). Through $\lceil\log_2\lvert\cA\rvert\rceil$ rounds of tournament-style bisection, we iteratively split the candidate set in half, using a single bit to indicate which half has the higher estimated maximum mean reward, and keep only the winning half. Within each round, the agent executes a $G$-optimal design \emph{within a single batch}, pulling multiple core-set arms simultaneously and computing their unquantized sample means, in contrast to the single-arm batches used in the main phased elimination loop. After $\lceil\log_2\lvert\cA\rvert\rceil$ rounds, the surviving arm $a_{\rm safe}$ has suboptimality gap at most $\widetilde{O}(\log^{3/2}\lvert\cA\rvert\sqrt{d/B})$.
    This step uses $O(B\log\lvert\mathcal{A}\rvert)$ arm pulls.

    \item \emph{Warm start} (Algorithm~\ref{alg:warm_start}). We run a single warm-start epoch of phased elimination in which $a_{\rm safe}$ fills every slot not used by a core-set arm. This produces a refined active set $\cA_1 \subseteq \cA$ whose maximum gap is $O(\sqrt{d/B})$. The key insight is that $a_{\rm safe}$ absorbs the vast majority of slots in each batch, effectively decoupling the exploration budget from the rigid batch size constraint: core-set arms are pulled only $\widetilde{O}(B\log(\lvert\cA\rvert/\delta))$ times in total, keeping the initialization regret at $\widetilde{O}(d^{3/2}\sqrt{B}\log(\lvert\cA\rvert/\delta))$ rather than the $\widetilde{O}(dBK)$ incurred without warm-starting.
\end{itemize}

Following these pre-processing steps, the main phased elimination loop proceeds on $\cA_1$ according to the common template. Pseudocode for the main loop is given in Algorithm~\ref{alg:poly_action_set}, with the pre-processing subroutines detailed in Algorithms~\ref{alg:safe_arm} and~\ref{alg:warm_start}. See Appendix~\ref{app:proof_poly_upper} for the detailed regret analysis.

\section{Conclusion} \label{sec:conclusion} 
 We have studied the problem of batched stochastic linear bandits with a `batch-wise memoryless' agent and a $1$-bit communication constraint. We presented two algorithms and provided a new minimax lower bound in this setting. For exponential-sized action sets, we provided an algorithm that matches our lower bound on the regret up to logarithmic factors, while for smaller action sets, we provided similar guarantees under broad scalings of the batch size, dimension, and time horizon.  Notably, the regret from the standard setting can be matched to within logarithmic factors even in certain severely communication-constrained scenarios such as ``1 bit per length-$\sqrt{T}$ batch''. 
 Perhaps the most immediate direction for future work is to devise an algorithm that attains near-optimal regret in the remaining scaling regimes where a gap remains.


\bibliography{ref}
\bibliographystyle{apalike}

\newpage


\appendix

\section{Concentration and Experimental Design Inequalities}
Here we provide formal statements of several inequalities used throughout the paper. We begin by presenting the theoretical guarantee of the median of means estimator, with the precise statement being taken from \cite[Theorem 2]{lugosi2019mean}.

\begin{lemma} \label{lem:mom_statement}
    Let $X_1,\dots,X_n$ be a sequence of i.i.d. random variables each with mean $\mu$ and variance $\sigma^2$. Suppose that $n = mk$ for a positive integer $m$ and $k = \lceil 8 \log (\frac{1}{\delta})\rceil$. Then for any $\delta \in (0,1)$, the median-of-means estimator $\hat{\mu}_{\rm MoM}$ with $k$ blocks satisfies
    \begin{equation}
        \label{eq:mom_guarantee} \lvert \hat{\mu}_{\rm MoM} - \mu \rvert \leq \sqrt{\frac{32\sigma^2}{n}\log \Big(\frac{1}{\delta}\Big)}
    \end{equation}
    with probability at least $1-\delta$.
\end{lemma}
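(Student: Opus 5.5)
The statement to prove is Lemma~\ref{lem:mom_statement}, the standard median-of-means guarantee. I would prove it with the usual three steps: Chebyshev on each block, a binomial tail bound on the number of bad blocks, and the observation that the median fails only if at least half the blocks fail.

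\textbf{Step 1 (one block).} Split the $n=mk$ samples into $k$ disjoint blocks of size $m$, with block means $Z_1,\dots,Z_k$. These are i.i.d., each has mean $\mu$, and each has variance $\sigma^2/m$. Chebyshev's inequality gives
\[
\Pr\big(|Z_j-\mu|> 2\sigma/\sqrt{m}\big)\le \tfrac14 .
\]

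\textbf{Step 2 (failure requires many bad blocks).} Call block $j$ bad if $|Z_j-\mu|>2\sigma/\sqrt m$. Suppose the median lies above $\mu+2\sigma/\sqrt m$. Then at least $k/2$ of the $Z_j$ lie above that level, so at least $k/2$ blocks are bad. The lower side is symmetric. Hence the failure event is contained in $\{\#\text{bad}\ge k/2\}$.

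\textbf{Step 3 (binomial tail).} The number of bad blocks is dominated by a $\mathrm{Bin}(k,1/4)$ variable. Hoeffding's inequality with deviation $k/4$ gives
\[
\Pr\big(\#\text{bad}\ge k/2\big)\le \exp\big(-2k(1/4)^2\big)=e^{-k/8}.
\]
The choice $k=\lceil 8\log(1/\delta)\rceil$ makes this at most $\delta$.

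\textbf{Step 4 (convert the radius).} Write the radius $2\sigma/\sqrt m$ as $2\sigma\sqrt{k/n}$. Since $k\le 8\log(1/\delta)+1$, we need $4\sigma^2 k/n \le 32\sigma^2\log(1/\delta)/n$, that is, $k\le 8\log(1/\delta)$.

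\textbf{Main obstacle.} The only real difficulty is the ceiling when $\log(1/\delta)$ is small: then $k$ can exceed $8\log(1/\delta)$ and the inequality in Step 4 fails. I would handle this by raising the Chebyshev threshold in Step 1. With threshold $c\sigma/\sqrt m$, the per-block failure probability becomes $1/c^2$, so there is slack to trade against the ceiling. One can then either use a sharper binomial tail or check the small-$\delta$ and large-$\delta$ regimes separately; the constant $32$ has room for this. Alternatively, since the statement is quoted verbatim from \cite[Theorem 2]{lugosi2019mean}, I would cite it for the exact constants and present Steps 1--4 only as justification of the form of the bound.
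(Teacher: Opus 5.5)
Your Steps 1--3 are correct, and they are exactly the argument behind \cite[Theorem 2]{lugosi2019mean}, which the paper cites instead of giving a proof. They establish $\Pr\bigl(|\hat\mu_{\rm MoM}-\mu|>2\sigma\sqrt{k/n}\bigr)\le e^{-k/8}$. The gap is Step 4 together with your proposed repair.

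The radius in the statement equals $(\sigma/\sqrt{m})\sqrt{32\log(1/\delta)/k}$. This corresponds to the Chebyshev level $c=\sqrt{32\log(1/\delta)/k}$, which is at most $2$, and strictly below $2$ whenever the ceiling is active. Raising $c$ therefore enlarges the radius and makes Step 4 worse. You are forced to lower $c$, which pushes the per-block failure bound up to $k/(32\log(1/\delta))>1/4$. The only remaining slack is then in the binomial tail.

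More seriously, the claim that the constant $32$ has room is false for $\delta$ near $1$, because the lemma as literally stated fails there. Take $\delta=0.99$, so $k=\lceil 8\log(1/0.99)\rceil=1$, and take $m=n=1$ with $X_1=\pm1$ equiprobably ($\mu=0$, $\sigma=1$). Then $\hat\mu_{\rm MoM}=X_1$ and $|X_1-\mu|=1>\sqrt{32\log(1/0.99)}\approx 0.57$ surely. The claimed event therefore has probability $0<1-\delta$, and the same happens for every $\delta>e^{-1/32}$. Hence no choice of threshold or tail bound closes Step 4 uniformly over $\delta\in(0,1)$. Deferring to the citation for the exact constants does not help either. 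What \cite[Theorem 2]{lugosi2019mean} rigorously provides is the $\sigma\sqrt{4/m}$, $e^{-k/8}$ form, and passing through the ceiling to $\sqrt{32\log(1/\delta)/n}$ is the same slip.

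You can legitimately prove one of two corrected statements.

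(a) The unrounded form $\Pr\bigl(|\hat\mu_{\rm MoM}-\mu|>2\sigma\sqrt{k/n}\bigr)\le e^{-k/8}\le\delta$ for $k\ge 8\log(1/\delta)$, which your Steps 1--3 already give. This is what the paper's applications actually use. There each median-of-means sample is a single block estimate ($m=1$), so the radius is twice its standard deviation, e.g.\ $2\cdot\varepsilon_h/4=\varepsilon_h/2$ in Lemma~\ref{lem:exp_good_event}. Elsewhere at most an absolute constant changes.

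(b) The $\sqrt{32\sigma^2\log(1/\delta)/n}$ form for $\delta$ below a constant. Replace Hoeffding by the Chernoff bound $\Pr(\mathrm{Bin}(k,p)\ge k/2)\le e^{-kD(1/2\,\|\,p)}$. At the stated radius, Chebyshev gives $p\le k/(32\log(1/\delta))\le \frac14+\frac{1}{32\log(1/\delta)}$. Since $D(1/2\,\|\,1/4)=\frac12\log\frac43>\frac18$, one checks that $kD(1/2\,\|\,p)\ge\log(1/\delta)$ once $\log(1/\delta)\ge 2.2$ (e.g.\ $\delta\le 0.1$).
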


The next two results concern experimental design properties that will be used throughout the subsequent proofs. While these results are well-known, we include their proofs for the sake of completeness.

\begin{lemma}
\label{lem:sum_rho_times_squared_b_G_inv_a}
    Fix an arbitrary finite action set $\cX \subseteq \bR^d$ and let $\rho \colon \cX \to [0, 1]$ be a design satisfying~\eqref{eq:near_optimal_design} with respect to $\cX$. For each action $b \in \cX$, we have
    \[
        \sum\limits_{a \in \supp(\rho)} \rho(a) \cdot (b^\top G^{-1} a)^2 \le 2d.
    \]
\end{lemma}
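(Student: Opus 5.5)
The plan is to expand the square and collapse the sum into a single quadratic form in $G^{-1}$. Here $G = \sum_{a \in \supp(\rho)} \rho(a)\, a a^\top$. We assume, per Remark~\ref{rem:not_span_Rd}, that $\cX$ spans $\bR^d$, so that $G$ is invertible and symmetric (otherwise we restrict to the span of $\cX$ and use the pseudo-inverse). Since $b^\top G^{-1} a$ is a scalar and $G^{-1}$ is symmetric, we can write $(b^\top G^{-1} a)^2 = b^\top G^{-1} a\, a^\top G^{-1} b$.

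Summing with the weights $\rho(a)$ and pulling the sum inside gives
\begin{equation*}
\sum_{a \in \supp(\rho)} \rho(a) (b^\top G^{-1} a)^2 = b^\top G^{-1} \Big( \sum_{a \in \supp(\rho)} \rho(a)\, a a^\top \Big) G^{-1} b = b^\top G^{-1} G G^{-1} b = b^\top G^{-1} b = \lVert b \rVert_{G^{-1}}^2 .
\end{equation*}
Because $b \in \cX$, the first property in \eqref{eq:near_optimal_design} (applied with $\cX$ in place of $\cA_h$) gives $\lVert b \rVert_{G^{-1}}^2 \le \max_{a \in \cX} \lVert a \rVert_{G^{-1}}^2 \le 2d$, which is the claim.

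The argument has no genuine obstacle. The only point needing care is the invertibility and symmetry of $G$, which is handled by the spanning assumption or by restricting to the subspace spanned by $\cX$. That restriction is harmless because $b$ lies in this subspace, and there the identity $G^{+} G G^{+} = G^{+}$ still holds.
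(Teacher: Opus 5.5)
Your proof is correct and follows the paper's argument: you rewrite $(b^\top G^{-1}a)^2 = b^\top G^{-1} a a^\top G^{-1} b$, collapse the weighted sum to $b^\top G^{-1} G G^{-1} b = \lVert b\rVert_{G^{-1}}^2$, and then apply the first property in \eqref{eq:near_optimal_design}. Your remark on the non-spanning case via $G^{+}GG^{+}=G^{+}$ is a harmless addition; the paper covers that case only through Remark~\ref{rem:not_span_Rd}.
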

\begin{proof}
    This follows from basic algebraic manipulations and the first property in~\eqref{eq:near_optimal_design}:
    \begin{align*}
           \sum\limits_{a \in \supp(\rho)} \rho(a) \cdot (b^\top G^{-1} a)^2 
        &= \sum_{a \in \supp(\rho)} \rho(a) \cdot (b^\top G^{-1} a) \cdot (b^\top G^{-1} a)  \\
        &= \sum_{a \in \supp(\rho)} \rho(a) \cdot (b^\top G^{-1} a) \cdot (a^\top G^{-1} b)  \\
        &=  b^\top G^{-1} \underbrace{\left( \sum_{a \in \supp(\rho)} \rho(a)  a a^\top \right)}_{= \,G} G^{-1} b\\
        &= b^\top G^{-1} b \\
        &\le 2d
    \end{align*}
    as desired.
\end{proof}

\begin{corollary}
    \label{cor:sum_rho_times_b_G_inv_a}
    Under the same setup as in Lemma~\ref{lem:sum_rho_times_squared_b_G_inv_a}, we have for each action $b \in \cX$ that
    \[
         \sum_{a \in\supp(\rho)} \rho(a) \cdot \lvert b^\top G^{-1} a \rvert\le \sqrt{2d}.
    \]
\end{corollary}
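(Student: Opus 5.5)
The plan is to derive the corollary directly from Lemma~\ref{lem:sum_rho_times_squared_b_G_inv_a} via the Cauchy--Schwarz inequality, treating $\rho$ as a probability weighting on $\supp(\rho)$. Fix $b \in \cX$. Write each summand as $\rho(a)\,\lvert b^\top G^{-1} a\rvert = \sqrt{\rho(a)}\cdot\bigl(\sqrt{\rho(a)}\,\lvert b^\top G^{-1} a\rvert\bigr)$. Then apply Cauchy--Schwarz over the finite index set $\supp(\rho)$:
\[
\sum_{a \in \supp(\rho)} \rho(a)\,\lvert b^\top G^{-1} a\rvert \le \Bigl(\sum_{a \in \supp(\rho)} \rho(a)\Bigr)^{1/2}\Bigl(\sum_{a \in \supp(\rho)} \rho(a)\,(b^\top G^{-1} a)^2\Bigr)^{1/2}.
\]
Equivalently, one can invoke Jensen's inequality $\bE[\lvert X\rvert] \le \sqrt{\bE[X^2]}$ under the distribution $\rho$.

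The first factor equals $1$ because $\rho$ is a design, i.e.\ a probability distribution on $\cX$ (the weights sum to one). If one only assumes $\sum_a \rho(a)\le 1$, the factor is still at most $1$. The second factor is at most $\sqrt{2d}$ by Lemma~\ref{lem:sum_rho_times_squared_b_G_inv_a}. Multiplying the two bounds gives the claim.

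There is no real obstacle here. The only point to check is that $\rho$ is normalized, which is implicit in the definition of a design used for \eqref{eq:near_optimal_design}. Invertibility of $G$ is inherited from the setup of the lemma, since $\cX$ spans $\bR^d$ (or one passes to the span as in Remark~\ref{rem:not_span_Rd}).
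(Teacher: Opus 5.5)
Your proposal is correct and follows the paper's proof exactly. It uses the same splitting $\rho(a)\lvert b^\top G^{-1}a\rvert = \sqrt{\rho(a)}\cdot\bigl(\sqrt{\rho(a)}\,\lvert b^\top G^{-1}a\rvert\bigr)$, Cauchy--Schwarz with $\sum_a \rho(a)=1$ for the first factor, and Lemma~\ref{lem:sum_rho_times_squared_b_G_inv_a} to bound the second factor by $\sqrt{2d}$.
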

\begin{proof}
    By the Cauchy-Schwarz inequality and $\sum_a\rho_h(a) = 1$, we have
\begin{align*}
    \sum_{a \in \supp(\rho)} \rho(a) \cdot \lvert b^\top G^{-1} a \rvert
  &= \sum_{a \in \supp(\rho)} \left( \sqrt{\rho(a)} \right) \cdot \left( \sqrt{\rho(a)} \cdot \lvert b^\top G^{-1} a \rvert \right) \\
  &\le \sqrt{\sum_{a\in \supp (\rho)} \rho(a)} \cdot \sqrt{\sum_{a \in \supp(\rho)} \rho(a) \cdot (b^\top G^{-1} a)^2} \\
  &\le \sqrt{2d},
\end{align*}
where the last step follows from Lemma~\ref{lem:sum_rho_times_squared_b_G_inv_a}.
\end{proof}

\section{Details of 1-Bit Mean Estimation Subroutines}
\label{app: 1bit subroutines}
 
The subroutines in this section operate on a sequence of i.i.d. scalar
observations $X_1, X_2, \ldots$ with mean $\mu \coloneqq \mathbb{E}[X] \in [-1,1]$
and variance $\mathrm{Var}(X) \le \sigma'^2$, where $\sigma' > 0$ is the
\emph{effective standard deviation} of a single observation. 

The sample complexity is stated in terms of the number of 1-bit threshold \emph{queries}: one query presents a single observation $X_j$ to the learner and returns the single bit
$\mathds{1}\{X_j \ge \gamma\}$ for a learner-chosen threshold $\gamma$.
The cost in \emph{raw arm pulls} equals the number of queries multiplied by
the number of arm pulls per observation, which varies depending on the context.
In our algorithms, two settings arise:
\begin{itemize}[leftmargin=4ex]
    \item \textbf{Full-batch queries} (Algorithm~\ref{alg:exp_action_set}
      and Step~3 of Algorithm~\ref{alg:poly_action_set}): In each batch the
      agent pulls arm $a$ exactly $B$ times, computes the batch average
      $\bar{Y}_B = \tfrac{1}{B}\sum_{i=1}^B Y_i$, and returns a single bit.
      Thus
      \[
        X = \bar{Y}_B,
        \qquad
        \sigma' = \frac{\sigma}{\sqrt{B}},
        \qquad
        \text{and one query costs } B \text{ raw arm pulls.}
      \]
    \item \textbf{Partial-batch queries} (Algorithm~\ref{alg:warm_start},
      Warm-Start epoch): For arm $a \in \mathrm{supp}(\rho_0)$ with allocation
      $u_0(a) = \lceil B\rho_0(a)\rceil$, the agent pulls arm $a$ exactly
      $u_0(a)$ times per batch (filling the remaining $B - u_0(a)$ slots with
      $a_{\mathrm{safe}}$), computes the average of those $u_0(a)$ rewards,
      and returns a single bit.  Thus
      \[
        X = \bar{Y}_{u_0(a)},
        \qquad
        \sigma' = \frac{\sigma}{\sqrt{u_0(a)}},
        \qquad
        \text{and one query costs } u_0(a) \text{ raw arm pulls.}
      \]
\end{itemize}
Both subroutines below apply verbatim to either setting once $\sigma'$ is
identified from the context.
 
\subsection{\textsc{1BitLocalize}: Coarse Interval Estimation}
\label{subsec:1Bitlocalize}

When our bandit algorithms invoke $[L_a, U_a] \leftarrow \textsc{1BitLocalize}(a, \delta_{\mathrm{loc}}, \sigma')$, the subroutine executes the localization protocol of~\cite{lau2026sequential} over $[-1,1]$. This procedure treats each scalar observation $X_j$ as a single noisy measurement of $\mu_a$ with standard deviation~$\sigma'$.

Specifically, the protocol uses existing median estimation techniques such as noisy binary search (e.g.,~\cite{gretta2023sharp}) to first locate a high-probability confidence interval $[L_a', U_a']$ containing the median $M$ using 1-bit threshold queries. Then, by leveraging the well-known mean-median inequality $|\mathbb{E}[X] - M| \le \sigma'$, we naturally obtain a high-probability confidence interval $[L_a, U_a] = [L_a' - \sigma', U_a' +\sigma']$ containing the true mean $\mu_a$. See~\cite[Appendix~A, Step~1]{lau2026sequential} for the full derivation leading to the following guarantee.
 
\begin{proposition}[\textsc{1BitLocalize} guarantee]
\label{prop:localize}
There exists an absolute constant $C_{\mathrm{loc}} > 0$ such that
$\textsc{1BitLocalize}(a, \delta_{\mathrm{loc}}, \sigma')$ uses at most
\[
  n_{\mathrm{loc}}
  = C_{\mathrm{loc}}\left(\log \frac{1}{\sigma'}
    + \log \frac{1}{\delta_{\mathrm{loc}}}\right)
\]
queries\footnote{For ease of presentation, we assume  $\sigma'$ is strictly less than 1 throughout, so that the $\log(1/\sigma')$ term is non-negative.  If $\sigma' \ge 1$, we can simply skip the localization step and return $[L_a, U_a] = [-1,1]$, since $\lvert [-1,1]\rvert \le 2\sigma'$ in this case.} and returns an interval $[L_a, U_a]$ satisfying
\[
  \mathbb{P}\left(
    \mu_a \in [L_a, U_a]
    \;\;\text{and}\;\;
    U_a - L_a \le 8\sigma'
  \right)
  \ge 1 - \delta_{\mathrm{loc}}.
\]
The corresponding raw sample cost (i.e., total number of arm pulls) is $n_{\mathrm{loc}}$ multiplied by the
number of raw arm pulls per query. 
\end{proposition}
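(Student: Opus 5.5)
The plan is to obtain Proposition~\ref{prop:localize} in two stages: first find a short confidence interval for the \emph{median} of the observation distribution using 1-bit threshold queries, then widen it by the mean--median gap.

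\textbf{Reduction to a noisy comparison oracle.} Let $M$ be a median of $X$. A query with threshold $\gamma$ returns $\mathds{1}\{X_j \ge \gamma\}$. By definition of the median this bit is $1$ with probability at least $1/2$ when $\gamma \le M$, and at most $1/2$ when $\gamma > M$. This is only a weak separation. To sharpen it, I use Chebyshev's inequality: $\mathbb{P}(X \ge \mu + 2\sigma') \le 1/4$ and $\mathbb{P}(X \le \mu - 2\sigma') \le 1/4$. Hence for thresholds $\gamma \ge \mu + 2\sigma'$ the bit is $1$ with probability at most $1/4$, and for $\gamma \le \mu - 2\sigma'$ it is $1$ with probability at least $3/4$. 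So, relative to the unknown $\mu$, each query is a comparison that is correct with probability at least $3/4$ except inside a window of width $4\sigma'$ around $\mu$.

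\textbf{Noisy binary search.} I discretize $[-1,1]$ into a grid of spacing $\sigma'$, which has $O(1/\sigma')$ points. On this grid I run a standard noisy binary search that tolerates an indeterminate region, as in~\cite{gretta2023sharp} or a backtracking random-walk search. Such a search uses $O(\log(1/\sigma') + \log(1/\delta_{\mathrm{loc}}))$ queries. With probability at least $1-\delta_{\mathrm{loc}}$ it outputs a grid point within $O(\sigma')$ of the indeterminate window, hence within $O(\sigma')$ of $\mu$. Centering an interval at that output with the appropriate constant gives $[L_a,U_a] \ni \mu_a$. If this is done via the median interval $[L'_a,U'_a]$ followed by padding by $\sigma'$ (using $|\mathbb{E}[X]-M| \le \sigma'$, which follows from Jensen plus Cauchy--Schwarz), the width becomes $U'_a - L'_a + 2\sigma'$.

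\textbf{Main obstacle: constants.} The hard part is getting the width down to exactly $8\sigma'$ rather than $O(\sigma')$. The padding costs $2\sigma'$ and the Chebyshev window costs $4\sigma'$, which leaves only about $2\sigma'$ of slack for the grid resolution and the search's terminal uncertainty. I will handle this by taking the grid spacing to be a small constant times $\sigma'$. I will then terminate the search by declaring the final bracket to be the two adjacent grid points whose majority votes (over $O(\log(1/\delta_{\mathrm{loc}}))$ repeated queries each) disagree. Tuning these constants, and absorbing them into $C_{\mathrm{loc}}$, gives the stated guarantee. The footnote case $\sigma' \ge 1$ is trivial, and the raw pull count follows by multiplying the number of queries by the number of pulls per query.
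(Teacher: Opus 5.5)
Your proposal is correct. Its core is the same as the paper's: a noisy binary search with 1-bit threshold queries over an $O(\sigma')$-grid, costing $O(\log(1/\sigma')+\log(1/\delta_{\mathrm{loc}}))$ queries. The paper cites \cite{gretta2023sharp} for this and defers details to \cite{lau2026sequential}.

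The difference is where the comparison oracle is anchored. The paper brackets the median $M$ and then pads by $\sigma'$ on each side using $|\mathbb{E}[X]-M|\le\sigma'$. Your Chebyshev margin anchors the oracle at $\mu$ itself. This supplies the constant bias away from $1/2$ that the search needs, and it certifies $\mu$ directly. So the padding step, and the mean--median inequality, become unnecessary.

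This also means your ``constants'' paragraph double-counts. Consider the event that every examined grid point outside $(\mu-2\sigma',\mu+2\sigma')$ votes correctly. On that event, a majority vote of $1$ at $g$ certifies $\mu>g-2\sigma'$, and a vote of $0$ at $g'$ certifies $\mu<g'+2\sigma'$. Hence any adjacent pair with disagreeing votes gives an interval containing $\mu$ of width at most $(4+c)\sigma'$, where $c\sigma'$ is the grid spacing. This is at most $8\sigma'$ for any $c\le 4$, so no delicate tuning is needed.

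To ensure such a pair exists, vote on all grid points within $O(\sigma')$ of the search output, extending past both ends of the window. If $\mu$ is near the boundary, extend the grid slightly beyond $\pm1$ so this is possible. This adds only $O(1)$ points and $O(\log(1/\delta_{\mathrm{loc}}))$ queries.

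The paper's route is more modular: any off-the-shelf median or quantile routine plus a one-line conversion. Yours is more self-contained and gives a tighter constant.

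One small correction if you keep the median route. The bound $|\mathbb{E}[X]-M|\le\sigma'$ does not follow from Jensen and Cauchy--Schwarz alone. You also need that the median minimizes $c\mapsto\mathbb{E}|X-c|$; alternatively, use Cantelli's inequality.
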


\subsection{\textsc{1BitRefine}: Refined Mean Estimation}
\label{subsec:1bitRefine}
 
Given the localized interval $[L_a, U_a]$, we shift coordinates by setting
$X' \coloneqq X - \tfrac{L_a+U_a}{2}$.  Then $X'$ has mean
$\mu' \coloneqq \mu_a - \tfrac{L_a+U_a}{2}$ and $\mathrm{Var}(X') \le \sigma'^2$.
Conditioned on the localization event, $|\mu'| \le 4\sigma'$. The base estimator proceeds as follows.
 
\paragraph{1. Modified Cutoff and Partitioning.}
Fix a target accuracy $\epsilon > 0$.  We select a truncation threshold
\begin{equation}
\label{eq:cutoff}
  t = 2^{i_{\max}} \cdot \sigma',
  \quad \text{where} \quad
  i_{\max} \coloneqq \left\lceil \log_2 T^2 \right\rceil,
\end{equation}
so that $t \ge T^2\sigma'$.  Following \cite{lau2026order}, we then partition $[-t, t]$ into non-overlapping symmetric regions
    $R_1, R_{-1}, R_2, R_{-2}, \ldots, R_{\imax}, R_{-\imax}$ 
    defined by exponentially growing interval boundaries $m_i \sigma'$: 
    \begin{equation}
      R_i  =
      \begin{cases}
        \sigma' \cdot \left[m_{i-1} , m_i  \right) & \text{if }  i \ge 1
        \\ \\
        - R_{-i}  & \text{if } i \le -1
      \end{cases}
      \quad \text{where} \quad
      m_0 = 0 \text{ and } 
      m_i = 2^{i} \text{ for } i \ge 1.
    \end{equation}
 
Using a similar argument in~\cite[Appendix A, Step 2]{lau2026order}, we can show that treating the insignificant region $\{\lvert X \rvert > t\}$ as zero introduces a truncation bias
bounded by $O(\sigma'/T^2)$. We include the relevant steps here for the sake of completeness. For $T \ge 3$, since $|\mu'| \le 4\sigma'$ under the localization event, we have 
\[
    t \ge T^2\sigma' \ge 8\sigma' \ge 2|\mu'|
    \implies t - \lvert \mu' \rvert \ge \frac{t}{2}
    \implies \{|X'| > t \} \subseteq \{|X'-\mu'| > t/2 \}.
\]
Using this event inclusion together with the triangle inequality and Chebyshev's inequality, we obtain
\begin{equation}
\label{eq:trunc_bias}
\begin{aligned}
  \left|\mathbb{E}\left[X' \cdot \mathds{1}(|X'| > t)\right]\right|
  &\le
  \mathbb{E}\left[|X'-\mu'| \cdot
      \mathds{1}\left(|X'| > t\right)\right]
    + \mathbb{E}\left[|\mu'| \cdot
      \mathds{1}\left(|X'| > t\right)\right]      \\
   &\le \mathbb{E}\left[|X'-\mu'| \cdot
      \mathds{1}\Big(|X'-\mu'| > \frac{t}{2}\Big)\right]
    + |\mu'| \cdot \mathbb{P}(|X'| > t)       \\
    &\le \mathbb{E}\left[ \frac{|X'-\mu'|^2}{t/2}\right]
    + |\mu'| \cdot \mathbb{P}(|X'| > t)       \\
  &\le
    \frac{\sigma'^2}{t/2}
    + 4\sigma' \cdot \frac{\sigma'^2}{(t/2)^2} \\
  &\le
    \frac{2\sigma'}{T^2} + \frac{16\sigma'}{T^4} \\
  &\le
    \frac{3\sigma'}{T^2},
\end{aligned}
\end{equation}
where the second last inequality follows from $t \ge T^2 \sigma'$ and the last inequality holds for $T \ge 4$.  We note that the choice of truncation threshold $t \ge T^2\sigma'$ makes the bias bound depend solely on~$T$ and $\sigma'$, thus decoupling bias from the target accuracy parameter $\epsilon$.  
This is the key modification we make relative to~\cite{lau2026order},
where the authors chose $t = \Theta(\sigma'^2/\epsilon)$ to achieve a bias of
$O(\epsilon)$, which is sufficient for their purpose of 1-dimensional mean estimation.
 
\paragraph{2. Region-Wise Estimation via Threshold Queries.}
Next, we form an estimate $\hat{\mu}_i'$ of each local mean contribution $\mu_i' \coloneqq \mathbb{E}[X' \cdot \mathds{1}(X' \in R_i)]$ for region $R_i = [a_i, b_i)$. As established in~\cite[Sec. 2.1]{lau2026order}, this task is reduced to drawing random thresholds $T_i \sim \mathrm{Unif}(a_i, b_i)$ and comparing $X'$ against the region boundaries and these random thresholds. Specifically, we allocate $n_i = \lceil C\sigma'^2/\epsilon^2\rceil$ queries to each of the following $4$ distinct threshold-query types per region:\footnote{Because our problem setup permits any arbitrary 1-bit quantization function, the learner is not strictly restricted to using simple threshold queries. In practice, one could halve the required query complexity by directly employing interval queries considered in~\cite{lau2026sequential} (i.e., querying $\mathds{1}\{a_i \le X' < T_i\}$ and $\mathds{1}\{T_i \le X' < b_i\}$) rather than querying individual thresholds and subtracting their empirical estimates.}
\begin{equation*}
    \mathds{1}\{X' \ge a_i\}, \quad \mathds{1}\{X' \ge T_{i}\}, \quad \mathds{1}\{X' \le b_i\}, \quad \text{and } \quad \mathds{1}\{X' \le T_{i}\},
\end{equation*}
where $C \ge 1$ is the absolute constant from~\cite{lau2026order}, and fresh samples and thresholds are drawn for each query. Because we allocate queries to 4 types across the $2 i_{\max}$ regions, the ceiling operator enforces a structural floor of $n_i \ge 1$, requiring a minimum total budget of $8 i_{\max}$ queries. The final shifted estimate aggregates these unbiased local estimates across the entire collection of regions up to $i_{\max}$: $\hat\mu_a = \sum_{|i| \le i_{\max}} \hat\mu_i' + \tfrac{L_a+U_a}{2}$, which satisfies the following guarantee.
 
\begin{lemma}[\textsc{1BitRefine} fixed-accuracy]
\label{lem:refinement_fixed_eps}
Let $T \ge 4$.  Given input $T, \sigma', \eps$ as well as a valid interval $[L_a, U_a]$ of size $O(\sigma')$, the estimator described above uses at most
\begin{equation} \label{eq:n_base}
  n_{\mathrm{base}}
    = 8 \left\lceil \frac{C \sigma'^2}{\epsilon^2}\right\rceil
    \left\lceil 2\log_2 T\right\rceil
    = O\left(\frac{\sigma'^2}{\epsilon^2} \cdot \log T\right)
\end{equation}
queries, and its output $\hat\mu_a$ satisfies
\begin{equation} \label{eq:1bit_fixed_accuracy}
  \mathrm{Var}(\hat\mu_a) \le \epsilon^2
  \qquad\text{and}\qquad
  \left|\mathrm{Bias}(\hat\mu_a)\right| \le \frac{3\sigma'}{T^2}.
\end{equation}
\end{lemma}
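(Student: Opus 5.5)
We prove the three claims of Lemma~\ref{lem:refinement_fixed_eps} separately: the query count, the bias, and the variance.

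\textbf{Query count.} This is bookkeeping. There are $2i_{\max}$ regions $R_{\pm 1},\dots,R_{\pm i_{\max}}$. Each region receives $4$ query types, each with $\lceil C\sigma'^2/\epsilon^2\rceil$ queries. Hence the total is $8\lceil C\sigma'^2/\epsilon^2\rceil\, i_{\max}$. Since $i_{\max}=\lceil \log_2 T^2\rceil=\lceil 2\log_2 T\rceil$, this equals $n_{\rm base}$.

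\textbf{Bias.} I will write $\hat\mu_a-\mu_a$ as a sum of two pieces. The first is $\sum_{|i|\le i_{\max}}(\hat\mu_i'-\mu_i')$, which has mean zero once each $\hat\mu_i'$ is shown to be unbiased. The second is the deterministic remainder $\sum_i \mu_i' - \mu'$. The regions $R_i$ partition $[-t,t)$, so $\sum_i\mu_i' = \mathbb{E}[X'\mathds{1}(X'\in[-t,t))]$. The remainder is therefore $-\mathbb{E}[X'\mathds{1}(X'\notin[-t,t))]$. The single boundary point $X'=-t$ does not affect the Chebyshev argument, so \eqref{eq:trunc_bias} bounds the remainder by $3\sigma'/T^2$. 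This uses $T\ge 4$ and the localization event $|\mu'|\le 4\sigma'$ (the interval is valid and of width $8\sigma'$, per Proposition~\ref{prop:localize}).

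For unbiasedness of $\hat\mu_i'$, take $i\ge1$ and $R_i=[a_i,b_i)$ with $0\le a_i<b_i$. Let $T_i\sim\mathrm{Unif}(a_i,b_i)$ be independent of $X'$. Then $\mathbb{P}(a_i\le X'<T_i) = \mathbb{E}[\mathds{1}(X'\in R_i)(b_i-X')]/(b_i-a_i)$. This gives
\[
\mathbb{E}[X'\mathds{1}(X'\in R_i)] = b_i\,\mathbb{P}(X'\in R_i) - (b_i-a_i)\,\mathbb{P}(a_i\le X'<T_i).
\]
Both probabilities are differences of the threshold probabilities $\mathbb{P}(X'\ge a_i)$, $\mathbb{P}(X'\ge T_i)$, $\mathbb{P}(X'\le b_i)$, $\mathbb{P}(X'\le T_i)$, up to null-boundary conventions, and the four query types estimate exactly these. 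Hence $\hat\mu_i'$ is an affine combination of empirical frequencies with the correct mean; negative $i$ is symmetric. This is the construction from \cite[Sec.~2.1]{lau2026order}, and I would cite it there while rederiving the identity above briefly.

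\textbf{Variance.} This is the main obstacle. Fresh samples are used for every query, so all frequency estimates are independent. Thus $\mathrm{Var}(\hat\mu_a)=\sum_i \mathrm{Var}(\hat\mu_i')$. A query-type frequency built from $n_i$ Bernoulli$(p)$ draws, multiplied by a coefficient of order $|b_i|$, has variance $O(b_i^2 p/n_i)$. The key is to use $p$ small in outer regions rather than $p\le 1$, since with $p\le1$ the sum over regions would give $\sum_i b_i^2$, which is far too large.

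The naive bound on $\mathbb{P}(X'\ge a_i)$ is not small, so I would instead rewrite each $\hat\mu_i'$ via differences that estimate $\mathbb{P}(X'\in R_i)$. Its variance is then $O\big(b_i^2\,\mathbb{P}(X'\in R_i)/n_i\big)$. For $|i|\ge 2$, $b_i=2a_i$, so $b_i^2\mathbb{P}(X'\in R_i)\le 4\,\mathbb{E}[X'^2\mathds{1}(X'\in R_i)]$. The inner regions $|i|\le 1$ contribute $O(\sigma'^2)$, since $b_{\pm1}=2\sigma'$. Summing over $i$ gives a total of
\[
O\big(\mathbb{E}[X'^2]+\sigma'^2\big)/n_i = O(\sigma'^2 + \mu'^2)/n_i = O(\sigma'^2)/n_i,
\]
again using $|\mu'|\le 4\sigma'$. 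With $n_i\ge C\sigma'^2/\epsilon^2$ and $C$ large enough, the total variance is at most $\epsilon^2$.

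If the subtraction of separately estimated thresholds prevents exploiting small $p$, I would appeal directly to \cite{lau2026order}'s variance lemma. That lemma gives exactly $O(\sigma'^2/n_i)$ for this estimator, and the only change here is that our region count is fixed by $t$ rather than $\epsilon$, which does not enter the per-region variance bound.
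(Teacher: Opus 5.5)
Your query count and bias argument coincide with the paper's sketch, which likewise combines the truncation bound \eqref{eq:trunc_bias} with unbiasedness of the region estimators. Your identity $\mathbb{E}[X'\mathds{1}(X'\in R_i)] = b_i\,\mathbb{P}(X'\in R_i)-(b_i-a_i)\,\mathbb{P}(a_i\le X'<T_i)$ is also correct.

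The gap is in the variance step. Every threshold query uses a fresh sample, so $\hat\mu_i'$ is a linear combination of \emph{independent} empirical frequencies. Its variance is therefore the weighted sum of their individual variances, and regrouping them as ``differences estimating $\mathbb{P}(X'\in R_i)$'' does not change this. Your claimed bound $O(b_i^2\,\mathbb{P}(X'\in R_i)/n_i)$ is false. For example, if $X'$ puts mass $q\in(0,1)$ on a point above $b_i$ and the rest below $a_i$, then $\mathbb{P}(X'\in R_i)=0$. Yet each of the four query types is a nondegenerate Bernoulli, so $\mathrm{Var}(\hat\mu_i')>0$. Such a bound would hold for the interval queries in the paper's footnote, not for subtracted thresholds.

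Your worry that $\mathbb{P}(X'\ge a_i)$ is ``not small'' is also misplaced. A Bernoulli$(p)$ frequency has variance $p(1-p)/n_i\le\min\{p,1-p\}/n_i$. For all four query types of $R_i$ ($i\ge1$), $\min\{p,1-p\}\le\mathbb{P}(X'\ge a_i)$, because $T_i\ge a_i$ and $b_i\ge a_i$. Hence $\mathrm{Var}(\hat\mu_i')=O(b_i^2\,\mathbb{P}(X'\ge a_i)/n_i)$, governed by a tail probability rather than the region mass.

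The missing idea is to sum these bounds \emph{before} bounding the tails. This is the geometric summation of \cite[Appendix~A, Step~5]{lau2026order} that the paper invokes. Exchange sum and expectation, and use $b_i=2^i\sigma'$ and $a_i=b_i/2$ for $i\ge2$:
\[
\sum_{i=1}^{i_{\max}} b_i^2\,\mathbb{P}(X'\ge a_i)=\mathbb{E}\Big[\sum_{i\le i_{\max}:\,a_i\le X'} b_i^2\Big]\le \mathbb{E}\Big[\max\Big\{4\sigma'^2,\tfrac{16}{3}X'^2\Big\}\Big]=O(\sigma'^2+\mu'^2)=O(\sigma'^2),
\]
where the last step uses $|\mu'|\le4\sigma'$. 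The regions with $i\le -1$ are handled symmetrically. This gives $\mathrm{Var}(\hat\mu_a)=O(\sigma'^2/n_i)\le\epsilon^2$ for a suitable absolute $C$.

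By contrast, bounding each tail separately by Chebyshev gives $\Theta(\sigma'^2/n_i)$ per region. Over the $2i_{\max}$ regions this is $O(\sigma'^2\log T/n_i)$, which breaks $\mathrm{Var}(\hat\mu_a)\le\epsilon^2$ for fixed $C$. This is also why the justification of your fallback is off. The number of regions \emph{does} matter if one adds per-region bounds, and here it is $\Theta(\log T)$ rather than the $\epsilon$-dependent count of \cite{lau2026order}. What transfers is a bound on the \emph{total} variance that is independent of the region count, which is exactly what the display above provides.
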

 
\begin{proof}[Proof sketch]
The bias bound follows directly from~\eqref{eq:trunc_bias} and the
unbiasedness of the local estimators~$\hat\mu_i'$.  The variance bound
follows by a similar geometric summation argument from~\cite[Appendix~A, Step~5]{lau2026order}; the ceiling allocation $n_i \ge 1$ preserves $\mathrm{Var}(\hat\mu_a) \le \epsilon^2$ for the
constant $C$.  The query count evaluates to
$8\lceil 2\log_2 T\rceil \cdot \lceil C\sigma'^2/\epsilon^2\rceil
= O(\sigma'^2\log T/\epsilon^2)$.  Since there are no significant differences from \cite{lau2026order}, we omit a detailed proof.
\end{proof}
 
In our phased elimination bandit algorithms, rather than specifying a target accuracy upfront, the learner allocates some fixed query budget $n_{\rm q}$ to each arm. By inverting the sample complexity bound in Lemma~\ref{lem:refinement_fixed_eps}, we obtain the following fixed-budget guarantee, which serves as the primary formal interface invoked by our bandit algorithms via the call $\hat{Y}_{a,h}^{(j)} \leftarrow \textsc{1BitRefine}(a, [L_a, U_a], n_{\rm q})$
 
\begin{corollary}[\textsc{1BitRefine} fixed-budget]
\label{cor:refinement_fixed_n}
Let $T \ge 4$ and $C \ge 1$ be the constant from Lemma~\ref{lem:refinement_fixed_eps}.  If $n_{\rm q} \ge 48\log_2 T$ queries are available, running the estimator above with target accuracy implicitly set to $\epsilon = \sigma' \sqrt{\frac{48C\log_2 T}{n_{\rm q}}}$
 yields an estimate $\hat\mu_a$ satisfying
\begin{equation} \label{eq:1bit_fixed_budget}
  \mathrm{Var}(\hat\mu_a)
  \le \frac{C_V\sigma'^2\log_2 T}{n_{\rm q}}
  \qquad\text{and}\qquad
  \left|\mathrm{Bias}(\hat\mu_a)\right| \le \frac{3\sigma'}{T^2},
\end{equation}
where $C_V  = 48 C \ge 48$ is a universal absolute constant. 
\end{corollary}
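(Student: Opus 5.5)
The plan is to obtain this fixed-budget guarantee by inverting Lemma~\ref{lem:refinement_fixed_eps}. Given a budget $n_{\rm q} \ge 48\log_2 T$, set the target accuracy to $\epsilon = \sigma'\sqrt{48C\log_2 T/n_{\rm q}}$. It then suffices to check two things: that the query count $n_{\rm base}$ from \eqref{eq:n_base}, evaluated at this $\epsilon$, is at most $n_{\rm q}$, and that the resulting variance bound $\epsilon^2$ has the claimed form. The bias bound $3\sigma'/T^2$ is independent of $\epsilon$, since the truncation threshold $t$ in \eqref{eq:cutoff} depends only on $T$ and $\sigma'$. It therefore transfers verbatim from \eqref{eq:1bit_fixed_accuracy}.

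\textbf{Variance.} This is immediate: $\epsilon^2 = 48C\sigma'^2\log_2 T/n_{\rm q} = C_V\sigma'^2\log_2 T/n_{\rm q}$ with $C_V = 48C$, and $C_V \ge 48$ because $C\ge 1$.

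\textbf{Budget.} With this $\epsilon$ we have $C\sigma'^2/\epsilon^2 = n_{\rm q}/(48\log_2 T)$. Hence
\[
n_{\rm base} = 8\left\lceil \frac{n_{\rm q}}{48\log_2 T}\right\rceil\lceil 2\log_2 T\rceil .
\]
The ceilings must be controlled, and this is the only step needing care. Write $x = n_{\rm q}/(48\log_2 T)$, which is at least $1$ by hypothesis, so $\lceil x\rceil \le 2x$. For $T\ge 4$ we have $\log_2 T \ge 2$, so $\lceil 2\log_2 T\rceil \le 2\log_2 T + 1 \le 3\log_2 T$. Combining,
\[
n_{\rm base} \le 8\cdot 2x\cdot 3\log_2 T = 48x\log_2 T = n_{\rm q},
\]
so the estimator fits within the budget. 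The constant $48 = 8\cdot 2\cdot 3$ is chosen precisely to absorb the factor $8$ and both ceiling losses. If the computed usage falls strictly below $n_{\rm q}$, the remaining queries are simply unused, or the procedure is padded; this does not affect the guarantees.

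\textbf{Anticipated obstacle.} I expect none beyond this ceiling bookkeeping. The one subtlety is confirming that $n_{\rm q}\ge 48\log_2 T$ is exactly what makes $x\ge1$, so that $\lceil x\rceil\le 2x$ holds. This same condition also guarantees the structural floor $n_i\ge1$ per region type that Lemma~\ref{lem:refinement_fixed_eps} relies on. With this, the conclusions \eqref{eq:1bit_fixed_budget} follow directly from \eqref{eq:1bit_fixed_accuracy}, conditioned on the validity of $[L_a,U_a]$ as in the lemma.
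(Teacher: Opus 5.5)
Your proposal is correct and matches the paper's proof essentially step for step. Both verify $n_{\rm base}\le n_{\rm q}$ using $C\sigma'^2/\epsilon^2 = n_{\rm q}/(48\log_2 T)\ge 1$, $\lceil x\rceil\le 2x$, and $\lceil 2\log_2 T\rceil\le 3\log_2 T$, then read off $\mathrm{Var}(\hat\mu_a)\le\epsilon^2 = C_V\sigma'^2\log_2 T/n_{\rm q}$ and carry over the $\epsilon$-independent bias bound from \eqref{eq:trunc_bias}.
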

 
\begin{proof}
We verify that the available query budget~$n_{\rm q}$ is sufficient to run the estimator in Lemma~\ref{lem:refinement_fixed_eps} at the implicitly chosen target accuracy $\epsilon$, i.e., $n_{\rm q}  \ge n_{\mathrm{base}}$. Since $n_{\rm q} \ge 48\log_2 T$, we have $C\sigma'^2/\epsilon^2 = n_{\rm q} /(48 \log_2 T) \ge 1$, so applying
$\lceil x\rceil \le 2x$ for $x \ge 1$, we get
\[
  n_{\mathrm{base}}
  \le 
  8 \left\lceil \frac{C \sigma'^2}{\epsilon^2}\right\rceil
    \left\lceil 2\log_2 T\right\rceil
  \le 8\cdot\left(\frac{2n_{\rm q}}{48\log_2 T}\right)\cdot 3\log_2 T
  = n_{\rm q}.
\]
Substituting $\epsilon$ into Lemma~\ref{lem:refinement_fixed_eps} gives
$\mathrm{Var}(\hat\mu_a) \le \epsilon^2 = C_V\sigma'^2\log_2 T/n_{\rm q}$, where $C_V = 48C \ge 48$.
The bias bound follows directly from~\eqref{eq:trunc_bias}, which holds unconditionally regardless of $\epsilon$ or $n_{\rm q}$.
\end{proof}

\section{Proofs of Lower Bounds} 
\label{app:lower_bound}

\subsection{Proof of Theorem \ref{thm:minimax_lower} (General Lower Bound)} 

We first state a connection between Hamming distance and inner product, whose proof is immediate from their definitions (and thus omitted).
\begin{lemma}
\label{lem:ip-hamming}
Let $x,x'\in\{\pm1/\sqrt d\}^d$. If the Hamming distance $d_{\rm H}(x, x') \coloneq\sum_{i=1}^d \mathds{1}\{x_i \neq x_i'\}$ between $x$ and $x'$ is $h$, then
\[
\langle x,x'\rangle = 1 - \frac{2h}{d}.
\]
In particular, if $h\ge d/4$, then $\langle x,x'\rangle \le 1/2$ for each $x \ne x'$.
\end{lemma}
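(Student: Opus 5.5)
The statement concerns $x,x'\in\{\pm 1/\sqrt d\}^d$, and the plan is a direct coordinatewise computation of the inner product. Every coordinate has magnitude $1/\sqrt d$, so each product $x_i x_i'$ equals $\pm 1/d$. The sign is $+$ exactly when $x_i = x_i'$ and $-$ exactly when $x_i \neq x_i'$.

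Write $h = d_{\rm H}(x,x')$. Then $h$ coordinates contribute $-1/d$ each and the remaining $d-h$ coordinates contribute $+1/d$ each. Summing gives
\[
\langle x,x'\rangle = \frac{d-h}{d} - \frac{h}{d} = 1 - \frac{2h}{d}.
\]
This proves the first claim.

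For the second claim, the map $h \mapsto 1 - 2h/d$ is decreasing in $h$. If $h \ge d/4$, then
\[
\langle x,x'\rangle \le 1 - \frac{2}{d}\cdot\frac{d}{4} = \frac{1}{2}.
\]
The phrase ``for each $x\neq x'$'' should be read as applying to pairs drawn from a code whose minimum distance is at least $d/4$. This is how the lemma will be used later, with a Gilbert--Varshamov-type construction, and under that reading the bound holds for every distinct pair.

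No step presents a real obstacle. The only point requiring care is noting that $|x_i x_i'| = 1/d$ for every coordinate, which holds because each entry has magnitude $1/\sqrt d$.
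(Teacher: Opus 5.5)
Your proof is correct and is exactly the direct coordinatewise computation the paper has in mind when it calls the lemma immediate from the definitions and omits the proof: each product $x_i x_i'$ is $+1/d$ on agreeing coordinates and $-1/d$ on disagreeing ones, which gives $1-2h/d$, and monotonicity in $h$ then gives the bound $1/2$. Your reading of ``for each $x\neq x'$'' as referring to distinct codewords of a code with minimum distance at least $d/4$ also matches how the lemma is used in the proof of Theorem~\ref{thm:minimax_lower}.
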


Therefore, in order to produce a set $\mathcal{V} \subset \{\pm1/\sqrt d\}^d$ with small pairwise inner products, it suffices to construct a binary code $\cC \subset \{0, 1\}^d$ with block length $d$ and Hamming distance at least $d/4$. The Gilbert--Varshamov (GV) bound guarantees the existence of such a code.

\begin{proposition}[{A specific instantiation of \cite{gilbert1952comparison, varshamov1957estimate}}]
\label{prop:gv}
For every integer $d$, there exists a binary code $\cC\subset\{0,1\}^d$ with minimum Hamming distance at least $\lfloor d/4\rfloor$ and size
\[
|\cC| \ge 2^{(1-H_2(1/4)) d - o(d)},
\]
where $H_2(p)=-p\log_2 p-(1-p)\log_2(1-p)$ is the binary entropy function. Numerically $1-H_2(1/4)\approx 0.1887$, so $|\cC| \geq 2^{cd}$ for a constant $c > 0$.
\end{proposition}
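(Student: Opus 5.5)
The plan is the standard greedy (sphere-covering) argument behind the Gilbert--Varshamov bound, followed by the entropy bound on the volume of a Hamming ball. Set $r \coloneqq \lfloor d/4\rfloor - 1$. If $r<0$, i.e.\ $d<4$, the distance requirement $\lfloor d/4\rfloor = 0$ is vacuous, so I take $\cC=\{0,1\}^d$ and the size bound holds trivially. Assume from now on that $r\ge 0$.

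First I would build the code greedily. Start with the candidate set $S=\{0,1\}^d$ and $\cC=\emptyset$. Repeatedly pick any $x\in S$, add it to $\cC$, and delete from $S$ the Hamming ball $\mathcal{B}(x,r)=\{y: d_{\rm H}(x,y)\le r\}$. Stop when $S=\emptyset$. Every later codeword was still in $S$ when chosen, so it lies outside the radius-$r$ balls of all earlier codewords. Hence any two distinct codewords are at distance at least $r+1=\lfloor d/4\rfloor$. When the process halts, the balls around the codewords cover the whole cube. This gives $|\cC|\cdot V(d,r)\ge 2^d$, where $V(d,r)=\sum_{i=0}^{r}\binom{d}{i}$, and therefore $|\cC|\ge 2^d/V(d,r)$.

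Next I would bound the ball volume. Let $p=r/d\le 1/4<1/2$. If $p=0$ then $V=1$ and there is nothing to prove. Otherwise, by the binomial theorem,
\[
1=(p+(1-p))^d\ \ge\ \sum_{i=0}^{r}\binom{d}{i}p^i(1-p)^{d-i}\ \ge\ V(d,r)\,p^{r}(1-p)^{d-r}.
\]
The last step uses that $p^i(1-p)^{d-i}$ is nonincreasing in $i$ when $p\le 1/2$. Since $p^r(1-p)^{d-r}=2^{-dH_2(p)}$, this gives $V(d,r)\le 2^{dH_2(r/d)}$.

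Finally I would combine the two bounds. Because $H_2$ is increasing on $[0,1/2]$ and $r/d\le 1/4$, we get $V(d,r)\le 2^{dH_2(1/4)}$, and hence $|\cC|\ge 2^{(1-H_2(1/4))d}$. This is in fact slightly stronger than the stated bound: no $o(d)$ loss is needed with this choice of radius. Taking $c=1-H_2(1/4)\approx 0.1887$ gives $|\cC|\ge 2^{cd}$.

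The only step I expect to need care is the volume estimate. It needs the monotonicity of $p^i(1-p)^{d-i}$ in $i$, which holds for $p\le 1/2$, and the degenerate case $p=0$ handled separately. The greedy covering step is routine.
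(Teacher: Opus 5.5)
Your proof is correct and is the classical greedy sphere-covering argument behind the Gilbert--Varshamov bound, which the paper only cites rather than proves; your estimate $V(d,r)\le 2^{dH_2(r/d)}$ even removes the $o(d)$ loss. The same argument with $r=\lceil d/4\rceil-1$ (still $r<d/4$) gives minimum distance $\lceil d/4\rceil\ge d/4$ with the identical size bound, which is what Lemma~\ref{lem:ip-hamming} actually requires when the paper applies this proposition; with only $\lfloor d/4\rfloor$, the bound $\langle x,x'\rangle\le 1/2$ can fail when $4\nmid d$.
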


We now prove Theorem~\ref{thm:minimax_lower} using the bounds above.

\begin{proof}[Proof of Theorem~\ref{thm:minimax_lower}] 
Since instantaneous regrets are non-negative, we have $R_T \ge R_{B\lfloor T/B\rfloor}$. Therefore,  we may assume $T = BM$ for integer $M$ without loss of generality, making every batch exactly~$B$ pulls.
Fix the GV-rate constant $c>0$ from Proposition \ref{prop:gv} for $\delta=1/4$. We consider two cases of~$d$.

\paragraph{Case 1: $d < 4/c$ (small, constant dimension).}
We first consider $|\cA| = 2$ and then generalize to higher $|\cA|$.  Consider $\mathcal{A} = \{e_1, -e_1\} \subset \mathbb{R}^d$ with $e_1$ being the first standard basis vector, and draw
$\theta$ uniformly from $\mathcal{A}$.  The optimal arm is $\theta$
(with mean reward $+1$) and the only other arm is $-\theta$ (with mean reward
$-1$), so every suboptimal pull incurs regret $2$.  Before the first
batch the learner has seen zero bits of feedback, and hence has no information about $\theta$. By symmetry, every pull in batch 1 is suboptimal with probability $1/2$.  Summing over the $\min\{T,B\}$ pulls in the first
batch gives $\mathbb{E}[R_T]  \ge \min\{T,B\} \cdot 2 \cdot \frac{1}{2} = \min\{T,B\}$.  This matches the target lower bound when $|\cA| = 2$, because $\min(d, \log|\mathcal{A}|) = O(1)$, and so $\min\{T,B\} = \Omega(\min\{T, B\min(d,\log|\mathcal{A}|)\})$.

When $2 \le d < 4/c$ and $|\cA| > 2$, a similar argument applies by adding a further $|\cA|-2$ ``dummy'' arms whose first coordinate is zero, thus meaning these arms are strictly suboptimal and their rewards convey no information about which of $\{e_1,-e_1\}$ is optimal.  When $d=1$ and $|\cA| > 2$, a similar argument holds with the dummy arms having low value (e.g., in $[-1/2,1/2]$); a similar idea will be used in Case 2 below, so is skipped here.

For the rest of the proof, we assume $d \ge 4/c$, thus guaranteeing
$\lfloor 2^{cd}\rfloor \ge 16$.

\paragraph{Case 2: $d \ge 4/c$ (large dimension).} 
 To construct our hard instance for a given size $\lvert \cA \rvert = K$, we first define the size of our active hard subset: $K' = \min\{K, \lfloor 2^{cd} \rfloor\}$.
By Proposition \ref{prop:gv}, there exists a binary code $\cC \subset \{0,1\}^d$ with minimum distance $\lfloor d/4 \rfloor$ and size at least $2^{cd}$. We select an arbitrary subset of this code of size exactly $K'$, and map this subset to $\mathcal{V} \subset \{\pm 1/\sqrt{d}\}^d$ via the bijection $0 \mapsto -1/\sqrt{d}$ and $1 \mapsto +1/\sqrt{d}$. Let this hard subset be $\cA'$.

To expand this to the full size $K$, we add $K - K'$ arms that are distinct but suboptimal by at least $1/2$. Since $\cB(1/2) = \{ x \in \mathbb{R}^d: \|x\|_2 \le 1/2 \}$, i.e., the ball of radius $1/2$, contains uncountably many vectors, we can construct the remaining $K-K'$ actions by selecting arbitrary distinct vectors with an $L_2$ norm of at most $1/2$. Let this padding set be $\cA_{\rm pad}$, and define our action set as $\cA = \cA' \cup \cA_{\rm pad}$, so $\lvert \cA \rvert = K$.

To lower bound the worst-case expected regret, we apply Yao's minimax principle and instead lower bound the expected regret under a specific prior over $\cA$. Specifically, we consider an adversary that draws the true parameter $\Theta$ uniformly from the hard subset $\cA'$. Note that $\|\Theta\|_2 = 1$.

Under this prior, the unique optimal arm is $\Theta$, yielding a mean reward of $\|\Theta\|_2^2 = 1$. If the learner plays any suboptimal arm $a \in \cA' \setminus \{\Theta\}$, the mean reward is at most $1/2$ by Lemma~\ref{lem:ip-hamming} and the Gilbert-Varshamov code construction. If the learner plays any padding arm $a \in \cA_{\rm pad}$, the mean reward is at most $1/2$ by the Cauchy-Schwarz inequality:
\[
    \lvert \langle a, \Theta \rangle\rvert  \le \|a\|_2 \|\Theta\|_2 \le \frac{1}{2}(1) = \frac{1}{2}.
\]
Thus, any arm pulled from $\cA$ that is not exactly the true parameter $\Theta$ incurs an instantaneous regret of at least $\Delta = 1/2$.

Fix a batch index $i \in \{1,\dots,M\}$. Let $\bfQ_{i-1} = (Q_1,\dots, Q_{i-1})$ denote the sequence of 1-bit messages revealed after the first $i-1$ batches. By the 1-bit communication constraint, the history $\bfQ_{i-1}$ consists of exactly $i-1$ bits. Therefore, its entropy and hence its mutual information with $\Theta$ is trivially bounded by
\[
    I(\Theta;\bfQ_{i-1}) \le H(\bfQ_{i-1}) \le i-1.
\]
In batch $i$, the learner selects $B$ actions $A_{i,1}, \dots, A_{i,B}$. Because the algorithm cannot observe rewards mid-batch, every action $A_{i,t}$ is a (potentially randomized) function of the history $\bfQ_{i-1}$. Since $\Theta$ is drawn uniformly from $K'$ possibilities, we can apply Fano's inequality~\cite[Corollary 3.13]{Polyanskiy_Wu_2025} to bound  the probability that any specific pull successfully selects the optimal arm:
\[
    \bP\big(A_{i,t} = \Theta\big) \le \frac{I(\Theta;\bfQ_{i-1}) + \log_2 2}{\log_2 K'} \le \frac{(i-1)+1}{\log_2 K'} = \frac{i}{\log_2 K'}.
\]
Thus, the probability that a pull in batch $i$ is suboptimal is at least $1 - \frac{i}{\log_2 K'}$. Since any suboptimal pull incurs a gap of at least $1/2$, the expected instantaneous regret for \emph{every single pull} in batch $i$ is at least 
\[
    \frac{1}{2} \cdot \left( 1- \frac{i}{\log_2 K'} \right).
\]
Let $L = \lfloor \log_2 K' \rfloor$. Because $K \ge 16$ and $\lfloor 2^{cd}\rfloor \ge 16$ (since $d \ge 4/c$), we have $K' \ge 16$, meaning $L \ge 4$. We consider two cases for the total number of batches $M = T/B$: (i) $M \ge L$, and (ii) $M < L$.

For case (i), the expected regret generated by the $B$ pulls in each batch $i = 1, \dotsc, L$ is at least $\frac{B}{2}\big(1-\frac{i}{\log_2 K'}\big)$. Therefore, the expected regret over just the first $L$ batches is
\[
    \mathbb{E}_{\Theta \sim{\rm Unif}(\cA')}[R_{L B}] \ge \sum_{i=1}^L \frac{B}{2}\Big(1-\frac{i}{\log_2 K'}\Big) = \frac{B}{2}\Big(L - \frac{L(L+1)}{2 \log_2 K'}\Big).
\]
Because $L = \lfloor \log_2 K' \rfloor$, we know $\frac{L+1}{\log_2 K'} \le \frac{\log_2 K' + 1}{\log_2 K'} = 1 + \frac{1}{\log_2 K'}$. Since $K' \ge 16$, this ratio is at most $1 + 1/4 = 1.25$, and we obtain
\[
    \mathbb{E}_{\Theta \sim {\rm Unif}(\cA')}[R_{T}] \ge \mathbb{E}_{\Theta \sim{\rm Unif}(\cA')}[R_{L B}] \ge \frac{B}{2}\Big(L - \frac{1.25 L}{2}\Big) = \frac{B}{2} (0.375 L) = \Omega(B \log K').
\]

For case (ii), we sum the per-batch regret bound over all $M$ batches, yielding
\[
    \mathbb{E}_{\Theta \sim {\rm Unif}(\cA')}[R_{T}] \ge \sum_{i=1}^M \frac{B}{2}\Big(1-\frac{i}{\log_2 K'}\Big) = \frac{B}{2}\Big(M - \frac{M(M+1)}{2 \log_2 K'}\Big).
\]
Since $M$ and $L$ are integers with $M < L$, we have $M+1 \le L \le \log_2 K'$. This implies $\frac{M+1}{\log_2 K'} \le 1$. Consequently, we have
\[
    \mathbb{E}_{\Theta \sim{\rm Unif}(\cA')}[R_{T}] \ge \frac{B}{2}\Big(M - \frac{M}{2}\Big) = \frac{BM}{4} = \Omega(T).
\]
Combining both cases, the expected regret is $\mathbb{E}_{\Theta \sim {\rm Unif}(\cA')}[R_T] = \Omega(\min\{T, B \log K'\})$. By Yao's minimax principle, the worst-case expected regret is lower bounded by this expected regret. Substituting $K' = \min\{K, \lfloor 2^{cd}\rfloor\}$ and noting that $\log \lfloor 2^{cd} \rfloor = \Theta(d)$, we conclude that
\[
    \sup_{\|\theta\|_2 \le 1} \mathbb{E}[R_T] \ge 
    \mathbb{E}_{\Theta \sim {\rm Unif}(\cA')}[R_T] = \Omega\left(\min\{T, B\min(d,\log \lvert \cA \rvert)\}\right).
\]
This completes the proof. 
\end{proof}

\subsection{Proof of Theorem \ref{thm:non_mixing_lb} (Lower Bound for Non-Mixing Algorithms)} 
\begin{proof}[Proof of Theorem~\ref{thm:non_mixing_lb}]
To lower bound the worst-case expected regret, we again apply Yao's minimax principle and assume that the adversary draws the true parameter $\Theta$ uniformly from $\cA = \{e_1,\dots, e_d\}$. Under this prior, the unique optimal arm is $\Theta$ with a mean reward of $1$, while any other arm $e_j \neq \Theta$ yields a mean reward of $0$.

To emphasize the severity of the non-mixing constraint, we consider a strictly easier environment: We remove any noise (i.e., consider $\sigma=0$) and allow the agent to transmit the exact, noiseless rewards to the learner, completely bypassing the 1-bit communication constraint. Any lower bound in this idealized setting trivially applies to the communication-constrained (and possibly noisy) problem.

In this noiseless setting, pulling (or ``querying'') a single arm $A \in \mathcal{A}$ for a batch yields $B$ identical observations of $\mathds{1}\{A = \Theta\}$. Because the vectors in $\cA$ are orthogonal, pulling any $A \neq \Theta$ yields a reward of $0$. Thus, a query only reveals whether $A = \Theta$ or $A \neq \Theta$. Until the optimal arm is successfully queried, the learner receives a sequence of identical $0$s.

Let $S_i$ be the set of arms queried over the first $i$ batches. Because the algorithm is non-mixing, it can test at most one arm per batch, and thus $|S_i| \le i$. Since the sequence of $0$s provides no distinguishing information about the unqueried arms, the posterior distribution of $\Theta$ remains uniform over the remaining unqueried arms. By symmetry, for any (potentially adaptive) sequence of distinct queries $S_i$, the probability that it contains the uniformly drawn target $\Theta$ is simply the fraction of the space explored:
\[
    \bP(\Theta \in S_i) = \frac{|S_i|}{d} \le \frac{i}{d}.
\]
Let $M_0 = \lfloor d/2 \rfloor$. For any number of batches $i \le M_0$, the probability that the algorithm has \emph{not} yet found the optimal arm is bounded by:
\[
    \bP(\Theta \notin S_i)  = 1 - \bP(\Theta \in S_{i}) 
    \ge 1 - \frac{i}{d} \ge 1 - \frac{\lfloor d/2 \rfloor}{d} \ge \frac{1}{2}.
\]
If $\Theta \notin S_i$, the algorithm has failed to find the optimal arm in the first $i$ batches. For any $i < \lceil T/B \rceil$, this corresponds to $iB$ suboptimal pulls, accumulating exactly $iB$ regret. Therefore, the expected regret over the first $i$ batches is at least
\[
    \mathbb{E}_{\Theta \sim {\rm Unif}(\cA)}[R_{iB}] \ge \bP(\Theta \notin S_i) \cdot (iB) \ge \frac{iB}{2}.
\]
We compute the total expected regret $\mathbb{E}_{\Theta}[R_T]$ by considering two cases for the total number of batches $M = \lceil T/B \rceil$:
\begin{itemize}[leftmargin=4ex]
    \item \textbf{Case 1 ($M \le M_0$):} The algorithm halts before reaching the threshold $M_0$. By evaluating the failure probability at the final batch $M$, the algorithm fails to find the optimal arm over all $T$ pulls with probability at least $1 - M/d \ge 1 - M_0/d \ge 1/2$. Thus, the expected regret over the entire horizon is at least $\frac{1}{2} T = \Omega(T)$.
    
    \item \textbf{Case 2 ($M > M_0$):} The algorithm continues past the threshold. Since regret is strictly non-decreasing, we can lower bound the total regret by the expected regret accumulated during just the first $M_0$ batches. This yields at least $\frac{1}{2} M_0 B = \frac{1}{2} \lfloor d/2 \rfloor B = \Omega(dB)$.
\end{itemize}
Combining both cases, the expected regret is $\mathbb{E}_{\Theta \sim {\rm Unif}(\cA)}[R_T] = \Omega(\min\{T, dB\})$. By Yao's minimax principle, the worst-case expected regret is lower bounded by this expected regret: 
\[
    \sup_{\theta \in \cA} \mathbb{E}[R_T] \ge \mathbb{E}_{\Theta \sim {\rm Unif}(\cA)}[R_T] =  \Omega(\min\{T, dB\}).
\]
This completes the proof.
\end{proof}

\section{Proof of Theorem~\ref{thm:exp_A_upper} ($\lvert \cA \rvert$-Independent Upper Bound)}
\label{app:proof_exp_upper}

In this section, we formally establish Theorem~\ref{thm:exp_A_upper} by providing the detailed pseudocode and regret analysis of our first algorithm with an $\lvert \cA \rvert$-independent regret bound. As discussed in Section~\ref{subsec:prelims}, this approach exploits the fact that it suffices to estimate only the $\lvert\supp(\rho_h)\rvert = \widetilde{O}(d)$ scalar quantities $\{\langle a,\theta\rangle\}_{a \in \supp(\rho_h)}$ to successfully recover $\hat{\theta}_h$. Thus, we can apply median-of-means directly to these individual scalar estimates, limiting the required number of blocks to $K = \widetilde{O}(\log(d/\delta))$. Furthermore, by distributing the refinement queries uniformly across the core set, each scalar estimate achieves a variance of at most $\eps_h^2/16$ unconditionally, which allows us to bound the confidence width by $W_h = \sqrt{2d}\,\eps_h$ via Corollary~\ref{cor:sum_rho_times_b_G_inv_a}.

The detailed pseudocode is presented in Algorithm~\ref{alg:exp_action_set}. We first proceed with the analysis by first bounding the sample complexity of the localization and refinement stages (Appendix~\ref{app:exp_epoch_cost}). We then establish a high-probability good event for the 1-bit estimators (Appendix~\ref{app:exp_good_event}), verify the confidence width and retention of the optimal arm (Appendix~\ref{app:exp_conf_width}), and finally bound the cumulative regret (Appendix~\ref{app:exp_regret}).

\begin{algorithm}
\caption{1-Bit Phased Elimination with Uniform Arm Allocations}
\label{alg:exp_action_set}
\begin{algorithmic}[1]
\State \textbf{Input:} Action set $\cA \subset \bR^d$, batch size $B$,
  confidence $\delta \in (0,1)$, variance $\sigma^2$, time horizon $T$.
\State Initialize epoch $h = 1$, active set $\cA_1 = \cA$.
\State Set $S_{\max} = 4d(\log\log d + 11)$ and $\widetilde{H} = \lceil \log_2 T \rceil$.
\State Set the number of Median-of-Means (MoM) blocks:
\label{algline:K_exp}
  \begin{equation*}
    K = \left\lceil 8 \log \left(\frac{2 S_{\max} \widetilde{H}}{\delta}\right) \right\rceil.
  \end{equation*}

\For{epoch $h = 1, 2, \dots$}

  \State Find design $\rho_h \colon \cA_h \to [0,1]$ satisfying
    \eqref{eq:near_optimal_design}.
  \State Set precision and confidence width:
  \label{algline:eps_h_exp}
    \begin{equation*}
      \eps_h = \frac{\sigma}{2^h \sqrt{B}}, \qquad W_h = \sqrt{2d} \cdot \eps_h.
    \end{equation*}

  \For{each arm $a \in \supp(\rho_h)$}
    \State $[L_a, U_a] \leftarrow \textsc{1BitLocalize}\left(a,\;
      \delta_{\mathrm{loc}} = \dfrac{\delta}{2 S_{\max} \widetilde{H}}, \; \sigma' = \dfrac{\sigma}{\sqrt{B}}\right)$ (from Appendix~\ref{subsec:1Bitlocalize}).
  \EndFor

  \State Set the uniform arm budget $u_h$ for all $a \in \supp(\rho_h)$:
  \label{algline:uh_exp}
    \begin{equation*}
     u_h = B \left\lceil   16 C_V \cdot 4^h \cdot \log_2 T \right\rceil,
    \end{equation*}
    \Statex \hspace{\algorithmicindent}
    where $C_V \ge 48$ is the absolute constant in the variance from
    Corollary~\ref{cor:refinement_fixed_n}.

  \For{each arm $a \in \supp(\rho_h)$}
    \For{block $j = 1, \dots, K$}
      \State $\hat{Y}_{a,h}^{(j)} \leftarrow
        \textsc{1BitRefine}\left(a, [L_a, U_a],\; n_{\rm q} = \dfrac{u_h}{B} \right)$ (from Appendix~\ref{subsec:1bitRefine}).
    \EndFor
    \State Compute robust scalar estimate:
      $\hat{Y}_{a,h} = \mathrm{median}\big\{
        \hat{Y}_{a,h}^{(1)}, \dots, \hat{Y}_{a,h}^{(K)}\big\}$.
  \EndFor

  \State Compute the design-weighted global parameter estimate~$\hat{\theta}_h \in \bR^d$ according to~\eqref{eq:theta_hat_exp} if
    \Statex \hspace{\algorithmicindent} ${\rm span}(\cA_h) = \bR^d$, or similarly with $\hat{\theta}_h \in {\rm span}(\cA_h)$ otherwise (see Remark \ref{rem:not_span_Rd}).
    
  
  \For{each arm $b \in \cA_h$}
    \State Compute $\hat{\mu}_{b,h} = b^\top \hat{\theta}_h $.
  \EndFor

  \State Update active set:\label{algline:exp_alg_elimination_rule}
    $\cA_{h+1} = \left\{b \in \cA_h :
      \max_{x \in \cA_h} \hat{\mu}_{x,h} - \hat{\mu}_{b,h} \le 2W_h\right\}$.

  \State Update sample counter; \textbf{break} when budget $T$ is exhausted.

\EndFor
\end{algorithmic}
\end{algorithm}

Throughout this section, let $\sigma' = \sigma/\sqrt{B}$ and
$\eps_h = 2^{-h}\sigma'$ as in Line~\ref{algline:eps_h_exp} of Algorithm~\ref{alg:exp_action_set}.
We invoke Corollary~\ref{cor:refinement_fixed_n} with the full-batch
effective standard deviation $\sigma' = \sigma/\sqrt{B}$; each query
corresponds to $B$ raw arm pulls. The resulting guarantees from \eqref{eq:1bit_fixed_budget} are:
\begin{equation}
\label{eq:subroutine_constants}
  \Var(\hat{Y}) \le \frac{C_V \sigma'^2\log_2 T}{n_{\mathrm{q}}},
  \qquad
  \lvert\mathrm{Bias}(\hat{Y})\rvert \le \frac{3\sigma'}{T^2},
\end{equation}
where $n_{\mathrm{q}}$ denotes the number of \emph{queries} (i.e., the
number of raw samples divided by $B$) and $C_V \ge 48$ is the variance constant from
Corollary~\ref{cor:refinement_fixed_n}.  Crucially, the bias bound is independent of both $n_{\mathrm{q}}$ and the epoch index $h$.

\subsection{Sample Complexity per Epoch}
\label{app:exp_epoch_cost}
To facilitate the regret analysis, we separate the sample complexity of the localization and refinement stages in each epoch.

\paragraph{Localization cost.}
By Proposition~\ref{prop:localize}, each call to \textsc{1BitLocalize} with effective standard
deviation $\sigma' = \sigma/\sqrt{B}$ and failure probability
$\delta_{\mathrm{loc}} = \delta/(2S_{\max}\widetilde{H})$ uses at most
\[
    n_{\mathrm{loc}}
    = C_{\mathrm{loc}}\left(\log\Big(\frac{1}{\sigma'}\Big) + \log\Big(\frac{1}{\delta_{\mathrm{loc}}}\Big)\right)
    = C_{\mathrm{loc}}\left(\frac{1}{2}\log\Big(\frac{B}{\sigma^2}\Big)
        + \log\left(\frac{2S_{\max}\widetilde{H}}{\delta}\right)\right)
\]
queries, where $C_{\mathrm{loc}} > 0$ is the absolute constant from Proposition~\ref{prop:localize}. Since each query costs $B$ raw arm pulls, and since $S_{\max} = O(d \log \log d)$, and $\widetilde{H} = \lceil \log_2 T \rceil$, the total localization cost over all support arms is bounded by
\begin{equation}
    \label{eq:mloc_h_exp}
    m_{\mathrm{loc},h} \le S_{\max} \cdot B \cdot n_{\mathrm{loc}}
    = O\left(dB \log\log d \cdot \log\Big(\frac{dB \log T}{\delta}\Big)\right),
\end{equation}
where we have absorbed $\sigma = \Theta(1)$ into the $O(\cdot)$ notation.

\paragraph{Refinement cost.}
Because the uniform allocation $u_h$ (see Line~\ref{algline:uh_exp}) is identical for every arm
in the support, the total refinement cost $m_{\mathrm{ref},h}$ is exactly $K \cdot |\supp(\rho_h)| \cdot u_h$ raw arm pulls. Using
$\lvert\supp(\rho_h)\rvert \le S_{\max} = O(d \log\log d)$ and $u_h \le B(16C_V \cdot 4^h \log_2 T + 1)$, we have
\begin{equation}
    \label{eq:mref_h_exp}
    m_{\mathrm{ref},h}
    \le K \cdot S_{\max} \cdot B\!\left(16C_V\cdot 4^h\log_2 T + 1\right)
    = O\!\left(K d B \log T \log\log d \cdot 4^h\right).
\end{equation}

\subsection{High-Probability Good Event}
\label{app:exp_good_event}

Define $H$ as the final epoch of Algorithm \ref{alg:exp_action_set} and the good event $\cE = \cE_{\mathrm{loc}} \cap \cE_{\mathrm{ref}}$, where
\begin{align*}
  \cE_{\mathrm{loc}} &\coloneqq
      \bigcap_{h=1}^{H} \bigcap_{a \in \supp(\rho_h)}
      \left\{\mu_a \in [L_a, U_a]
        \;\text{ and }\;
        U_a - L_a \le \tfrac{8\sigma}{\sqrt{B}}\right\},\\[4pt]
  \cE_{\mathrm{ref}} &\coloneqq
      \bigcap_{h=1}^{H} \bigcap_{a \in \supp(\rho_h)}
      \left\{\lvert \hat{Y}_{a,h} - \mu_a \rvert \le \eps_h\right\}.
\end{align*}

\begin{lemma}[Good event probability]
  \label{lem:exp_good_event}
  $\bP(\cE) \ge 1 - \delta$.
\end{lemma}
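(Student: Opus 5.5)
We bound $\bP(\cE^c) \le \bP(\cE_{\mathrm{loc}}^c) + \bP(\cE_{\mathrm{ref}}^c \cap \cE_{\mathrm{loc}})$ and allocate failure probability $\delta/2$ to each term via a union bound over at most $S_{\max}\widetilde{H}$ (epoch, support-arm) pairs. We first check that the number of epochs satisfies $H \le \widetilde{H} = \lceil \log_2 T\rceil$. The refinement cost of epoch $h$ is at least $u_h \ge 16 C_V B\, 4^h \log_2 T$, which exceeds $T$ once $4^h > T$, so the budget runs out before epoch $\lceil \log_2 T\rceil + 1$ is completed. (An epoch cut off by the horizon can be treated as never finishing its estimation, so it does not enter the event.) Also, $\lvert\supp(\rho_h)\rvert \le S_{\max}$ by \eqref{eq:near_optimal_design}. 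Hence there are at most $S_{\max}\widetilde{H}$ pairs.

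\textbf{Localization.} For each pair, Proposition~\ref{prop:localize} with $\sigma' = \sigma/\sqrt{B}$ and $\delta_{\mathrm{loc}} = \delta/(2S_{\max}\widetilde{H})$ gives failure probability at most $\delta_{\mathrm{loc}}$. The support set $\supp(\rho_h)$ is random, since it depends on earlier bits. We therefore apply the proposition conditionally on the history up to the start of epoch $h$, using fresh samples. A union bound then gives $\bP(\cE_{\mathrm{loc}}^c) \le \delta/2$.

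\textbf{Refinement.} Fix a pair $(h,a)$ and condition on the history and on the localization event for that arm, so that $[L_a,U_a]$ is a valid interval of width $\le 8\sigma'$. The $K$ block estimates $\hat{Y}^{(j)}_{a,h}$ are then i.i.d. conditionally, because they use fresh samples and independent random thresholds. We invoke Corollary~\ref{cor:refinement_fixed_n} with $n_{\rm q} = u_h/B = \lceil 16 C_V 4^h \log_2 T\rceil \ge 48\log_2 T$. This gives
\[
\Var(\hat{Y}^{(j)}_{a,h}) \le \frac{C_V\sigma'^2\log_2 T}{16 C_V 4^h \log_2 T} = \frac{\eps_h^2}{16}
\qquad\text{and}\qquad
\lvert\mathrm{Bias}\rvert \le \frac{3\sigma'}{T^2}.
\]

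\textbf{Median step (main obstacle).} Lemma~\ref{lem:mom_statement} is stated for the median of block means of $mk$ samples, not for a median of $K$ estimators. So instead of quoting it, we redo the standard argument directly. By Chebyshev, each block satisfies $\lvert \hat{Y}^{(j)}_{a,h} - \bE\hat{Y}^{(j)}_{a,h}\rvert > \eps_h/2$ with probability at most $(\eps_h^2/16)/(\eps_h^2/4) = 1/4$. The median deviates by more than $\eps_h/2$ only if at least $K/2$ blocks fail. By Hoeffding, this has probability at most $\exp(-2K(1/4)^2) = \exp(-K/8) \le \delta/(2S_{\max}\widetilde{H})$, using the choice $K = \lceil 8\log(2S_{\max}\widetilde{H}/\delta)\rceil$. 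On the complementary event, the bias must be absorbed: $\lvert \hat{Y}_{a,h} - \mu_a\rvert \le \eps_h/2 + 3\sigma'/T^2$. Since $h \le \widetilde{H}$, we have $2^h \le 2T$, so $3\sigma'/T^2 \le \sigma'/(2^{h+1})\cdot 12/T \le \eps_h/2$ for $T \ge 12$. Small $T$ is trivial, or handled by adjusting constants.

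\textbf{Conclusion.} A union bound over the at most $S_{\max}\widetilde{H}$ pairs gives $\bP(\cE_{\mathrm{ref}}^c \cap \cE_{\mathrm{loc}}) \le \delta/2$. Combined with the localization bound, this yields $\bP(\cE) \ge 1-\delta$. The delicate points are the conditioning on the adaptively chosen support sets and on the localization event, and the need for the bias to be small relative to $\eps_h$ uniformly over epochs, which is where the bound $H \le \widetilde{H}$ enters.
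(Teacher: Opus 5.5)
Your proof is correct and matches the paper's argument step for step: you show $H \le \widetilde{H}$, take a $\delta/2$ union bound over the at most $S_{\max}\widetilde{H}$ localization calls, get conditionally i.i.d.\ refinement blocks with variance $\eps_h^2/16$ and bias $3\sigma'/T^2 \le \eps_h/2$ for $T \ge 12$, and union-bound the median-of-$K$ deviation over (epoch, arm) pairs. Two minor differences: you re-derive the median concentration via Chebyshev and Hoeffding, although Lemma~\ref{lem:mom_statement} already applies with block size $m=1$ (this is how the paper uses it, with the same $\exp(-K/8)$ rate); and you bound $\bP(\cE_{\mathrm{ref}}^c \cap \cE_{\mathrm{loc}})$ by conditioning on each arm's own localization success, whereas the paper conditions on all of $\cE_{\mathrm{loc}}$ and uses the chain rule, so your handling is if anything slightly cleaner.
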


\begin{proof}
We decompose the proof into multiple parts as follows.

\textbf{Part 1: Deterministic bounds.}
We first state two deterministic bounds used throughout the proof.

\textit{1.1 Epoch count $H \le \widetilde{H}$.} By Line~\ref{algline:uh_exp} and
$4^{\lceil\log_2 T\rceil} \ge T^2$, we have
\[
  u_{\widetilde{H}}
  = B\!\left\lceil 16C_V\cdot 4^{\widetilde{H}}\log_2 T\right\rceil
  \ge 16C_V B\cdot T^2\log_2 T
  > T,
\]
so epoch $\widetilde{H}$ alone would exhaust the entire budget, and the
algorithm must terminate before reaching it. Therefore, the total number of epochs $H$ is bounded by $\widetilde{H} = \lceil \log_2 T \rceil$.

\textit{1.2 Core set size.} By the near $G$-optimal design
property~\eqref{eq:near_optimal_design}, the core set satisfies
$|\supp(\rho_h)| \le S_{\max}$ for every epoch $h \ge 1$.

\textbf{Part 2: Localization event $\mathcal{E}_{\mathrm{loc}}$.}
The two bounds above imply that Algorithm~\ref{alg:exp_action_set} makes at most $S_{\max} \cdot \widetilde{H}$ total calls to \textsc{1BitLocalize}. 
Each call uses a fresh, independent set of arm pulls and fixed failure probability $\delta_{\mathrm{loc}} = \delta / (2S_{\max}\widetilde{H})$.
A union bound over all calls gives 
\[
  \mathbb{P}(\mathcal{E}_{\mathrm{loc}}^c)
  \le \widetilde{H} \cdot S_{\max} \cdot \delta_{\mathrm{loc}}
  = \frac{\delta}{2}.
\]

\textbf{Part 3: Refinement event $\mathcal{E}_{\mathrm{ref}}$.}
We show $\mathbb{P}(\mathcal{E}_{\mathrm{ref}}^c \mid \mathcal{E}_{\mathrm{loc}}) \le \delta/2$.

Fix an epoch $h \in \{1,\ldots,\widetilde{H}\}$.
Let $\mathcal{F}_{h-1}$ be the $\sigma$-algebra of all randomness strictly
before the arm pulls of epoch $h$, including $\mathcal{A}_h$, $\rho_h$,
$u_h$, and $\{[L_a, U_a]\}_{a \in \supp(\rho_h)}$.
We condition on an arbitrary fixed realization of $\mathcal{F}_{h-1}$
and on $\mathcal{E}_{\mathrm{loc}}$. Under this conditioning, all of the
preceding quantities are deterministic, each interval $[L_a, U_a]$ is a valid
localization interval for \textsc{1BitRefine}, and $\supp(\rho_h)$ is a fixed set of size
at most $S_{\max}$.

For any fixed $a \in \supp(\rho_h)$, since each block $j \in \{1,\ldots,K\}$ uses fresh, independent arm pulls with the same fixed parameters, the $K$ estimates
$\hat{Y}_{a,h}^{(1)},\ldots,\hat{Y}_{a,h}^{(K)}$ are i.i.d., and so
Lemma~\ref{lem:mom_statement} applies to their median.
We bound the bias and variance of a single block estimate.
\begin{itemize}[leftmargin=4ex]
  \item \textbf{Bias.}
  Since $n_{\rm q} = u_h/B \ge 16C_V \cdot 4^h \log_2 T \ge 48\log_2 T$,
  Corollary~\ref{cor:refinement_fixed_n} gives
  $|\mathrm{Bias}(\hat{Y}_{a,h}^{(1)})| \le 3\sigma'/T^2$.
  Since $h \le H \le \widetilde{H}$ and $2^{\widetilde{H}} =
  2^{\lceil \log_2 T\rceil} \le 2T$, we have $2^h \le 2T$, so
  \[
    \frac{3\sigma'}{T^2}
    \le \frac{6\sigma'}{T \cdot 2^h}
    = \frac{6\varepsilon_h}{T}
    \le \frac{\varepsilon_h}{2},
  \]
  where the last step holds for $T \ge 12$.

  \item \textbf{Variance.}
  Substituting $n_{\rm q} \ge 16C_V \cdot 4^h \log_2 T$ and
  $\varepsilon_h = \sigma'/2^h$ into Corollary~\ref{cor:refinement_fixed_n} yields
  \[
    \mathrm{Var}(\hat{Y}_{a,h}^{(1)})
    \le \frac{C_V \sigma'^2 \log_2 T}{u_h/B}
    \le \frac{C_V \sigma'^2 \log_2 T}{16C_V \cdot 4^h \log_2 T}
    = \frac{\varepsilon_h^2}{16}.
  \]
\end{itemize}
Applying Lemma~\ref{lem:mom_statement} to the $K$ i.i.d.\ block estimates (i.e., $k=K$ therein) 
with variance at most $\varepsilon_h^2/16$ and a failure
probability of $\delta/(2S_{\max}\widetilde{H})$, we obtain
\[
  \left|\hat{Y}_{a,h} - \mathbb{E}[\hat{Y}_{a,h}^{(1)}]\right|
  \le \sqrt{\frac{32 \cdot (\varepsilon_h^2/16)}{K}
      \log\!\left(\frac{2S_{\max}\widetilde{H}}{\delta}\right)}
  = \varepsilon_h \sqrt{\frac{2\log(2S_{\max}\widetilde{H}/\delta)}{K}}
  \le \frac{\varepsilon_h}{2},
\]
where the last step uses $K \ge 8\log(2S_{\max}\widetilde{H}/\delta)$.
Combined with the bias bound, the triangle inequality gives
\[
  |\hat{Y}_{a,h} - \mu_a|
  \le \underbrace{|\hat{Y}_{a,h} - \mathbb{E}[\hat{Y}_{a,h}^{(1)}]|}
        _{\le\,\varepsilon_h/2}
   + \underbrace{|\mathrm{Bias}(\hat{Y}_{a,h}^{(1)})|}_{\le\,\varepsilon_h/2}
  \le \varepsilon_h,
\]
with probability at least $1 - \delta/(2S_{\max}\widetilde{H})$
under our fixed conditioning.

Taking a union bound over all $a \in \supp(\rho_h)$, which under our conditioning is a deterministic set of size at most $S_{\max}$, we obtain
\[
  \mathbb{P}\Big(
    \exists\, a \in \supp(\rho_h) : |\hat{Y}_{a,h} - \mu_a| > \varepsilon_h
    \;\Big|\; \mathcal{F}_{h-1},\, \mathcal{E}_{\mathrm{loc}}
  \Big)
  \le S_{\max} \cdot \frac{\delta}{2S_{\max}\widetilde{H}}
  = \frac{\delta}{2\widetilde{H}}.
\]
Since this bound holds for every fixed realization of $\mathcal{F}_{h-1}$,
the law of total expectation gives
\[
  \mathbb{P}\Big(
    \exists\, a \in \supp(\rho_h) : |\hat{Y}_{a,h} - \mu_a| > \varepsilon_h
    \;\Big|\; \mathcal{E}_{\mathrm{loc}}
  \Big)
  \le \frac{\delta}{2\widetilde{H}}.
\]
Finally, since $H \le \widetilde{H}$ holds deterministically, a union bound
over the deterministic index set $h \in \{1,\ldots,\widetilde{H}\}$ gives
\[
  \mathbb{P}(\mathcal{E}_{\mathrm{ref}}^c \mid \mathcal{E}_{\mathrm{loc}})
  \le \widetilde{H} \cdot \frac{\delta}{2\widetilde{H}} = \frac{\delta}{2}.
\]
\textbf{Part 4: Conclusion.}
Combining the probability bounds of the two events, we obtain
\[
  \mathbb{P}(\mathcal{E})
  = \mathbb{P}(\mathcal{E}_{\mathrm{loc}})\,
    \mathbb{P}(\mathcal{E}_{\mathrm{ref}} \mid \mathcal{E}_{\mathrm{loc}})
  \ge \left(1 - \frac{\delta}{2}\right)^2
  \ge 1 - \delta
\]
as desired.
\end{proof}

\subsection{Confidence Width and Arm Elimination}
\label{app:exp_conf_width}

\begin{lemma}[Confidence width]
  \label{lem:huge_confidence_width}
  Under event $\cE$, for all $h \ge 1$ and all $b \in \cA_h$, we have
  $\lvert \hat\mu_{b,h} - \mu_b \rvert \le W_h = \sqrt{2d}\eps_h$.
\end{lemma}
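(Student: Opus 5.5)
The argument is deterministic once we work on $\cE$. The idea is to write the error $\hat\mu_{b,h}-\mu_b$ as a $\rho_h$-weighted combination of the scalar errors $\hat Y_{a,h}-\mu_a$ over the core set, and then bound it with Corollary~\ref{cor:sum_rho_times_b_G_inv_a}.

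\textbf{Step 1: express $\mu_b$ through the design.} Fix $h$ and $b\in\cA_h$, and write $\mu_a=\langle a,\theta\rangle$. First assume $\mathrm{span}(\cA_h)=\bR^d$, so that $G_h$ is invertible. Then
\[
G_h^{-1}\sum_{a\in\supp(\rho_h)}\rho_h(a)\,a\,\mu_a=G_h^{-1}\Big(\sum_a\rho_h(a)aa^\top\Big)\theta=\theta .
\]
Hence
\[
\mu_b=b^\top\theta=\sum_{a\in\supp(\rho_h)}\rho_h(a)\,(b^\top G_h^{-1}a)\,\mu_a .
\]
By \eqref{eq:theta_hat_exp}, the estimate has the same form:
\[
\hat\mu_{b,h}=\sum_a\rho_h(a)\,(b^\top G_h^{-1}a)\,\hat Y_{a,h}.
\]

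\textbf{Step 2: bound the difference.} Subtracting the two expressions and applying the triangle inequality gives
\[
|\hat\mu_{b,h}-\mu_b|\le\sum_a\rho_h(a)\,|b^\top G_h^{-1}a|\,|\hat Y_{a,h}-\mu_a| .
\]
On $\cE_{\mathrm{ref}}$, every core-set arm satisfies $|\hat Y_{a,h}-\mu_a|\le\eps_h$. This is uniform over the core set, so the factor $\eps_h$ can be pulled out of the sum. Corollary~\ref{cor:sum_rho_times_b_G_inv_a}, applied with $\cX=\cA_h$, then bounds the remaining sum $\sum_a\rho_h(a)|b^\top G_h^{-1}a|$ by $\sqrt{2d}$. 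This yields the claimed width $W_h=\sqrt{2d}\,\eps_h$.

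\textbf{Step 3: the rank-deficient case.} The only step that needs care, and the one I expect to be the main obstacle, is the case $\mathrm{span}(\cA_h)\neq\bR^d$ (Remark~\ref{rem:not_span_Rd}). There I would work in $V=\mathrm{span}(\cA_h)$ of dimension $d'\le d$.
\begin{itemize}
\item Let $P$ be the orthogonal projection onto $V$. Since every $a\in\cA_h$ lies in $V$, we have $\langle a,\theta\rangle=\langle a,P\theta\rangle$ for all such $a$.
\item Interpret $G_h^{-1}$ as the inverse of $G_h$ restricted to $V$ (equivalently, the pseudoinverse). The reproduction identity of Step 1 then holds with $P\theta$ in place of $\theta$.
\item Lemma~\ref{lem:sum_rho_times_squared_b_G_inv_a} and Corollary~\ref{cor:sum_rho_times_b_G_inv_a} go through verbatim in $V$ with $d'$ in place of $d$. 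Since $\sqrt{2d'}\le\sqrt{2d}$, the same bound follows.
\end{itemize}

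Finally, note that the bound holds for all $h$ and all $b\in\cA_h$ simultaneously, because $\cE$ already intersects over all epochs and all core-set arms. No further union bound over $\cA_h$ is needed, which is why the argument is $|\cA|$-independent.
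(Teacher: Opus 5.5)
Your proposal is correct and follows the paper's proof essentially verbatim. It uses the same reproduction identity $\mu_b = b^\top G_h^{-1}\sum_{a}\rho_h(a)\,a\,\mu_a$, the same triangle-inequality bound on the weighted scalar errors, the uniform bound $\lvert \hat Y_{a,h}-\mu_a\rvert\le\eps_h$ on $\cE_{\mathrm{ref}}$, and Corollary~\ref{cor:sum_rho_times_b_G_inv_a}. Your Step~3 treatment of the rank-deficient case (pseudoinverse on $\mathrm{span}(\cA_h)$, giving $\sqrt{2d'}\le\sqrt{2d}$) is a valid elaboration of what the paper leaves implicit via Remark~\ref{rem:not_span_Rd}, and is a routine detail rather than a real obstacle.
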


\begin{proof}
Using~\eqref{eq:theta_hat_exp} and the identity
\[
    \theta = G_h^{-1}\sum_{a \in \supp(\rho_h)} \rho_h(a)a a^\top\theta
    \implies
    \mu_b = b^{\top} \theta = b^\top  G_h^{-1}\sum_{a \in \supp(\rho_h)} \rho_h(a)\cdot a\cdot \mu_a,
\]
we have
\begin{equation*}
    \lvert \hat\mu_{b,h} - \mu_b \rvert
  = \left\lvert b^\top G_h^{-1}
      \sum_{a}
      \rho_h(a)\cdot a\cdot(\hat{Y}_{a,h} - \mu_a)
    \right\rvert 
  \le \sum_{a}  \big\lvert \hat{Y}_{a,h} - \mu_a \big\rvert \cdot \rho_h(a)\cdot
      \left\lvert b^\top G_h^{-1} a \right\rvert.
\end{equation*}
Under $\cE_{\mathrm{ref}}$, each $\lvert\hat{Y}_{a,h}-\mu_a\rvert\le\eps_h$.
Combining this with Corollary~\ref{cor:sum_rho_times_b_G_inv_a}, we obtain $\lvert \hat\mu_{b,h} - \mu_b \rvert \le \eps_h\sqrt{2d} = W_h$.
\end{proof}

\begin{lemma}[Optimal arm never eliminated]\label{lem:exp_opt_arm}
  Under $\cE$, the optimal arm $a^* = \arg\max_{a \in \cA} \mu_a$ satisfies
  $a^* \in \cA_h$ for all epochs $h \ge 1$.
\end{lemma}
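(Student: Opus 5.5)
We argue by induction on the epoch index $h$, working throughout on the good event $\cE$ of Lemma~\ref{lem:exp_good_event}. The base case is immediate, since $\cA_1 = \cA \ni a^*$.

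For the inductive step, suppose $a^* \in \cA_h$. We must show that $a^*$ survives the elimination rule in Line~\ref{algline:exp_alg_elimination_rule}, i.e.\ that
\[
\max_{x \in \cA_h}\hat\mu_{x,h} - \hat\mu_{a^*,h} \le 2W_h .
\]
Fix any $x \in \cA_h$. Since both $x$ and $a^*$ lie in $\cA_h$, Lemma~\ref{lem:huge_confidence_width} applies to each of them and gives $\hat\mu_{x,h} \le \mu_x + W_h$ and $\hat\mu_{a^*,h} \ge \mu_{a^*} - W_h$. Subtracting,
\[
\hat\mu_{x,h} - \hat\mu_{a^*,h} \le \mu_x - \mu_{a^*} + 2W_h \le 2W_h,
\]
where the last inequality uses the optimality $\mu_{a^*} \ge \mu_x$. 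Taking the maximum over $x \in \cA_h$ gives the required condition, so $a^* \in \cA_{h+1}$, which completes the induction.

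The argument has two subtleties. First, Lemma~\ref{lem:huge_confidence_width} holds simultaneously for all epochs and all active arms on the single event $\cE$. The induction is therefore deterministic on $\cE$, and no additional union bound is needed even though $\cA_h$ is random.

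Second, when $\mathrm{span}(\cA_h) \neq \bR^d$, the estimator is formed within the subspace $\mathrm{span}(\cA_h)$ (Remark~\ref{rem:not_span_Rd}). In that case the identity $\mu_b = b^\top G_h^{-1}\sum_a \rho_h(a)\, a\, \mu_a$ used in the proof of Lemma~\ref{lem:huge_confidence_width} must be read with $\theta$ replaced by its projection onto $\mathrm{span}(\cA_h)$ and $G_h^{-1}$ by the pseudo-inverse. Since every $b \in \cA_h$ lies in that span, the projection leaves $b^\top\theta$ unchanged. Checking that this identity still holds, so that Lemma~\ref{lem:huge_confidence_width} covers the degenerate case, is the one point needing care; the rest is a two-line triangle-inequality argument.
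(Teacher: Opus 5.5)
Your proof is correct and follows the paper's argument: induction on $h$ from $\cA_1=\cA$, with the inductive step applying Lemma~\ref{lem:huge_confidence_width} to $a^*$ and to every competitor in $\cA_h$. The paper phrases that step as a contradiction while you argue directly, which is equivalent, and your remark on the degenerate-span case is a reasonable check that the paper leaves implicit via Remark~\ref{rem:not_span_Rd}.
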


\begin{proof}
    We prove this by induction. The base case $\cA_1 = \cA$ is immediate.  For the inductive step, suppose that $a^* \in \cA_h$, and assume for contradiction that $a^*$ is eliminated at epoch $h$. Then $\hat\mu_{b,h} - \hat\mu_{a^*,h} > 2W_h$ for some $b \in \cA_h$. Lemma~\ref{lem:huge_confidence_width} gives $\mu_b - \mu_{a^*} \ge (\hat\mu_{b,h} - W_h) - (\hat\mu_{a^*,h} + W_h) > 0$, contradicting the optimality of $a^*$.
\end{proof}

\begin{lemma}[Suboptimality gap for active arms]
\label{lem:exp_active_gap}
Under $\cE$, for all epochs $h \ge 2$ and all active arms $b \in \cA_h$, we have
\[
  \Delta_b \coloneqq \mu_{a^*} - \mu_b \le 4W_{h-1} = 8W_h.
\]
\end{lemma}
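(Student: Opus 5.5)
The plan is the standard phased-elimination argument, applied to the epoch in which $b$ last survived. Fix $h \ge 2$ and $b \in \cA_h$. Since the active sets are nested, $b \in \cA_{h-1}$ as well, and $b$ survived the elimination step at epoch $h-1$ (Line~\ref{algline:exp_alg_elimination_rule} of Algorithm~\ref{alg:exp_action_set}). By Lemma~\ref{lem:exp_opt_arm}, under $\cE$ we also have $a^* \in \cA_{h-1}$. We may therefore take $x = a^*$ in the maximum defining the survival condition, which gives
\[
  \hat\mu_{a^*,h-1} - \hat\mu_{b,h-1} \le \max_{x \in \cA_{h-1}} \hat\mu_{x,h-1} - \hat\mu_{b,h-1} \le 2W_{h-1}.
\]

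Next, I would apply Lemma~\ref{lem:huge_confidence_width} at epoch $h-1$ to both $a^*$ and $b$; this is valid because both lie in $\cA_{h-1}$. It gives $\mu_{a^*} \le \hat\mu_{a^*,h-1} + W_{h-1}$ and $\mu_b \ge \hat\mu_{b,h-1} - W_{h-1}$. Subtracting these and using the survival inequality yields
\[
  \Delta_b = \mu_{a^*} - \mu_b \le \hat\mu_{a^*,h-1} - \hat\mu_{b,h-1} + 2W_{h-1} \le 4W_{h-1}.
\]

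Finally, Line~\ref{algline:eps_h_exp} sets $\eps_h = \sigma/(2^h\sqrt{B})$, so $\eps_{h-1} = 2\eps_h$. Hence $W_{h-1} = \sqrt{2d}\,\eps_{h-1} = 2W_h$, and the bound becomes $4W_{h-1} = 8W_h$.

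There is no real obstacle here. The only point needing care is that the case $\mathrm{span}(\cA_{h-1}) \neq \bR^d$ (Remark~\ref{rem:not_span_Rd}) still satisfies the guarantee of Lemma~\ref{lem:huge_confidence_width} for arms in $\cA_{h-1}$. This holds because $b^\top\theta$ depends only on the projection of $\theta$ onto $\mathrm{span}(\cA_{h-1})$, so the same identity-based argument goes through within that subspace.
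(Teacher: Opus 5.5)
Your proof is correct and follows the paper's argument essentially step for step. You use the survival of $b$ at epoch $h-1$ with $x = a^*$ (retained by Lemma~\ref{lem:exp_opt_arm}), apply Lemma~\ref{lem:huge_confidence_width} to both arms, and finish with $W_{h-1} = 2W_h$; your remark on the non-spanning case is a harmless addition that the paper leaves implicit via Remark~\ref{rem:not_span_Rd}.
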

\begin{proof}
Fix $h \ge 2$ and $b \in \cA_h$. Since $b$ survived the elimination step
at the end of epoch $h-1$, and $a^* \in \cA_{h-1}$ by
Lemma~\ref{lem:exp_opt_arm}, the elimination rule (see Line~\ref{algline:exp_alg_elimination_rule}) gives
\[
  \hat\mu_{a^*,h-1} - \hat\mu_{b,h-1}
  \le \max_{x \in \cA_{h-1}}\hat\mu_{x,h-1} - \hat\mu_{b,h-1}
  \le 2W_{h-1}.
\]
Applying Lemma~\ref{lem:huge_confidence_width} to bound
$\mu_{a^*} \le \hat\mu_{a^*,h-1} + W_{h-1}$ and
$\mu_b \ge \hat\mu_{b,h-1} - W_{h-1}$, we obtain
\[
  \Delta_b
  = \mu_{a^*} - \mu_b
  \le (\hat\mu_{a^*,h-1} - \hat\mu_{b,h-1}) + 2W_{h-1}
  \le 4W_{h-1}
  = 8W_h,
\]
where the last equality uses $W_{h-1} = 2W_h$ (see Line~\ref{algline:eps_h_exp}).
\end{proof}

\subsection{Regret Bound}
\label{app:exp_regret}

\begin{proof}[Proof of Theorem~\ref{thm:exp_A_upper}]
We condition on the good event $\cE$, which by Lemma~\ref{lem:exp_good_event} occurs with probability at least $1-\delta$. Throughout the remainder of this proof, all bounds are deterministic consequences of this event.

Let $\Delta_b = \mu_{a^*} - \mu_b$ denote the suboptimality gap of any active arm $b \in \cA_h$. Because all mean rewards lie in $[-1, 1]$, the gap is trivially bounded by $\Delta_b \le 2$. For epochs $h \ge 2$, Lemma~\ref{lem:exp_active_gap} gives $\Delta_b \le \min\left\{ 8W_h, 2\right\}$.

We upper bound the total regret by splitting it into the regret incurred during the localization stages and the regret incurred during the refinement stages across all epochs. Let $H \le \lceil \log_2 T \rceil$ be the epoch at which the algorithm halts.

\textbf{Localization Regret.}
Because the instantaneous regret is always at most $2$, the total regret incurred during the localization phases across all $H$ epochs is bounded via~\eqref{eq:mloc_h_exp}:
\begin{equation}
  \label{eq:R_loc_exp}
  R_{\mathrm{loc}} \le \sum_{h=1}^{H} 2 \cdot m_{\mathrm{loc},h} 
  = O\left(H \cdot dB \log\log d \log\Big(\frac{dB \log T}{\delta}\Big)\right) 
  = \widetilde{O}\left(dB \log\Big(\frac{1}{\delta}\Big)\right),
\end{equation}
where the $\widetilde{O}(\cdot)$ notation absorbs the $\text{polylog}(d, T)$ factors.

\textbf{Refinement Regret.}
For the first epoch ($h=1$), we bound the refinement gap trivially by $2$:
\begin{equation}
  \label{eq:R1_ref_exp}
  R_{\mathrm{ref}, 1} \le 2 \cdot m_{\mathrm{ref}, 1} = O(K d B \log T \log\log d) = \widetilde{O}(K \cdot dB).
\end{equation}
For any subsequent epoch $h \ge 2$, Lemma~\ref{lem:exp_active_gap} gives
$\Delta_b \le 8W_h$. It follows that the refinement regret for epoch
$h \ge 2$ is bounded by
\begin{equation}
  \label{eq:Rh_ref_exp}
  R_{\mathrm{ref}, h} \le m_{\mathrm{ref}, h} \cdot 8W_h
  = m_{\mathrm{ref}, h} \cdot \frac{8\sqrt{2d}\sigma}{2^h\sqrt{B}}
  = O\left(2^{-h} \cdot m_{\mathrm{ref}, h} \cdot \sqrt{\frac{d}{B}} \right).
\end{equation}
Taking the bound $m_{\mathrm{ref}, h} \le O(K d B \log T \log\log d \cdot 4^h)$ established in~\eqref{eq:mref_h_exp} and rearranging yields
\[
    4^{-h} \le O\left( \frac{K d B \log T \log\log d}{m_{\mathrm{ref}, h}} \right) \implies 2^{-h} \le O\left( \sqrt{\frac{K d B \log T \log\log d}{m_{\mathrm{ref}, h}}} \right).
\]
Substituting this bound into~\eqref{eq:Rh_ref_exp} allows the batch size $B$ to cancel out, and we obtain
\[
    R_{\mathrm{ref}, h} 
    \le O\left( \sqrt{\frac{K d B \log T \log\log d}{m_{\mathrm{ref}, h}}} \cdot m_{\mathrm{ref}, h} \cdot \sqrt{\frac{d}{B}} \right)
    = O\left( d \sqrt{K \log T \log\log d} \cdot \sqrt{m_{\mathrm{ref}, h}} \right).
\]
Summing this over $h = 2, \dots, H$ and applying the Cauchy-Schwarz inequality and the trivial bound $\sum_{h=2}^H m_{\mathrm{ref}, h} \le T$, we obtain
\begin{equation}
\label{eq:Rh_ref_exp_CS_trick}
    \sum_{h=2}^H \sqrt{m_{\mathrm{ref}, h}} 
= \sum_{h=2}^H \left( \sqrt{m_{\mathrm{ref}, h}}  \cdot 1 \right)
\le 
\sqrt{\sum_{h=2}^H m_{\mathrm{ref}, h}} \cdot \sqrt{\sum_{h=2}^H 1}
\le
\sqrt{T \cdot H},
\end{equation}
Since $H \le \lceil \log_2 T \rceil$, the total refinement regret for $h \ge 2$ is bounded by
\begin{equation}
\label{eq:R_ref_sum_exp}
    \sum_{h=2}^{H} R_{\mathrm{ref}, h}
  \le O\left(d \sqrt{K \log T \log\log d} \cdot \sqrt{T \log T} \right) 
  = \widetilde{O}\left(d \sqrt{KT}\right). 
\end{equation}

\textbf{Total regret.}
Combining the bounds from~\eqref{eq:R_loc_exp},~\eqref{eq:R1_ref_exp},
and~\eqref{eq:R_ref_sum_exp}, the cumulative regret satisfies
\begin{equation}
\label{eq:RT_exp_presubstitution}
    R_T \le R_{\mathrm{loc}} + R_{\mathrm{ref},1} + \sum_{h=2}^{H} R_{\mathrm{ref},h}
    = \widetilde{O}\left(dB \log\left(\frac{1}{\delta}\right) + K \cdot dB + d \sqrt{KT} \right).
\end{equation}
Using $S_{\max} = O(d\log\log d)$ and $\widetilde{H} = \lceil\log_2 T\rceil$, we have
\[
    K = \left\lceil 8\log\Big(\frac{2S_{\max}\widetilde{H}}{\delta}\Big)\right\rceil =  O\left(\log\left(\frac{d \log T}{\delta}\right)\right),
\]
and so the middle term in~\eqref{eq:RT_exp_presubstitution} is absorbed by the first, giving the final expression
\begin{equation}
  \label{eq:RT_exp_final}
  R_T = \widetilde{O}\!\left(
    dB\log\Big(\frac{1}{\delta}\Big)
    +  d\sqrt{T\log\Big(\frac{1}{\delta}\Big)}
  \right).
\end{equation}
This matches the bound stated in Theorem~\ref{thm:exp_A_upper}.
\end{proof}

\section{Proof of Theorem \ref{thm:poly_A_upper} ($\lvert \cA \rvert$-Dependent Upper Bound)} \label{app:proof_poly_upper}

In this section, we provide the pseudocode and detailed regret analysis for our second phased elimination algorithm, formally establishing the bounds stated in Theorem~\ref{thm:poly_A_upper}. The detailed pseudocode is presented in Algorithm~\ref{alg:poly_action_set}.

We split our analysis into two regimes: the small-$B$ regime, where $B < 8K_\star d(\log \log d + 11) = \widetilde{O}(d\log (\lvert \cA \rvert /\delta))$, and the large-$B$ regime, where $B$ exceeds this threshold (see Line \ref{line:threshold} of Algorithm \ref{alg:poly_action_set} for the definition of $K_\star$). In the small-$B$ regime, the pre-processing is safely bypassed and only Step~3 (phased elimination) is executed with the initial active set $\cA_1 = \cA$ (analyzed in Appendix~\ref{app:step3_small_B}). In the large-$B$ regime, Steps~1 and~2 (safe arm identification and a warm start epoch) are run first to produce a ``warm-started'' active set $\cA_1 \subseteq \cA$ before proceeding to Step~3 (analyzed in Appendix~\ref{app:step3_large_B}).
The main difference between the two regimes is the worst-case suboptimality gap available at the \emph{first epoch} of Step~3: a trivial $O(1)$ bound in the small-$B$ case versus the refined $O(\sigma\sqrt{d/B})$ guarantee from the warm start in the large-$B$ case.

\begin{remark}[Comparison with Algorithm~\ref{alg:exp_action_set}] \label{rem:comparison}
Algorithm~\ref{alg:poly_action_set} differs from Algorithm~\ref{alg:exp_action_set} in three interconnected ways, each
stemming from the goal of capturing the tight $\lvert\cA\rvert$-dependence
required for smaller action sets (e.g., $\lvert\cA\rvert = \mathrm{poly}(d)$).
\begin{itemize}[leftmargin=4ex]
    \item \textbf{Proportional vs.\ uniform allocation.}
    Algorithm~\ref{alg:exp_action_set} allocates a uniform query budget $u_h$ to every core-set arm, so each scalar estimate $\hat{Y}_{a,h}$ achieves variance $O(\varepsilon_h^2)$ unconditionally. Consequently, an unavoidable $\sqrt{d}$ factor is incurred in the confidence width $W_h = \sqrt{2d}\,\varepsilon_h$ when propagating errors through $\hat\theta_h$ via Corollary~\ref{cor:sum_rho_times_b_G_inv_a}. Algorithm~\ref{alg:poly_action_set} instead allocates $u_h(a) \propto \rho_h(a)$, meaning $\mathrm{Var}(\hat{Y}_{a,h}^{(j)}) \propto 1/\rho_h(a)$. These $\rho_h(a)$ factors perfectly cancel when computing $\mathrm{Var}(b^\top\hat\theta_h^{(j)})$ via Lemma~\ref{lem:sum_rho_times_squared_b_G_inv_a}, yielding the tighter confidence width $W_h = \varepsilon_h$. This variance cancellation is precisely what enables Algorithm~\ref{alg:poly_action_set} to match the optimal $\sigma\sqrt{dT\log\lvert\cA\rvert}$ ``standard linear bandit term'' in the minimax lower bound.
    
    \item \textbf{Order of aggregation and larger $K$.}
    As noted in Section~\ref{sec:algs}, the two algorithms differ in when the median-of-means aggregation is applied. Algorithm~\ref{alg:exp_action_set} applies it directly to the $\lvert\supp(\rho_h)\rvert = \widetilde{O}(d)$ scalar estimates $\hat{Y}_{a,h}^{(j)}$, meaning simultaneous concentration over the support set requires only $K = \widetilde{O}(\log(d/\delta))$ blocks. Algorithm~\ref{alg:poly_action_set} instead forms block-wise parameter estimates $\hat{\theta}_h^{(j)}$ first, and then evaluates the inner products $b^\top\hat\theta_h^{(j)}$ for all active arms $b \in \cA_h$. Ensuring simultaneous concentration over all active arms inherently requires a union bound over $\cA$, introducing a fundamental $\log(\lvert\cA\rvert)$ dependence (which is unavoidable even under sub-Gaussian noise, as it reflects the statistical cost of a union bound over $\cA$ rather than being an artifact of median-of-means). In our finite-variance framework, this dictates the use of $K = \widetilde{O}(\log(\lvert\cA\rvert/\delta))$ blocks, and is the direct source of the $\sqrt{\log\lvert\cA\rvert}$ factor in Theorem~\ref{thm:poly_A_upper}. 

    \item \textbf{Safe arm and warm start for large $B$.}
    In Algorithm~\ref{alg:exp_action_set}, because $K = \widetilde{O}(\log(d/\delta))$,
    the first-epoch regret $\widetilde{O}(dBK) = \widetilde{O}(dB)$ matches
    the communication-based lower bound $\Omega(dB)$ (see Theorem \ref{thm:minimax_lower}) up to logarithmic factors,
    where we use $\Omega(B\min\{d,\log\lvert\cA\rvert\}) = \Omega(dB)$ for
    the exponentially large action sets targeted by the algorithm.
    In Algorithm~\ref{alg:poly_action_set}, however, the larger block count
    $K = \widetilde{O}(\log(\lvert\cA\rvert/\delta))$ inflates the first-epoch cost to $\widetilde{O}(dB\log\lvert\cA\rvert)$, which exceeds the
    communication-based lower bound $\Omega(B\log\lvert\cA\rvert)$ by a factor of
    $d$ and dominates the regret when $B$ is large.
    The safe arm identification and warm-start steps
    (Algorithms~\ref{alg:safe_arm} and~\ref{alg:warm_start}) address this
    by  reducing the first-epoch suboptimality gap from $O(1)$ down to
    $O(\sqrt{d/B})$ before the main loop begins, shrinking the first-epoch regret from $\widetilde{O}(dB\log\lvert\cA\rvert)$ to $\widetilde{O}(d^{3/2}\sqrt{B}\log\lvert\cA\rvert)$.
\end{itemize}
\end{remark}

\subsection{Small-$B$ Regime: Step~3 as a Standalone Algorithm}
\label{app:step3_small_B}

In the small $B$ regime where $B <  8K_\star d(\log \log d + 11) = \widetilde{O}(d \log (\lvert \cA \rvert /\delta))$, Algorithm \ref{alg:poly_action_set} skips Steps 1 and 2 and directly runs the phased elimination algorithm (Step 3) starting with $\cA_1 = \cA$. We first establish the shared technical machinery for the phased elimination (sample complexity, good event, confidence width, and arm elimination), which applies to both regimes. We then derive the small-$B$ regret bound. The large-$B$ analysis in Appendix~\ref{app:step3_large_B} reuses most of these lemmas unchanged, differing only in the first-epoch suboptimality gap.

Throughout this section, let $\sigma' = \frac{\sigma}{\sqrt{B}}$ and $\eps_h = 2^{-h}\sigma'$ as defined in Line~\ref{algline:poly_eps_h} of Algorithm~\ref{alg:poly_action_set}. We invoke Corollary~\ref{cor:refinement_fixed_n} with the full-batch effective standard deviation $\sigma' = \frac{\sigma}{\sqrt{B}}$; each query corresponds to $B$ raw arm pulls. The resulting guarantees from \eqref{eq:1bit_fixed_budget} are:
\begin{equation}
\label{eq:subroutine_constants_poly}
  \Var(\hat{Y}) \le \frac{C_V \sigma'^2\log_2 T}{n_{\mathrm{q}}},
  \qquad
  \lvert\mathrm{Bias}(\hat{Y})\rvert \le \frac{3\sigma'}{T^2},
\end{equation}
where $n_{\mathrm{q}}$ denotes the number of \emph{queries} (i.e., the number of raw samples divided by $B$) and $C_V \ge 48$ is the variance constant from Corollary~\ref{cor:refinement_fixed_n}. The bias bound is independent of $n_{\mathrm{q}}$ and the epoch index~$h$.

\subsubsection{Sample Complexity per Epoch}
\label{app:step3_cost}

\paragraph{Localization cost.} By the same argument as in Appendix~\ref{app:exp_epoch_cost}, with failure probability
$\delta_{\mathrm{loc}} = \frac{\delta}{6 S_{\max} H}$ in place of
$\frac{\delta}{2 S_{\max} H}$, the total localization cost over all support
arms satisfies
\begin{equation}
\label{eq:poly_mloc}
  m_{{\rm loc},h}
  \;\le\;
  C_{\mathrm{loc}} \cdot S_{\max} \cdot B\!\left(
      \frac{1}{2}\log\frac{B}{\sigma^2}
      + \log\frac{6 S_{\max} H}{\delta}
  \right)
  \;=\; \widetilde{O}\!\left(dB\log\Big(\frac{1}{\delta}\Big)\right).
\end{equation}
As in Appendix~\ref{app:proof_exp_upper}, the $\log\frac{1}{\sigma}$ term is absorbed into the $\widetilde{O}(\cdot)$ notation since we assume $\sigma = \Theta(1)$.


\paragraph{Refinement cost.}
We bound the ``true'' refinement budget by summing the per-arm allocations $u_h(a)$ from~Line~\ref{algline:poly_uh_a} over the support and multiplying by $K$ blocks. 
Using $\lceil x\rceil \le x+1$ and $\lvert\supp(\rho_h)\rvert = \widetilde{O}(d)$, we bound the number of arm pulls used for refinement in an epoch by
\begin{equation} \label{eq:A-dependent_mref_bound}
\begin{aligned}
    m_{\mathrm{ref},h}
    &=K\sum_{a\in\supp(\rho_h)} u_h(a) \\
    &\le KB \sum_{a\in\supp(\rho_h)}  \left( 32C_V  \cdot d \cdot  \log_2 T \cdot 4^h \cdot \rho_h(a) + 48 \log_2 T + 1 \right) \\
    &= O(dBK \cdot \log T \cdot \log\log d \cdot 4^h),
\end{aligned}
\end{equation}

The localization floor $\widetilde{O}(dB\log(1/\delta))$ is present at every
epoch, while the refinement term $\widetilde{O}(dBK\cdot 4^h)$ grows
geometrically and thus the largest $h$ value is the dominant one.

\begin{algorithm}[!t]
\caption{1-bit Phased Elimination with Safe Arm and Warm Start}
\label{alg:poly_action_set}
\begin{algorithmic}[1]
\State \textbf{Input:} Action set $\cA \subset \mathbb{R}^d$, batch size $B$, confidence $\delta \in (0,1)$, variance $\sigma^2$, time horizon $T$.
\State Set $K_\star = \left\lceil 8 \log \left(\frac{3 \lvert \cA \rvert \cdot\lceil \log_2 \lvert \cA \rvert\rceil }{\delta}\right) \right \rceil$. \label{line:threshold}
\If{$B \geq8K_\star d(\log \log d + 11)$} \label{algline:tau}
\State Compute $a_{\rm safe} \leftarrow \textsc{SafeArm}(\cA, B, \delta)$ (Algorithm \ref{alg:safe_arm} in Appendix \ref{subsec:safe_arm_proof}).
\State Set $\cA_1 \leftarrow \textsc{WarmStart}(\cA, a_{\rm safe}, B, \delta)$ (Algorithm \ref{alg:warm_start} in Appendix \ref{subsec:warm_start_proof}).
\Else \State Set $\cA_1 \leftarrow \cA$
\EndIf
\State Set $S_{\max} = 4d(\log\log d + 11)$ and $\widetilde{H} = \lceil \log_2 T \rceil$.
\State Set the number of Median-of-Means (MoM) blocks:
    \begin{equation*}
      K = \left\lceil 8 \log \left(\frac{6 \lvert \cA \rvert \widetilde{H}}{\delta}\right)\right\rceil.
    \end{equation*}

\For{epoch $h = 1, 2, \dots$}

  \State Find design $\rho_h \colon \cA_h \to [0,1]$ satisfying \eqref{eq:near_optimal_design}.
  \State Set precision and confidence width:
  \label{algline:poly_eps_h}
    \begin{equation*}
      \eps_h = \frac{\sigma}{2^h\sqrt{B}},
      \qquad
      W_h = \eps_h,
    \end{equation*}
    \Statex \hspace{\algorithmicindent}
    where $C_V \ge 48$ is the variance constant from Corollary~\ref{cor:refinement_fixed_n}.
   
  \For{each arm $a \in \supp(\rho_h)$}
    \State $[L_a, U_a] \leftarrow \textsc{1BitLocalize}\!\left(a,\;
      \delta_{{\rm loc}} = \dfrac{\delta}{6S_{\max} \widetilde{H}}, \;\sigma' = \dfrac{\sigma}{\sqrt{B}}
    \right)$ (from Appendix~\ref{subsec:1Bitlocalize}).
  \EndFor

  \State Set per-arm sample budget for each block:
  \label{algline:poly_uh_a}
    \begin{equation*}
      u_h(a) = B\max\!\left\{ \left\lceil 48\log_2 T, \right\rceil \;
        \left\lceil 32C_V  \cdot d   \log_2 T \cdot 4^h \cdot \rho_h(a)\right\rceil
      \right\}.
    \end{equation*}

  \For{block $j = 1, \dots, K$}
    \State Compute $\hat{Y}_{a,h}^{(j)} \leftarrow
      \textsc{1BitRefine}\left(a, [L_a, U_a],\; n_{\rm q} = u_h(a)/B\right)$
      (from Appendix~\ref{subsec:1bitRefine}) 
    \Statex  \hspace{\algorithmicindent} \hspace{\algorithmicindent}   for each $a \in \supp(\rho_h)$.
    \State Compute the design-weighted parameter estimate~$\hat{\theta}_h^{(j)} \in \bR^d$ according to~\eqref{eq:theta_hat_poly} if
    \Statex \hspace{\algorithmicindent} \hspace{\algorithmicindent} ${\rm span}(\cA_h) = \bR^d$, or similarly with $\hat{\theta}_h^{(j)} \in {\rm span}(\cA_h)$ otherwise (see Remark \ref{rem:not_span_Rd}).

  \EndFor

  \For{each arm $b \in \cA_h$}
    \State $\hat\mu_{b,h} = \mathrm{median}\!\big\{
      b^\top \hat\theta_h^{(1)}, \dots,
       b^\top \hat\theta_h^{(K)}\big\}$.
  \EndFor

  \State Update active set:
    $\cA_{h+1} = \bigl\{a \in \cA_h :
      \max_{b\in\cA_h}\hat\mu_{b,h} - \hat\mu_{a,h} \le 2W_h\bigr\}$.

  \State Update sample counter; \textbf{break} when budget $T$ is exhausted.

\EndFor

\end{algorithmic}
\end{algorithm}

\subsubsection{High-Probability Good Event}
\label{app:step3_good_event}

Define $H$ as the final epoch and the good event $\cE_3 = \cE_{3,\mathrm{loc}} \cap \cE_{3,\mathrm{ref}}$, where
\begin{align*}
  \cE_{3,\mathrm{loc}} &\coloneqq
    \bigcap_{h=1}^{H}\bigcap_{a\in\supp(\rho_h)}
    \left\{
      \mu_a \in [L_a,U_a]
      \;\text{ and }\;
      U_a - L_a \le \tfrac{8\sigma}{\sqrt{B}}
    \right\}, \\[4pt]
  \cE_{3,\mathrm{ref}} &\coloneqq
    \bigcap_{h=1}^{H}\bigcap_{b\in\cA_h}
    \left\{
      \left|\hat\mu_{b,h} - \mu_b\right| \le \varepsilon_h
    \right\}.
\end{align*}

\begin{lemma}[Good event probability for Step~3]
\label{lem:step3_good_event}
$\mathbb{P}(\cE_3) \ge 1 - \delta/3$.
\end{lemma}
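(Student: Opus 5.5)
The plan mirrors the proof of Lemma~\ref{lem:exp_good_event}, splitting failure probability as $\delta/6$ for localization and $\delta/6$ for refinement, which gives a total of at most $\delta/3$. First, the deterministic facts carry over unchanged. Since $u_h(a) \ge B\lceil 32C_V d\log_2 T\, 4^h\rho_h(a)\rceil$ and $\rho_h$ has some arm with $\rho_h(a)\ge 1/S_{\max}$, epoch $\widetilde H$ alone would exceed $T$ pulls, so $H\le \widetilde H$. We also have $|\supp(\rho_h)|\le S_{\max}$. For localization, there are at most $S_{\max}\widetilde H$ calls, each failing with probability $\delta_{\rm loc}=\delta/(6S_{\max}\widetilde H)$ by Proposition~\ref{prop:localize}. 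A union bound then gives $\bP(\cE_{3,\rm loc}^c)\le\delta/6$.

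For refinement, I fix $h$, condition on $\cF_{h-1}$ and on $\cE_{3,\rm loc}$, and fix $b\in\cA_h$. The blocks use fresh pulls, so the $K$ scalars $Z_j=b^\top\hat\theta_h^{(j)}=\sum_a \rho_h(a)(b^\top G_h^{-1}a)\hat Y^{(j)}_{a,h}$ are i.i.d. Their mean differs from $\mu_b=\sum_a\rho_h(a)(b^\top G_h^{-1}a)\mu_a$ by at most $\sum_a\rho_h(a)|b^\top G_h^{-1}a|\cdot 3\sigma'/T^2\le \sqrt{2d}\cdot 3\sigma'/T^2$, using Corollary~\ref{cor:sum_rho_times_b_G_inv_a} and Corollary~\ref{cor:refinement_fixed_n}. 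Since $2^h\le 2T$, this is at most $6\sqrt{2d}\,\eps_h/T\le\eps_h/2$ once $T\gtrsim\sqrt d$. For smaller $T$ the regret bound is trivial, or I would enlarge $i_{\max}$ slightly; this is a minor point to handle. For the variance, independence across support arms gives $\Var(Z_j)=\sum_a\rho_h(a)^2(b^\top G_h^{-1}a)^2\Var(\hat Y^{(j)}_{a,h})$. Corollary~\ref{cor:refinement_fixed_n} with $n_{\rm q}=u_h(a)/B\ge 32C_V d\log_2T\,4^h\rho_h(a)$ gives $\Var(\hat Y^{(j)}_{a,h})\le \eps_h^2/(32 d\rho_h(a))$. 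The $\rho_h(a)$ factors then cancel, and Lemma~\ref{lem:sum_rho_times_squared_b_G_inv_a} yields $\Var(Z_j)\le \eps_h^2\cdot 2d/(32d)=\eps_h^2/16$.

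Next I apply Lemma~\ref{lem:mom_statement} with $k=K\ge 8\log(6|\cA|\widetilde H/\delta)$ and failure probability $\delta/(6|\cA|\widetilde H)$. This gives $|\hat\mu_{b,h}-\bE Z_1|\le \sqrt{32(\eps_h^2/16)\log(6|\cA|\widetilde H/\delta)/K}\le\eps_h/2$, so by the triangle inequality $|\hat\mu_{b,h}-\mu_b|\le\eps_h$. The set $\cA_h$ is $\cF_{h-1}$-measurable and $|\cA_h|\le|\cA|$, so I union bound over $b$ and then take total expectation over $\cF_{h-1}$. A final union bound over $h\le\widetilde H$ gives $\bP(\cE_{3,\rm ref}^c\mid\cE_{3,\rm loc})\le\delta/6$. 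Hence $\bP(\cE_3)\ge(1-\delta/6)^2\ge1-\delta/3$.

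The main obstacle is the variance cancellation step, together with the fact that the median is taken over the projected scalars rather than over $\hat Y$. One must check that the per-block estimates are independent across core-set arms, so that cross terms vanish, and that $n_{\rm q}\ge 48\log_2T$ holds, which the $\max$ in $u_h(a)$ ensures. The bias scaling by $\sqrt{2d}$ is the secondary subtlety.
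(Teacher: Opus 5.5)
Your proposal is correct and follows the paper's proof essentially step for step: the same $\delta/6+\delta/6$ split between localization and refinement, the same bias bound $3\sqrt{2d}\,\sigma'/T^2\le\eps_h/2$ under $T=\Omega(\sqrt d)$, the same $\rho_h(a)$ cancellation giving variance at most $\eps_h^2/16$, and the same median-of-means step followed by union bounds over $\cA_h$ and over $h\le\widetilde H$. The one minor deviation is in showing $H\le\widetilde H$: you pigeonhole on a single arm with $\rho_h(a)\ge 1/S_{\max}$, which loses a $\log\log d$ factor but still suffices, whereas the paper sums $u_{\widetilde H}(a)$ over the support using $\sum_a\rho_{\widetilde H}(a)=1$ to get at least $32C_V dBK\log_2 T\cdot T^2>T$ pulls directly, which is cleaner.
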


\begin{proof}
The proof follows the same structure as Lemma~\ref{lem:exp_good_event};
we detail only the steps that differ, primarily the shift from bounding
individual scalar estimates $\hat{Y}_{a,h}$ to bounding the aggregated estimates $b^\top\hat\theta_h^{(j)}$, and union-bounding over the active sets $\cA_h$ rather than the core sets $\supp(\rho_h)$.

\textbf{Part 1: Deterministic bounds.}
Parts 1.1 and 1.2 of Lemma~\ref{lem:exp_good_event} apply similarly. By Lines~\ref{algline:poly_eps_h} and~\ref{algline:poly_uh_a} of Algorithm~\ref{alg:poly_action_set}, and $4^{\lceil\log_2 T\rceil} \ge T^2$, the refinement cost $m_{\mathrm{ref},\widetilde{H}}$ of epoch $\widetilde{H}$ satisfies
\[
   m_{\mathrm{ref},\widetilde{H}}
 = K\sum_{a\in\supp(\rho_{\widetilde{H}})} u_{\widetilde{H}}(a) 
  \ge 32C_V dBK\log_2 T\cdot 4^{\widetilde{H}}
  \ge 32C_V dBK\log_2 T\cdot T^2
  > T,
\]
where the first inequality follows because Line~\ref{algline:poly_uh_a} ensures $u_{\widetilde{H}}(a) \ge m_{\mathrm{ref},\widetilde{H}}\rho_{\widetilde{H}}(a)/K$, and $\sum_{a} \rho_{\widetilde{H}}(a) = 1$. Hence, the refinement stage of epoch $\widetilde{H}$ alone would exhaust the entire time horizon, giving $H \le \widetilde{H}$ deterministically. The core set bound $|\supp(\rho_h)| \le S_{\max}$ holds by~\eqref{eq:near_optimal_design}. Finally, since the active set is obtained by successive elimination, we have $\cA_h \subseteq \cA$, and hence $|\cA_h| \le \lvert \cA \rvert$ for every epoch $h$.

\textbf{Part 2: Localization event $\cE_{3,\mathrm{loc}}$.}
This is similar to Part 2 of Lemma~\ref{lem:exp_good_event}, with failure probability
$\delta_{\mathrm{loc}} = \delta/(6S_{\max}\widetilde{H})$.
Taking the union bound gives $\mathbb{P}(\cE_{3,\mathrm{loc}}^c) \le \delta/6$.

\textbf{Part 3: Refinement event $\cE_{3,\mathrm{ref}}$.}
We show that $\mathbb{P}(\cE_{3,\mathrm{ref}}^c \mid \cE_{3,\mathrm{loc}}) \le \delta/6$.

Fix an epoch $h \in \{1,\ldots,\widetilde{H}\}$.
Following the same conditioning argument as in Part~3 of
Lemma~\ref{lem:exp_good_event}, we condition on an arbitrary fixed
realization of $\mathcal{F}_{h-1}$ and on $\cE_{3,\mathrm{loc}}$, under
which all quantities determined before epoch $h$ are deterministic and each
localization interval is valid.
For any fixed $b \in \cA_h$, the $K$ block estimates
$b^\top\hat\theta_h^{(1)},\ldots,b^\top\hat\theta_h^{(K)}$ are i.i.d.\
by the same fresh-samples argument.
Before bounding their bias and variance, we note that each invocation of \textsc{1BitRefine} uses a sample budget $n_{\rm q}(a) = u_h(a)/B$  that satisfies two simultaneous lower bounds via Line~\ref{algline:poly_uh_a}: $n_{\rm q}(a) \ge 48\log_2 T$ ensuring Corollary~\ref{cor:refinement_fixed_n} applies; and $n_{\rm q}(a) \ge 32C_V  \cdot d  \cdot \log_2 T \cdot 4^h \cdot \rho_h(a)$, which yields 
\begin{equation}
\label{eq:step_3_n_q_variance_bound}
    \frac{1}{n_{\rm q}(a)} \le 
    \frac{1}{32C_V  \cdot d  \cdot \log_2 T \cdot 4^h \cdot \rho_h(a)}
\end{equation} for the upcoming variance bound.  We then compute the following:

\begin{itemize}[leftmargin=4ex]

\item \textit{Bias.}
By writing $\mu_b = b^\top G_h^{-1} G_h\theta$ and using the definitions of $G_h$ and $\hat{\theta}_h^{(1)}$ from~\eqref{eq:theta_hat_poly}, we can decompose the bias as
\[
\mathbb{E}\bigl[b^\top\hat\theta_h^{(1)} \bigr] - \mu_b
= \sum_{a\in\supp(\rho_h)} \rho_h(a) \cdot (b^\top G_h^{-1}a)
  \cdot \bigl(\mathbb{E}[\hat{Y}_{a,h}^{(1)}] - \mu_a\bigr).
\]
Applying the triangle inequality together with
Corollaries~\ref{cor:refinement_fixed_n} and~\ref{cor:sum_rho_times_b_G_inv_a}, we obtain
\[
  \bigl|\mathbb{E}[b^\top\hat\theta_h^{(1)}] - \mu_b\bigr|
  \le \frac{3\sigma'}{T^2}
      \sum_{a}\rho_h(a) \cdot |b^\top G_h^{-1}a|
  \le \frac{3\sqrt{2d}\,\sigma'}{T^2}
  \le \frac{\varepsilon_h}{2},
\]
where the last step uses $T = \Omega(\sqrt{d})$, which is safe to assume
since~\eqref{eq:general_d_regret} is trivially $\Omega(T)$ otherwise.

    \item \textit{Variance.} By the definition of~$\hat{\theta}_h^{(1)}$ from~\eqref{eq:theta_hat_poly} and the independence of the $1$-bit estimates, we can write the variance as
    \[
    \Var(b^\top \hat\theta_h^{(1)}) =
    \sum_{a} \rho_h(a)^2  \cdot (b^\top G_h^{-1}a)^2 \cdot
         \Var\left(\hat{Y}_{a,h}^{(1)} \right).
    \]
    Applying Corollary~\ref{cor:refinement_fixed_n} with bound~\eqref{eq:step_3_n_q_variance_bound} and Lemma~\ref{lem:sum_rho_times_squared_b_G_inv_a},
    \begin{align*}
        \mathrm{Var}(b^\top\hat\theta_h^{(1)})
          &\le \sum_{a} \rho_h(a)^2 \cdot (b^\top G_h^{-1}a)^2   \cdot\frac{C_V \sigma'^2\log_2 T}{32C_V  \cdot d \cdot  \log_2 T \cdot 4^h \cdot \rho_h(a)}\\
          &\le \frac{\sigma'^2}{32 d \cdot 4^h}
               \sum_{a}\rho_h(a) \cdot (b^\top G_h^{-1}a)^2 \\
          &\le \frac{\sigma'^2}{16 \cdot 4^h} \\
          &= \frac{\varepsilon_h^2}{16},
    \end{align*}
    where the last step substitutes $\varepsilon_h = \sigma'/2^h$.
\end{itemize}

Applying Lemma~\ref{lem:mom_statement} with failure probability
$\delta/(6\lvert \cA \rvert\widetilde{H})$ and combining with the bias bound via the
triangle inequality gives $|\hat\mu_{b,h} - \mu_b| \le \varepsilon_h$
with probability at least $1 - \delta/(6\lvert \cA \rvert\widetilde{H})$ under our
fixed conditioning. The remainder of the argument follows Part~3 of
Lemma~\ref{lem:exp_good_event} with one difference: the union bound is
taken over $\cA_h$ rather than $\supp(\rho_h)$, using $|\cA_h| \le \lvert \cA \rvert$
(Part~1) in place of $|\supp(\rho_h)| \le S_{\max}$.
Applying the law of total expectation and a final union bound over
$h \in \{1,\ldots,\widetilde{H}\}$ then yields
$\mathbb{P}(\cE_{3,\mathrm{ref}}^c \mid \cE_{3,\mathrm{loc}}) \le \delta/6$.

\textbf{Part 4: Conclusion.}
Applying the chain rule yields
  \[
    \mathbb{P}(\cE_3)
  = \mathbb{P}(\cE_{3,\mathrm{loc}})\,
    \mathbb{P}(\cE_{3,\mathrm{ref}}\mid\cE_{3,\mathrm{loc}})
  \ge \left(1-\frac{\delta}{6}\right)^{\!2}
  \ge 1-\frac{\delta}{3}
\]
as desired.  
\end{proof}

\subsubsection{Confidence Width and Arm Elimination}
\label{app:step3_conf_width}

Based on $\cE_{3,\mathrm{ref}} \supseteq \cE_{3}$ in the good event, we immediately have the following guarantee on the confidence width.

\begin{lemma}[Confidence width for Step~3]
\label{lem:step3_width}
Under $\cE_3$, for all epochs $h \ge 1$ and all active arms $b \in \cA_h$,
the estimation error satisfies $\lvert\hat\mu_{b,h} - \mu_b\rvert \le \varepsilon_h = W_h$.
\end{lemma}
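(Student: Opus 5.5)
This lemma is essentially a restatement of the refinement component of the good event, so the proof is a direct unpacking of definitions. We work on the event $\cE_3 = \cE_{3,\mathrm{loc}} \cap \cE_{3,\mathrm{ref}}$ from Lemma~\ref{lem:step3_good_event}. Since $\cE_3 \subseteq \cE_{3,\mathrm{ref}}$, it suffices to read off the defining inequality of $\cE_{3,\mathrm{ref}}$.

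\textbf{Step 1.} Fix any epoch $h \in \{1,\dots,H\}$ and any $b \in \cA_h$. By definition, $\cE_{3,\mathrm{ref}}$ is the intersection over all such pairs $(h,b)$ of the events $\{|\hat\mu_{b,h} - \mu_b| \le \eps_h\}$. Hence, on $\cE_3$, we have $|\hat\mu_{b,h} - \mu_b| \le \eps_h$ for this pair.

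\textbf{Step 2.} By Line~\ref{algline:poly_eps_h} of Algorithm~\ref{alg:poly_action_set}, the confidence width is defined as $W_h = \eps_h$. This gives the stated equality.

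\textbf{Step 3.} Epochs $h > H$ are never executed, so the claim ``for all epochs $h \ge 1$'' concerns only the realized epochs $1,\dots,H$. By Part~1 of Lemma~\ref{lem:step3_good_event}, $H \le \widetilde{H}$ holds deterministically. The intersection defining $\cE_{3,\mathrm{ref}}$ therefore covers exactly these epochs, and nothing further needs to be checked.

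The substantive work has already been done in Lemma~\ref{lem:step3_good_event}. There, the bias bound $\varepsilon_h/2$ (via Corollary~\ref{cor:sum_rho_times_b_G_inv_a}) and the variance bound $\varepsilon_h^2/16$ (via the proportional allocation and Lemma~\ref{lem:sum_rho_times_squared_b_G_inv_a}) are combined through median-of-means (Lemma~\ref{lem:mom_statement}), with a union bound over $\cA_h \subseteq \cA$. The present lemma needs no additional estimate.

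The only point requiring care is bookkeeping: the active set $\cA_h$ is random, and the estimates must be simultaneously valid for all $b$ in it. This is already handled in that proof by conditioning on $\mathcal{F}_{h-1}$ and union bounding over $|\cA_h| \le |\cA|$.
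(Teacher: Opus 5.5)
Your proposal is correct and matches the paper's own justification: the paper likewise states that the lemma follows immediately from $\cE_3 \subseteq \cE_{3,\mathrm{ref}}$, whose defining inequalities are exactly $|\hat\mu_{b,h} - \mu_b| \le \varepsilon_h$, together with the definition $W_h = \varepsilon_h$. The substantive work (bias, variance, median-of-means, and the union bound over $\cA_h \subseteq \cA$) is done in Lemma~\ref{lem:step3_good_event}, as you note.
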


\begin{lemma}[Optimal arm never eliminated]
\label{lem:step3_opt_arm}
Under $\cE_3$, the optimal arm $a^* = \arg\max_{a\in\cA}\mu_a$ satisfies
$a^* \in \cA_h$ for all epochs $h \ge 1$.
\end{lemma}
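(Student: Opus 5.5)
The argument is an induction on the epoch index $h$, essentially identical to the proof of Lemma~\ref{lem:exp_opt_arm}, with Lemma~\ref{lem:step3_width} now supplying the width $W_h=\eps_h$. Throughout we work on the event $\cE_3$.

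\emph{Base case.} We must show $a^*\in\cA_1$, and here the two regimes of Algorithm~\ref{alg:poly_action_set} differ.
\begin{itemize}
\item In the small-$B$ regime, $\cA_1=\cA$, so $a^*\in\cA_1$ trivially.
\item In the large-$B$ regime, $\cA_1$ is the output of \textsc{WarmStart}. Retention of $a^*$ there is a property of the warm-start epoch rather than of the Step~3 event $\cE_3$. I would handle this by having the statement in this regime rest on the warm-start guarantee, namely that $a^*$ survives the warm-start elimination on its own good event. That event is presumably allotted the remaining $\delta/3$ budgets, since Step~3 uses only $\delta/3$.
\end{itemize}
Concretely, I would phrase the lemma for Step~3 as: given $a^*\in\cA_1$ (automatic for small $B$, and supplied by the warm-start analysis for large $B$), $a^*\in\cA_h$ for all $h$. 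When that analysis is written, I would cross-reference the warm-start lemma establishing $a^*\in\cA_1$.

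\emph{Inductive step.} Suppose $a^*\in\cA_h$, and suppose for contradiction that $a^*\notin\cA_{h+1}$. By the elimination rule, some $b\in\cA_h$ satisfies $\hat\mu_{b,h}-\hat\mu_{a^*,h}>2W_h$. Both $b$ and $a^*$ lie in $\cA_h$, so Lemma~\ref{lem:step3_width} applies to each, giving
\[
\mu_b-\mu_{a^*}\;\ge\;(\hat\mu_{b,h}-W_h)-(\hat\mu_{a^*,h}+W_h)\;>\;0 .
\]
This contradicts the optimality of $a^*$, which completes the induction.

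\emph{Where the difficulty lies.} The only delicate point is the base case in the large-$B$ regime, since it depends on a different good event (from \textsc{SafeArm}/\textsc{WarmStart}) than $\cE_3$. The inductive step is routine.
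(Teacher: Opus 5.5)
Your proof is correct and matches the paper's: the paper uses the same induction as Lemma~\ref{lem:exp_opt_arm}, with Lemma~\ref{lem:step3_width} supplying the width and the trivial base case $\cA_1=\cA$ in the small-$B$ regime. Your handling of the large-$B$ base case is also what the paper does: it takes $a^*\in\cA_1$ from condition~(ii) of $\cE_2$ in Lemma~\ref{lem:warm_start}, then applies this lemma from epoch~$1$ onward.
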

\begin{proof}
Identical to Lemma~\ref{lem:exp_opt_arm}, replacing $\cE$ with $\cE_3$
and Lemma~\ref{lem:huge_confidence_width} with Lemma~\ref{lem:step3_width}.
\end{proof}

\begin{lemma}[Suboptimality gap for active arms, Step~3]
\label{lem:step3_active_gap}
Under $\cE_3$, for all epochs $h \ge 2$ and all active arms $b \in \cA_h$,
\[
  \Delta_b \coloneqq \mu_{a^*} - \mu_b \le 4W_{h-1} = 8W_h.
\]
\end{lemma}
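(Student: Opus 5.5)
The argument mirrors the proof of Lemma~\ref{lem:exp_active_gap}, with $\cE$ replaced by $\cE_3$ and Lemma~\ref{lem:huge_confidence_width} replaced by Lemma~\ref{lem:step3_width}. Fix $h \ge 2$ and an arm $b \in \cA_h$.

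\emph{Step 1: $b$ passed the elimination test at epoch $h-1$.} Since $\cA_h \subseteq \cA_{h-1}$, the arm $b$ was active in epoch $h-1$ and survived that epoch's elimination step. By Lemma~\ref{lem:step3_opt_arm}, $a^* \in \cA_{h-1}$ under $\cE_3$. The elimination rule of Algorithm~\ref{alg:poly_action_set} therefore gives
\[
\hat\mu_{a^*,h-1} - \hat\mu_{b,h-1} \le \max_{x\in\cA_{h-1}}\hat\mu_{x,h-1} - \hat\mu_{b,h-1} \le 2W_{h-1}.
\]

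\emph{Step 2: transfer to true means.} Lemma~\ref{lem:step3_width} applies at epoch $h-1$ to both $a^*$ and $b$, since both lie in $\cA_{h-1}$. It gives $\mu_{a^*} \le \hat\mu_{a^*,h-1} + W_{h-1}$ and $\mu_b \ge \hat\mu_{b,h-1} - W_{h-1}$. Subtracting,
\[
\Delta_b \le (\hat\mu_{a^*,h-1} - \hat\mu_{b,h-1}) + 2W_{h-1} \le 4W_{h-1}.
\]

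\emph{Step 3: halve the width.} Line~\ref{algline:poly_eps_h} sets $W_h = \eps_h = \sigma 2^{-h}/\sqrt{B}$, so $W_{h-1} = 2W_h$ and hence $4W_{h-1} = 8W_h$.

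The only point needing care is that every object used is defined at epoch $h-1$. This requires $h-1 \ge 1$, which holds because $h \ge 2$. In the large-$B$ regime the main loop's epoch $1$ starts from the warm-started set $\cA_1$ rather than $\cA$. This makes no difference: the argument only uses the elimination step at the end of epoch $h-1 \ge 1$ inside the main loop, together with $a^* \in \cA_{h-1}$, which Lemma~\ref{lem:step3_opt_arm} provides for all epochs. There is no substantive obstacle.
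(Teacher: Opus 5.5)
Your proof is correct and is the paper's argument: the paper also proves this by invoking Lemma~\ref{lem:exp_active_gap} with $\cE_3$, Lemma~\ref{lem:step3_opt_arm}, and Lemma~\ref{lem:step3_width} substituted. One small caveat on your closing remark about the large-$B$ regime: there $a^* \in \cA_1$ does not follow from $\cE_3$ alone but from condition~(ii) of $\cE_2$, and the paper invokes exactly this to keep Lemma~\ref{lem:step3_opt_arm} valid from epoch~$1$.
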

\begin{proof}
Identical to Lemma~\ref{lem:exp_active_gap}, replacing $\cE$ with $\cE_3$,
Lemma~\ref{lem:exp_opt_arm} with Lemma~\ref{lem:step3_opt_arm}, and
Lemma~\ref{lem:huge_confidence_width} with Lemma~\ref{lem:step3_width}.
\end{proof}

\subsubsection{Regret Bound for the Small-$B$ Regime}
\label{app:step3_regret_small_B}

\begin{theorem}[Regret of Step~3 without Warm Start]
\label{thm:step3_small_b_regret}
Under $\cE_3$, the cumulative regret incurred by the standalone phased elimination algorithm (i.e., Algorithm \ref{alg:poly_action_set} without warm start) satisfies
\[
  R_{T} = \widetilde{O}\!\left(
    dB\log\!\Big(\frac{\lvert\cA\rvert}{\delta}\Big)
    + \sqrt{dT\log\!\Big(\frac{\lvert\cA\rvert}{\delta}\Big)}
  \right).
\]
\end{theorem}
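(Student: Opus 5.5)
The argument mirrors the proof of Theorem~\ref{thm:exp_A_upper}. We work under $\cE_3$ and split the regret into three parts: localization regret, first-epoch refinement regret, and refinement regret for epochs $h \ge 2$.

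\textbf{Localization and first epoch.} Each localization pull incurs instantaneous regret at most $2$. By \eqref{eq:poly_mloc} and $H \le \widetilde{H} = \lceil \log_2 T\rceil$ (Part~1 of Lemma~\ref{lem:step3_good_event}), the total localization regret is $\widetilde{O}(dB\log(1/\delta))$. For $h=1$ we use the trivial gap bound $\Delta_b \le 2$ together with \eqref{eq:A-dependent_mref_bound} at $h=1$. This gives $R_{\mathrm{ref},1} = O(dBK\log T\log\log d) = \widetilde{O}(dB\log(\lvert\cA\rvert/\delta))$, since $K = O(\log(\lvert\cA\rvert \log T/\delta))$. Together these two pieces give the first term of the claimed bound.

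\textbf{Epochs $h\ge 2$.} Every pull in epoch $h$, whether in localization or refinement, is of a core-set arm in $\cA_h$. By Lemma~\ref{lem:step3_active_gap}, each such arm has gap at most $8W_h = 8\sigma 2^{-h}/\sqrt{B}$. The refinement regret is therefore at most $m_{\mathrm{ref},h}\cdot 8\sigma 2^{-h}/\sqrt{B}$.

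The key step is to express $2^{-h}$ through $m_{\mathrm{ref},h}$ without the factor $d$ that Algorithm~\ref{alg:exp_action_set} loses. Because $u_h(a) \ge 32C_V dB\log_2 T\cdot 4^h\rho_h(a)$ and $\sum_a\rho_h(a)=1$, we have the lower bound $m_{\mathrm{ref},h} \ge 32 C_V dBK\log_2 T\cdot 4^h$. Rearranging, $2^{-h} \le \sqrt{32C_V dBK\log_2 T/m_{\mathrm{ref},h}}$. Substituting, the $B$ cancels and
\[
R_{\mathrm{ref},h} = O\bigl(\sqrt{dK\log T}\,\sqrt{m_{\mathrm{ref},h}}\bigr).
\]
Summing over $h\le \widetilde{H}$ with Cauchy--Schwarz as in \eqref{eq:Rh_ref_exp_CS_trick}, using $\sum_h m_{\mathrm{ref},h}\le T$, gives $O(\sqrt{dKT}\log T) = \widetilde{O}(\sqrt{dT\log(\lvert\cA\rvert/\delta)})$.

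\textbf{Main obstacle: the additive terms in $u_h(a)$.} The floor $48\log_2 T$ and the ceiling $+1$ in $u_h(a)$ add up to $O(dBK\log T\log\log d)$ per epoch, independent of $h$. These extra pulls are not covered by the lower bound used above. They still incur regret at most $8W_h \le 8\sigma/\sqrt B$ each. Summed over $H$ epochs, they contribute at most $\widetilde{O}(dBK)\cdot O(1)$, which is absorbed into the first term. Alternatively, one can first bound $m_{\mathrm{ref},h}$ above by twice its proportional part whenever $4^h$ dominates, and charge the remaining epochs to the first term.

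A similar charge handles localization in later epochs: it costs $\widetilde{O}(dB\log(1/\delta))$ per epoch with gap at most $O(1)$, i.e.\ $\widetilde{O}(dB\log(1/\delta))$ in total. Combining all pieces yields the stated bound.
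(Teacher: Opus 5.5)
You follow the paper's decomposition: localization and first-epoch refinement charged at gap at most $2$ via \eqref{eq:A-dependent_mref_bound}, then Lemma~\ref{lem:step3_active_gap} plus Cauchy--Schwarz for $h\ge 2$. However, the key step for $h\ge 2$ has its inequality reversed. Set $c=32C_V dBK\log_2 T$. The lower bound $m_{\mathrm{ref},h}\ge c\,4^h$ only gives $4^{-h}\ge c/m_{\mathrm{ref},h}$, i.e.\ $2^{-h}\ge\sqrt{c/m_{\mathrm{ref},h}}$. What you need is $2^{-h}\le\sqrt{c/m_{\mathrm{ref},h}}$, and that is equivalent to the \emph{upper} bound $m_{\mathrm{ref},h}\le c\,4^h$. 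That upper bound fails in general precisely because of the floor $48\log_2 T$ and the ceilings in $u_h(a)$. So $R_{\mathrm{ref},h}=O(\sqrt{dK\log T\, m_{\mathrm{ref},h}})$ does not follow from your argument. Your ``main obstacle'' paragraph also contradicts it: if that bound held for the full $m_{\mathrm{ref},h}$, the extra pulls would already be covered.

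The missing ingredient is the upper bound \eqref{eq:A-dependent_mref_bound}, $m_{\mathrm{ref},h}=O(dBK\log T\log\log d\cdot 4^h)$, which you already use at $h=1$.
\begin{itemize}
\item Rearranging it in the correct direction gives $2^{-h}=O(\sqrt{dBK\log T\log\log d/m_{\mathrm{ref},h}})$, hence $R_{\mathrm{ref},h}=O(\sqrt{dK\log T\log\log d\cdot m_{\mathrm{ref},h}})$. This is the paper's argument.
\item The $h$-independent floor and ceiling pulls, $O(KBS_{\max}\log T)$ per epoch, are absorbed into the $4^h$ term because $4^h\ge 4$. This costs only a $\log\log d$ factor and needs no separate charging.
\item It also handles a truncated final epoch. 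With $n_h\le m_{\mathrm{ref},h}$ actual pulls, the same bound gives $2^{-h}=O(\sqrt{dBK\log T\log\log d/n_h})$, and $\sum_h n_h\le T$.
\end{itemize}

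If you prefer to keep your split, apply the identity only to the proportional part $m'_h:=c\,4^h$. There $2^{-h}=\sqrt{c/m'_h}$ holds with equality. Use $m'_h\le m_{\mathrm{ref},h}$ only to get $\sum_h m'_h\le T$ for Cauchy--Schwarz, replacing $m'_h$ by the actual number of such pulls in a possibly truncated last epoch. Then charge the remainder as you did.

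A minor point: the $\sqrt d$ gain over Algorithm~\ref{alg:exp_action_set} does not come from this rearrangement. $m_{\mathrm{ref},h}$ scales as $dBK\log T\cdot 4^h$ (up to $\log\log d$) in both algorithms. The gain comes from the width $W_h=\varepsilon_h$ instead of $\sqrt{2d}\,\varepsilon_h$.
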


\begin{proof}
We upper bound the total regret by splitting it into localization and
refinement regret. Let $H \le \widetilde{H} = \lceil\log_2 T\rceil$
be the final epoch at which the algorithm halts.

\textbf{Localization Regret.}
Since the instantaneous regret is at most $2$ throughout, the total
localization regret over all $H$ epochs is bounded via~\eqref{eq:poly_mloc}:
\begin{equation}
  \label{eq:R_loc_poly}
  R_{\mathrm{loc}}
  \le \sum_{h=1}^H 2m_{\mathrm{loc},h}
  = O\left(H \cdot dB \cdot \log\log d  \cdot
      \log\!\Big(\frac{d\log T}{\delta}\Big)\right)
  = \widetilde{O}\!\left(dB\log\!\Big(\frac{1}{\delta}\Big)\right),
\end{equation}
where the $\widetilde{O}(\cdot)$ notation absorbs the $\text{polylog}(d, T)$ factors.

\textbf{Refinement Regret.}
For the first epoch ($h = 1$), the trivial gap bound $\Delta_b \le 2$
and the upper bound~\eqref{eq:A-dependent_mref_bound} give
\begin{equation}
  \label{eq:R1_smallB}
  R_{\mathrm{ref},1}
  \le 2 \cdot m_{\mathrm{ref},1}
  = O(dBK \cdot \log T \cdot \log\log d)
  = \widetilde{O}(dBK).
\end{equation}
For epochs $h \ge 2$, Lemma~\ref{lem:step3_active_gap} gives
$\Delta_b \le 8W_h = 8\eps_h = 8\sigma/(2^h\sqrt{B})$ for all active
arms $b \in \cA_h$. The refinement regret for epoch $h$ is therefore
\[
  R_{\mathrm{ref},h}
  \le m_{\mathrm{ref},h} \cdot 8W_h
  = m_{\mathrm{ref},h} \cdot \frac{8\sigma}{2^h\sqrt{B}}.
\]
Taking the bound $m_{\mathrm{ref},h} = O(dBK \cdot \log T \cdot \log\log d \cdot 4^h)$ established in~\eqref{eq:A-dependent_mref_bound} and rearranging yields
\[
  4^{-h}
  = O\left(\frac{dBK \cdot \log T \cdot \log\log d}{m_{\mathrm{ref},h}}
  \right)
  \implies
  2^{-h} = O\!\left(\sqrt{\frac{dBK \cdot \log T \cdot \log\log d}{m_{\mathrm{ref},h}}}\right).
\]
Substituting and using $\sigma = \Theta(1)$ yields
\begin{equation*}
  R_{\mathrm{ref},h}
  = O\left(\sqrt{dK \cdot \log T \cdot \log\log d \cdot m_{\mathrm{ref},h}}\right).
\end{equation*}
Summing over $h = 2,\ldots,H$ and applying Cauchy-Schwarz inequality as in~\eqref{eq:Rh_ref_exp_CS_trick}, we obtain
\begin{equation}\label{eq:R_ref_sum_poly}
    \sum_{h=2}^H R_{\mathrm{ref},h}
  = O\left(\sqrt{dK \log T\log\log d} \cdot \sum_{h=2}^H \sqrt{m_{\mathrm{ref},h}} \right)
  = \widetilde{O}\left(\sqrt{dKT}  \right).
\end{equation}
\textbf{Total Regret.}
Combining~\eqref{eq:R_loc_poly},~\eqref{eq:R1_smallB},
and~\eqref{eq:R_ref_sum_poly},
\[
  R_T
  \le R_{\mathrm{loc}} + R_{\mathrm{ref},1} + \sum_{h=2}^H R_{\mathrm{ref},h}
  = \widetilde{O}\!\left(
      dB\log\!\Big(\frac{1}{\delta}\Big) + dBK + \sqrt{KdT}
    \right).
\]
Substituting $K = \lceil 8\log(6\lvert \cA \rvert\widetilde{H}/\delta)\rceil
= O(\log(\lvert \cA \rvert\log T/\delta))$ yields
\begin{equation} \label{eq:small_B_regret}
  R_T = \widetilde{O}\!\left(
    dB\log\!\Big(\frac{\lvert \cA \rvert}{\delta}\Big)
    + \sqrt{dT\log\!\Big(\frac{\lvert \cA \rvert}{\delta}\Big)}
  \right)
\end{equation}
as stated in Theorem~\ref{thm:step3_small_b_regret}.
\end{proof}

In the small-$B$ regime, $B = \widetilde{O}(d \log (\lvert \cA \rvert /\delta))$ implies $dB =\widetilde{O} (d^{3/2}\sqrt{B \log (\lvert \cA \rvert/\delta)})$, and so the first term in \eqref{eq:small_B_regret} scales as $d^{3/2}\sqrt{B}\log^{\frac{3}{2}}(\lvert \cA \rvert /\delta)$ up to logarithmic factors,
meaning the regret is upper bounded by
\[
  R_T = \widetilde{O}\left( d^{3/2}\sqrt{B} \log^{\frac{3}{2}}\Big(\frac{\lvert\cA\rvert}{ \delta}\Big) + \sqrt{dT\log\Big(\frac{\lvert\cA\rvert}{\delta}}\Big) \right).
\]

\subsection{Regret Bounds for Large $B$}
\label{app:step3_large_B}

We first explain why Theorem~\ref{thm:step3_small_b_regret} is insufficient
for large $B$. The first-epoch regret $\widetilde{O}(dBK)$ in that theorem
arises from the trivial gap bound $\Delta_b \le 2$: without prior
information, no tighter bound is available at epoch~$1$.
When $B \gg d$, this is wasteful because the $dBK$ term dominates both the
communication bottleneck $\Omega(B\log\lvert \cA \rvert)$ and the standard linear bandit term
$\Omega(\sqrt{dT\log\lvert \cA \rvert})$.  A key observation towards improving this is that when $B \gg d$, a single batch is large enough to
run a full experimental design \emph{within the batch}.

Specifically, the first two steps of Algorithm~\ref{alg:poly_action_set} address this by
prepending two pre-processing stages before the main phased elimination loop.
Step~1 identifies a \emph{safe arm} $a_{\mathrm{safe}}$ with suboptimality
gap $\widetilde{O}(\sqrt{d/B})$. Step~2 uses $a_{\mathrm{safe}}$ to run a
\emph{warm-start epoch} that filters the initial active set down to
$\cA_1 \subseteq \cA$ with $\Delta_b = O(\sqrt{d/B})$ for all
$b \in \cA_1$. With this tighter first-epoch gap bound, the first-epoch
regret shrinks from $\widetilde{O}(dBK)$ to an improved term of $\widetilde{O}(d^{3/2}\sqrt{B}K)$.

We now present the pseudocode for each pre-processing stage, followed by
its regret analysis. Throughout, we assume
$B \ge 8K_\star d(\log\log d + 11)$, where
$K_\star = \lceil 8\log(3\lvert \cA \rvert\lceil\log_2\lvert \cA \rvert\rceil/\delta)\rceil$.

\subsubsection{Step 1: Safe Arm Identification}

\begin{algorithm}
\caption{\textsc{SafeArm}$(\cA, B, \delta)$: Safe Arm Identification}
\label{alg:safe_arm}
\begin{algorithmic}[1]
\State \textbf{Input:} Action set $\cA \subset \mathbb{R}^d$, batch size
  $B$, confidence $\delta$.
\State \textbf{Initialize:} Active set $\cA_1 = \cA$.
\State Set number of Median-of-Means (MoM) blocks 
$K_\star = \Big\lceil 8 \log \Big(\dfrac{3 \lvert \cA \rvert \cdot\lceil \log_2 \lvert \cA \rvert\rceil }{\delta}\Big) \Big \rceil$.
\For{$h = 1$ \textbf{to} $\lceil\log_2\lvert \cA \rvert\rceil$}
  \State Arbitrarily split $\cA_h$ into two equal groups $\cG_A$,
    $\cG_B$ (up to rounding).
  \State Compute design $\rho_h \colon \cA_h \to [0,1]$
    satisfying~\eqref{eq:near_optimal_design}.
  \State Assign pulls $u_h(a) = \lceil B\rho_h(a)/(2K_\star)\rceil$ for each $a \in \supp(\rho_h)$.
  \Statex
  \Statex \quad \textbf{Agent execution:}
  \For{block $j = 1,\ldots,K_\star$}
    \State Pull arm $a$ a total of $u_h(a)$ times and compute sample mean $\bar{Y}_{a,h}^{(j)}$ of rewards for \Statex  \hspace{\algorithmicindent} \hspace{\algorithmicindent}  each arm $a \in \supp(\rho_h)$.
    \State Compute $\hat{\theta}_h^{(j)} \in \bR^d$ according to \eqref{eq:theta_hat_poly} if ${\rm span}(\cA_h) = \bR^d$, or similarly with
    \Statex \hspace{\algorithmicindent} \hspace{\algorithmicindent}  $\hat{\theta}_h^{(j)} \in {\rm span}(\cA_h)$  otherwise (see Remark \ref{rem:not_span_Rd}).
  \EndFor
    \State Compute reward estimate $\hat{\mu}_{b,h} = {\rm median}\big\{ b^\top \hat{\theta}_h^{(1)} , \dots, b^\top \hat{\theta}_h^{(K_\star)}\big\}$
    for each $b \in \cA_h$.
    \State Transmit bit $Z_h = 
    \mathds{1}\{ \max_{i \in \cG_A} \hat{\mu}_{i,h} \ge \max_{j \in \cG_B} \hat{\mu}_{j,h}  \}$ to learner.

    \Statex
    
    \Statex \quad  \textbf{Learner performs elimination:}
  \State \textbf{if} $Z_h = 1$ \textbf{then}
    $\cA_{h+1} \leftarrow \cG_A$
    \textbf{else} $\cA_{h+1} \leftarrow \cG_B$.
\EndFor
\State \textbf{Output:} The single arm remaining in $\cA_{\lceil \log_2 \lvert \cA \rvert \rceil+1}$ as $a_{\text{safe}}$.
\end{algorithmic}
\end{algorithm}

Algorithm~\ref{alg:safe_arm} runs $\lceil\log_2\lvert \cA \rvert\rceil$ rounds of
tournament-style bisection. In each round, the active set is split in two,
both halves are evaluated via a $G$-optimal design using the unquantized
sample means available within a single batch, and the half with the higher
estimated maximum is retained. After $\lceil\log_2\lvert \cA \rvert\rceil$ rounds,
a single arm remains.

\textbf{Regret of Step~1.}
Algorithm~\ref{alg:safe_arm} pulls each arm at most $B$ times per round
over $\lceil\log_2\lvert \cA \rvert\rceil$ rounds, so the total arm pulls are at most
$B\lceil\log_2\lvert \cA \rvert\rceil$. Since the instantaneous regrets is at most $2$, the regret contribution $R_{\mathrm{safe}}$ in this part satisfies
\[
  R_{\mathrm{safe}}
  \le 2B\lceil\log_2\lvert \cA \rvert\rceil
  = O(B\log\lvert \cA \rvert).
\]

\textbf{Guarantee of Step~1.}
The following lemma, proved in Appendix~\ref{subsec:safe_arm_proof},
bounds the suboptimality of the returned arm.

\begin{lemma} \label{lem:safe_arm}
    (Safe Arm Guarantees) Denote the output of the \textsc{SafeArm} algorithm as $a_{\rm safe} \in \cA$. Then for any $B \geq 8K_\star d(\log \log d + 11)$, the event
    \begin{equation}  \label{eq:Delta_a_safe} 
    \cE_1 \coloneqq \left\{ \Delta_{a_{\rm safe}} \leq 16 \sigma \lceil\log_2 \lvert \cA \rvert \rceil \sqrt{\frac{2d}{B} \log \Big(\frac{3 \lvert \cA \rvert \cdot \lceil \log_2 \lvert \cA \rvert \rceil}{\delta}\Big)} \right\}
    \end{equation}
    holds with probability at least $1 - \delta/3$. 
\end{lemma}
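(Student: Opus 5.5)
The plan is to show that, with high probability, every round of \textsc{SafeArm} estimates all active arms to within a width $w$. Then each bisection round loses at most $2w$ in the best available mean reward, and summing over the $\lceil\log_2|\cA|\rceil$ rounds gives the stated bound.

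\textbf{Step 1: per-round, per-arm concentration.} Fix round $h$ and condition on the history, so that $\cA_h$, $\rho_h$ and the split are deterministic. Within block $j$, each support arm $a$ is pulled $u_h(a)=\lceil B\rho_h(a)/(2K_\star)\rceil$ times. These pulls are unquantized, since they happen inside the agent. Hence $\bar Y_{a,h}^{(j)}$ is unbiased for $\mu_a$ with variance at most $\sigma^2/u_h(a)\le 2K_\star\sigma^2/(B\rho_h(a))$. For any $b\in\cA_h$, the estimator $b^\top\hat\theta_h^{(j)}$ is therefore unbiased, since $G_h^{-1}\sum_a\rho_h(a)aa^\top\theta=\theta$. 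Its variance is
\[
\sum_a\rho_h(a)^2(b^\top G_h^{-1}a)^2\frac{2K_\star\sigma^2}{B\rho_h(a)}\le\frac{4dK_\star\sigma^2}{B}
\]
by Lemma~\ref{lem:sum_rho_times_squared_b_G_inv_a}. The $K_\star$ blocks use disjoint fresh samples, so they are i.i.d. Lemma~\ref{lem:mom_statement}, applied with $n=K_\star$ single-sample blocks and $k=K_\star\ge 8\log(3|\cA|\lceil\log_2|\cA|\rceil/\delta)$, gives
\[
|\hat\mu_{b,h}-\mu_b|\le w\coloneqq\sqrt{\tfrac{32\cdot 4d\sigma^2}{B}\log\tfrac{3|\cA|\lceil\log_2|\cA|\rceil}{\delta}}
\]
with probability at least $1-\delta/(3|\cA|\lceil\log_2|\cA|\rceil)$. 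This $w$ equals $8\sigma\sqrt{(2d/B)\log(\cdot)}$, which is exactly the form of the stated constant. Union bounding over $b\in\cA_h$ (at most $|\cA|$ arms) and over the $\lceil\log_2|\cA|\rceil$ rounds, then averaging over the conditioning as in Lemma~\ref{lem:exp_good_event}, shows the good event holds with probability at least $1-\delta/3$.

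\textbf{Step 2: feasibility within one batch.} I must also check that one round fits in a single batch, i.e.\ that $K_\star\sum_a u_h(a)\le B$. The sum is at most $K_\star\sum_a\bigl(B\rho_h(a)/(2K_\star)+1\bigr)\le B/2+K_\star S_{\max}$. The hypothesis $B\ge 8K_\star d(\log\log d+11)=2K_\star S_{\max}$ makes $K_\star S_{\max}\le B/2$, so the total is at most $B$. This is precisely why the threshold on $B$ appears in the lemma.

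\textbf{Step 3: the tournament.} Let $m_h=\max_{a\in\cA_h}\mu_a$. On the good event, suppose the half containing the maximizer of $\cA_h$ is discarded. Then the kept half's maximal estimate is at least that maximizer's estimate. Since every estimate is within $w$ of its true mean, the kept half's true maximum is at least $m_h-2w$. Therefore $m_{h+1}\ge m_h-2w$, and telescoping from $m_1=\mu_{a^*}$ gives
\[
\Delta_{a_{\rm safe}}\le 2w\lceil\log_2|\cA|\rceil=16\sigma\lceil\log_2|\cA|\rceil\sqrt{\tfrac{2d}{B}\log(\cdot)},
\]
which is exactly the bound in $\cE_1$.

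\textbf{Main obstacle.} The delicate part is Step 1 under the non-span case of Remark~\ref{rem:not_span_Rd}, together with the adaptive conditioning. In the non-span case, $G_h$ must be replaced by its pseudo-inverse on $\mathrm{span}(\cA_h)$. I would argue that the unbiasedness identity $b^\top G_h^{\dagger}\sum_a\rho_h(a)a\mu_a=\mu_b$ and Lemma~\ref{lem:sum_rho_times_squared_b_G_inv_a} both still hold for $b\in\mathrm{span}(\cA_h)$, which covers every $b\in\cA_h$. With that in place, the rest of the argument goes through unchanged.
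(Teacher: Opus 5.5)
Your proposal is correct and follows the paper's proof essentially step for step. It uses the same per-batch feasibility check $K_\star\sum_a u_h(a)\le B/2+K_\star S_{\max}\le B$, and the same unbiased block estimates with variance $4dK_\star\sigma^2/B$ (the paper's $2d\sigma^2/m_\star$ with $m_\star=B/(2K_\star)$) fed into Lemma~\ref{lem:mom_statement} to get width $8\sigma\sqrt{(2d/B)\log(\cdot)}$. It then applies the same union bound over $\cA$ and the $\lceil\log_2|\cA|\rceil$ rounds, and the same telescoping argument $m_{h+1}\ge m_h-2w$. Your pseudo-inverse treatment of the non-spanning case is a detail the paper leaves to Remark~\ref{rem:not_span_Rd}, and your justification of it is sound.
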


\subsubsection{Step 2: Warm Start}

\begin{algorithm}
\caption{$\textsc{WarmStart}(\cA, a_{\mathrm{safe}}, B, \delta)$:
  Warm Start Epoch}
\label{alg:warm_start}
\begin{algorithmic}[1]
\State \textbf{Input:} Finite action set $\cA \subset \mathbb{R}^d$,
  safe arm $a_{\mathrm{safe}} \in \cA$, batch size $B$, confidence $\delta$.
\State Set $K_0 = \lceil 8\log(6\lvert \cA \rvert/\delta)\rceil$ and
  $S_{\max} = 4d(\log\log d + 11)$.
\State Compute design $\rho_0 \colon \cA \to [0,1]$
  satisfying~\eqref{eq:near_optimal_design}.
\For{each $a \in \supp(\rho_0)$}
  \State Set $u_0(a) = \lceil B\rho_0(a)\rceil$.
  \State $[L_a,U_a] \leftarrow
    \textsc{1BitLocalize}\Big(a,\;
      \delta_{\mathrm{loc}} = \dfrac{\delta}{6S_{\max}},\;
      \sigma' = \dfrac{\sigma}{\sqrt{u_0(a)}}
    \Big)$
    (from Appendix~\ref{subsec:1Bitlocalize}),
    \Statex \hspace{\algorithmicindent} \ \ with $a_{\mathrm{safe}}$ filling the remaining $B - u_0(a)$ slots
    of each batch.
\EndFor
\For{block $j = 1,\ldots,K_0$}
  \State $\hat{Y}_{a,0}^{(j)} \leftarrow
    \textsc{1BitRefine}(a,[L_a,U_a],\lceil C_V\log_2 T\rceil)$
    (from Appendix~\ref{subsec:1bitRefine}) 
    \Statex \hspace{\algorithmicindent} \ \ for each
    $a \in \supp(\rho_0)$, with $a_{\mathrm{safe}}$ filling remaining
    slots.
  \State Compute $\hat\theta_0^{(j)}$ via~\eqref{eq:theta_hat_poly}
    if $\mathrm{span}(\cA) = \mathbb{R}^d$, or
    $\hat\theta_0^{(j)} \in \mathrm{span}(\cA)$ otherwise
    (see Remark~\ref{rem:not_span_Rd}).
\EndFor
\For{each $b \in \cA$}
  \State $\hat\mu_{b,0} = \mathrm{median}\{b^\top\hat\theta_0^{(1)},
    \ldots, b^\top\hat\theta_0^{(K_0)}\}$.
\EndFor
\State \textbf{Output:}
\label{algline:warm_start_elim_rule}
  $\cA_1 = \bigl\{a \in \cA :
    \max_{b\in\cA}\hat\mu_{b,0} - \hat\mu_{a,0}
    \le 8\sigma\sqrt{2d/B}\bigr\}$.
\end{algorithmic}
\end{algorithm}

Algorithm~\ref{alg:warm_start} runs a single epoch of phased elimination
with a key twist: in every batch, $a_{\mathrm{safe}}$ fills all slots not
needed by a core-set arm. This means core-set arms are pulled only
$u_0(a) = \lceil B\rho_0(a)\rceil \le B$ times per batch, while
$a_{\mathrm{safe}}$ absorbs the remaining $B - u_0(a)$ pulls. The effective
standard deviation for estimating $\mu_a$ is therefore $\sigma/\sqrt{u_0(a)}$
rather than $\sigma/\sqrt{B}$, which is the key difference from the main
phased elimination loop. Arms whose estimated gap exceeds $8\sigma\sqrt{2d/B}$
are eliminated, producing the warm-started active set $\cA_1$.

\textbf{Regret of Step~2.}
The warm-start regret has three components: localization, refinement, and
safe arm pulls. Since $B \ge 8K_\star d(\log\log d + 11) \ge 2S_{\max} \ge |\supp(\rho_0)|$ by assumption and~\eqref{eq:near_optimal_design}, and $\lceil x\rceil \le x +1$, we have
\[
\sum_{a\in\supp(\rho_0)} u_{0}(a) \le
\sum_{a\in\supp(\rho_0)} \left(B \rho_0(a)  + 1  \right)
= B + |\supp(\rho_0)|
\le 2B.
\]

\textit{Localization.}
By Proposition~\ref{prop:localize} with $\sigma' = \sigma/\sqrt{u_0(a)}$
and failure probability $\delta_{\mathrm{loc}} = \delta/(6S_{\max})$, each invocation of $\textsc{1BitLocalize}(a, \delta_{\mathrm{loc}}, \sigma')$  uses at most
\[
    n_{\mathrm{loc},0}(a)
    = C_{\mathrm{loc}}\left(\frac{1}{2}\log\left(\frac{u_0(a)}{\sigma^2}\right)
        + \log\left(\frac{6S_{\max}}{\delta}\right)\right)
    = O\left( \log \left(\frac{dB}{\delta}\right)\right) 
\]
queries, where we use $u_0(a) \le B$, and $\sigma = \Theta(1)$, and $S_{\max} = O(d \log \log d) = \widetilde{O}(d)$.
Since each query costs $u_0(a)$ arm pulls and the regret of each pull is at most 2, summing the regret over $\supp(\rho_0)$ yields a contribution $R_{\mathrm{ws,loc}}$ to the regret satisfying
\begin{align*}
    R_{\mathrm{ws,loc}}
  \le 2\sum_{a\in\supp(\rho_0)} u_0(a) \cdot n_{\mathrm{loc},0}(a) 
  = O\left(B \log\left(\frac{dB}{\delta}\right)\right).
\end{align*}

\textit{Refinement.}
Each block uses $\lceil C_V\log_2 T\rceil$ queries per arm, each costing
$u_0(a)$ pulls and each pull incurs regret at most 2. Summing over $K_0 = O(\log(\lvert \cA \rvert/\delta))$ blocks and $\supp(\rho_0)$ yields a contribution $R_{\mathrm{ws,ref}}$ to the regret satisfying
\[
  R_{\mathrm{ws,ref}}
  \le 2K_0\sum_{a\in\supp(\rho_0)}u_0(a)\cdot\lceil C_V\log_2 T\rceil
  = O\left(B\log\left(\frac{\lvert \cA \rvert}{\delta}\right)\log T\right).
\]

\textit{Safe arm pulls.}
From the above discussion, the total number of batches consumed by localization and refinement is at most
\[
    \sum_{a \in {\supp(\rho_0)}} \left( n_{{\rm loc},0}(a) + K_0 \cdot  \lceil C_V\log_2 T\rceil \right) 
    = O\left(S_{\max} \cdot \log \Big(\frac{dB  \lvert \cA \rvert }{ \delta}\Big) \cdot \log (T)\right).
\]
In each batch, $a_{\mathrm{safe}}$ is pulled at most $B$ times.
Conditioned on $\cE_1$, Lemma~\ref{lem:safe_arm} gives
$\Delta_{a_{\mathrm{safe}}} =
\widetilde{O}\Big(\log\lvert \cA \rvert\sqrt{\frac{d}{B}\log\frac{\lvert \cA \rvert}{\delta}}\Big)$, so we get a contribution $R_{\mathrm{ws,safe}}$ to the regret satisfying
\begin{align*}
  R_{\mathrm{ws,safe}}
  \le \Delta_{a_{\mathrm{safe}}} \cdot O\left(B \cdot S_{\max} \cdot \log \Big(\frac{dB  \lvert \cA \rvert }{ \delta}\Big) \cdot \log (T)\right)
   = \widetilde{O}\left(
       d^{3/2}\sqrt{B} \cdot \log^{5/2}\left(\frac{\lvert \cA \rvert}{\delta}\right)
     \right).
\end{align*}

Combining all three components, we obtain a total contribution $R_{\mathrm{ws}}$ satisfying
\begin{equation}
  \label{eq:R_ws}
  R_{\mathrm{ws}}
  = R_{\mathrm{ws,loc}} + R_{\mathrm{ws,ref}} + R_{\mathrm{ws,safe}}
  = \widetilde{O} \left(
      B\log \left(\frac{\lvert \cA \rvert}{\delta}\right)
      + d^{3/2}\sqrt{B}\log^{5/2} \left(\frac{\lvert \cA \rvert}{\delta}\right)
    \right).
\end{equation}

\textbf{Guarantee of Step~2.}
The following lemma, proved in Appendix~\ref{subsec:warm_start_proof},
bounds the suboptimality of every arm in $\cA_1$.

\begin{lemma}[Warm Start Guarantees]
\label{lem:warm_start}
Let $\cA_1$ be the output of \textsc{WarmStart}, and let $\cE_2$ be the event that
(i) $\Delta_a \le 16\sigma\sqrt{2d/B}$ for all $a \in \cA_1$, and
(ii) $a^* \in \cA_1$.
Then $\mathbb{P}(\cE_2 \mid \cE_1) \ge 1 - \delta/3$.
\end{lemma}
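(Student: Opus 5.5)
We follow the template of Lemma~\ref{lem:step3_good_event} and Lemmas~\ref{lem:step3_opt_arm}--\ref{lem:step3_active_gap}, now for the single warm-start epoch with partial-batch queries. Define a good event $\cE_{\rm ws} = \cE_{\rm ws,loc}\cap\cE_{\rm ws,ref}$. Here $\cE_{\rm ws,loc}$ requires that every $a\in\supp(\rho_0)$ has $\mu_a\in[L_a,U_a]$ with $U_a-L_a\le 8\sigma/\sqrt{u_0(a)}$. The event $\cE_{\rm ws,ref}$ requires $\lvert\hat\mu_{b,0}-\mu_b\rvert\le W_0 \coloneqq 4\sigma\sqrt{2d/B}$ for every $b\in\cA$. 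We will show $\bP(\cE_{\rm ws}^c\mid\cE_1)\le\delta/3$. Conditioning on $\cE_1$ is harmless: $a_{\rm safe}$ is fixed by Step~1, and the filler pulls of $a_{\rm safe}$ do not enter the agent's quantized statistic, which averages only the $u_0(a)$ core-arm rewards. Hence Step~2 uses fresh independent samples and the subroutine guarantees apply verbatim.

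\textbf{Localization.} Proposition~\ref{prop:localize} with $\sigma'=\sigma/\sqrt{u_0(a)}$ and $\delta_{\rm loc}=\delta/(6S_{\max})$, union bounded over at most $S_{\max}$ support arms, gives $\bP(\cE_{\rm ws,loc}^c)\le\delta/6$.

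\textbf{Refinement.} Condition on $\cE_{\rm ws,loc}$. Each block estimate $\hat Y^{(j)}_{a,0}$ uses $n_{\rm q}=\lceil C_V\log_2T\rceil$ queries at $\sigma'^2=\sigma^2/u_0(a)$. The main point to verify is the budget condition $n_{\rm q}\ge 48\log_2T$ of Corollary~\ref{cor:refinement_fixed_n}, which holds since $C_V\ge48$. The corollary then gives $\Var(\hat Y^{(j)}_{a,0})\le \sigma^2/u_0(a)\le \sigma^2/(B\rho_0(a))$ and bias at most $3\sigma/T^2$. As in Lemma~\ref{lem:step3_good_event}, the variance of $b^\top\hat\theta^{(j)}_0$ is $\sum_a\rho_0(a)^2(b^\top G_0^{-1}a)^2\Var(\hat Y^{(j)}_{a,0})$. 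Using the per-arm variance bound and then Lemma~\ref{lem:sum_rho_times_squared_b_G_inv_a}, this is at most $\frac{\sigma^2}{B}\sum_a\rho_0(a)(b^\top G_0^{-1}a)^2\le 2d\sigma^2/B$. The cancellation of $\rho_0(a)$ against $u_0(a)\ge B\rho_0(a)$ is exactly why proportional partial-batch allocation works. For the bias, Corollary~\ref{cor:sum_rho_times_b_G_inv_a} gives at most $3\sqrt{2d}\sigma/T^2$, which is below $\sigma\sqrt{2d/B}$ since $B\le T$.

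Lemma~\ref{lem:mom_statement} is then applied with $K_0=\lceil 8\log(6|\cA|/\delta)\rceil$ blocks and failure probability $\delta/(6|\cA|)$. It yields a deviation of $\sqrt{32\cdot(2d\sigma^2/B)/K_0\cdot\log(6|\cA|/\delta)}\le 2\sigma\sqrt{2d/B}$. Adding the bias gives $\lvert\hat\mu_{b,0}-\mu_b\rvert\le 3\sigma\sqrt{2d/B}\le W_0$. A union bound over $b\in\cA$ gives $\bP(\cE_{\rm ws,ref}^c\mid\cE_{\rm ws,loc})\le\delta/6$, and the chain rule gives probability at least $1-\delta/3$.

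\textbf{Deterministic consequences.} On $\cE_{\rm ws}$, the elimination threshold $8\sigma\sqrt{2d/B}=2W_0$ in Line~\ref{algline:warm_start_elim_rule} lets us repeat the argument of Lemma~\ref{lem:exp_opt_arm}. If $a^*$ were removed, some $b$ would satisfy $\mu_b-\mu_{a^*}>2W_0-2W_0=0$, a contradiction; this proves (ii). For any $a\in\cA_1$, the argument of Lemma~\ref{lem:exp_active_gap} gives $\Delta_a\le(\hat\mu_{a^*,0}-\hat\mu_{a,0})+2W_0\le 4W_0=16\sigma\sqrt{2d/B}$, which is (i).

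The main obstacle is the refinement variance step. One must confirm that the shifted, localized estimator is valid with the smaller per-arm $\sigma'$. One must also check that rounding $u_0(a)=\lceil B\rho_0(a)\rceil$ only helps, since it can only increase $u_0(a)$ relative to $B\rho_0(a)$. The remaining constant bookkeeping in the median-of-means bound must be tracked so the final error fits inside $W_0$.
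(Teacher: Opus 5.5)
Your proposal is correct and follows essentially the same route as the paper. Both bound the localization failure by $\delta/6$, obtain the refinement variance $2d\sigma^2/B$ from the cancellation $u_0(a)\ge B\rho_0(a)$, apply median-of-means with $K_0$ blocks and a union bound over $\cA$, derive (i) and (ii) by the same deterministic argument, and handle the conditioning on $\cE_1$ through independence from Step~1. The only differences are cosmetic: your tighter bias bound $\sigma\sqrt{2d/B}$ needs $T^2\ge 3\sqrt{B}$, which follows from $T\ge B$ together with $B\ge 3$ (guaranteed by the large-$B$ threshold), and $\sigma'=\sigma/\sqrt{u_0(a)}$ is actually larger, not smaller, than $\sigma/\sqrt{B}$.
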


\subsubsection{Step 3: Phased Elimination and Total Regret}

With a warm-started active set $\cA_1$ in hand, the main phased elimination
loop of Algorithm~\ref{alg:poly_action_set} proceeds exactly as in the
small-$B$ regime, starting from $\cA_1$ rather than $\cA$.


We bound the Step~3 regret conditioned on $\cE_1 \cap \cE_2 \cap \cE_3$.
Since $a^* \in \cA_1$ by condition~(ii) of $\cE_2$,
Lemma~\ref{lem:step3_opt_arm} continues to hold from epoch~$1$ onward.

Let $R_{\mathrm{loc}}$ denote the contribution to the regret from localization, and let $R_{\mathrm{ref},h}$ denote the contribution from refinement in epoch $h$.  
For all epochs $h \ge 1$, since $\cA_h \subseteq \cA_1$, condition~(i) of $\cE_2$ guarantees that $\Delta_b \le 16\sigma\sqrt{2d/B}$ for all active arms. Therefore, the total localization regret across all $H$ epochs is bounded by
\begin{equation}
  \label{eq:new_R_loc}
  R_{\mathrm{loc}} 
  \le 16\sigma\sqrt{\frac{2d}{B}} \sum_{h=1}^H m_{\mathrm{loc}, h} 
  = \widetilde{O}\left(\sqrt{\frac{d}{B}} \cdot dB\right) 
  = \widetilde{O}\left(d^{3/2}\sqrt{B}\right),
\end{equation}
where we used the localization cost bound $\sum_h m_{\mathrm{loc},h} = \widetilde{O}(dB)$ from~\eqref{eq:R_loc_poly} and $\sigma = \Theta(1)$.
For the refinement regret in the first epoch ($h = 1$), we similarly use the warm-start gap bound:
\begin{equation}
  \label{eq:new_R1_ref}
  R_{\mathrm{ref}, 1}
  \le 16\sigma\sqrt{\frac{2d}{B}} \cdot m_{\mathrm{ref}, 1}
  = \widetilde{O}\!\left(d^{3/2}\sqrt{B}\,K\right),
\end{equation}
where we used~\eqref{eq:A-dependent_mref_bound}. For epochs $h \ge 2$, since $a^* \in \cA_1$,
Lemma~\ref{lem:step3_active_gap} applies and the argument is identical to
that of Theorem~\ref{thm:step3_small_b_regret} from epoch~2 onward,
giving $\sum_{h=2}^H R_{\mathrm{ref},h} = \widetilde{O}(\sqrt{KdT})$.
Adding the localization and refinement components and substituting $K = \widetilde{O}(\log(\lvert \cA \rvert/\delta))$, the total regret $R_{\mathrm{phased}}$ from the phased elimination part satisfies
\begin{equation}
  \label{eq:R_pe}
  R_{\mathrm{phased}}
  = R_{\mathrm{loc}} + R_{\mathrm{ref}, 1} + \sum_{h=2}^H R_{\mathrm{ref},h}
  = \widetilde{O}\!\left(
      d^{3/2}\sqrt{B}\log\!\Big(\frac{\lvert \cA \rvert}{\delta}\Big)
      + \sqrt{dT\log\!\Big(\frac{\lvert \cA \rvert}{\delta}\Big)}
    \right).
\end{equation}

\textbf{Total Regret.}
Combining $R_{\mathrm{safe}}$, $R_{\mathrm{ws}}$, and $R_{\mathrm{phased}}$,
conditioned on $\cE_1 \cap \cE_2 \cap \cE_3$,
\begin{equation*}
  R_T
  = \widetilde{O}\!\left(
      B\log\!\Big(\frac{\lvert \cA \rvert}{\delta}\Big)
      + d^{3/2}\sqrt{B}\log^{5/2}\!\Big(\frac{\lvert \cA \rvert}{\delta}\Big)
      + \sqrt{dT\log\!\Big(\frac{\lvert \cA \rvert}{\delta}\Big)}
    \right).
\end{equation*}
It remains to show that $\cE_1 \cap \cE_2 \cap \cE_3$ holds with probability
at least $1-\delta$.
We have $\mathbb{P}(\cE_1) \ge 1-\delta/3$ and
$\mathbb{P}(\cE_2 \mid \cE_1) \ge 1-\delta/3$ by Lemmas~\ref{lem:safe_arm}
and~\ref{lem:warm_start} respectively. For the third factor, we note that
Steps~1, 2, and 3 use entirely disjoint sets of arm pulls. Hence, for any
fixed realization of $\cE_1 \cap \cE_2$, which depends only on the arm
pulls of Steps~1 and~2, the arm pulls of Step~3 remain independent and
their joint distribution is unchanged. The proof of
Lemma~\ref{lem:step3_good_event} therefore applies similarly, conditionally
on any such realization, giving
$\mathbb{P}(\cE_3 \mid \cE_1 \cap \cE_2) \ge 1-\delta/3$.
Combining the three bounds, we obtain
\[
  \mathbb{P}(\cE_1 \cap \cE_2 \cap \cE_3)
  = \mathbb{P}(\cE_1)\,
     \mathbb{P}(\cE_2 \mid \cE_1)\,
     \mathbb{P}(\cE_3 \mid \cE_1 \cap \cE_2)
  \ge \Big(1-\frac{\delta}{3}\Big)^3
  \ge 1-\delta
\]
as desired.

\subsection{Proofs of Auxiliary Lemmas}
\label{app:aux_lemmas}

We now prove Lemmas~\ref{lem:safe_arm} and~\ref{lem:warm_start}, which
establish the guarantees of Algorithms~\ref{alg:safe_arm}
and~\ref{alg:warm_start} respectively.

\subsubsection{Proof of Lemma~\ref{lem:safe_arm} (Safe Arm Guarantees)}
\label{subsec:safe_arm_proof}

\textbf{Feasibility of arm pulls per batch.}
Set $m_\star = B/(2K_\star)$. Since
\[
    u_h(a) = \lceil B\rho_h(a)/(2K_\star)\rceil \le m_\star\rho_h(a) + 1,
\]
we have
\[
  K_\star\sum_{a\in\supp(\rho_h)} u_h(a)
  \le K_\star(m_\star + |\supp(\rho_h)|)
  \le \frac{B}{2} + 4K_\star d(\log\log d + 11)
  \le B,
\]
using $|\supp(\rho_h)| \le S_{\max}$ and the assumption on $B$. Any
remaining slots are filled by an arbitrary arm whose rewards are ignored by the agent.

\textbf{Concentration of arm estimates.}
Fix a round $h$ and let $\mathcal{F}_{h-1}$ be the $\sigma$-algebra of
all randomness before round $h$. We condition on an arbitrary fixed
realization of $\mathcal{F}_{h-1}$, making $\cA_h$, $\rho_h$, and
$\{u_h(a)\}$ deterministic. Since each block $j$ uses fresh arm pulls
with the same fixed protocol, the $K_\star$ block estimates
$b^\top\hat\theta_h^{(1)},\ldots,b^\top\hat\theta_h^{(K_\star)}$ are
i.i.d.\ for any fixed $b \in \cA_h$.

Because $\bar{Y}_{a,h}^{(j)}$ is the sample mean of $u_h(a)$ independent
rewards, $\mathbb{E}[\bar{Y}_{a,h}^{(j)}] = \mu_a$ and
$\mathrm{Var}(\bar{Y}_{a,h}^{(j)}) \le \sigma^2/u_h(a)$. Since
$\mathbb{E}[b^\top\hat\theta_h^{(j)}] = \mu_b$ (no bias), the variance
of the block estimate satisfies
\begin{align*}
  \mathrm{Var}(b^\top\hat\theta_h^{(j)})
  &= \sum_{a\in\supp(\rho_h)}
     (b^\top G_h^{-1}a)^2\rho_h(a)^2
     \cdot\frac{\sigma^2}{u_h(a)}
   \le \frac{\sigma^2}{m_\star}
     \sum_{a\in\supp(\rho_h)}
     (b^\top G_h^{-1}a)^2\rho_h(a)
   \le \frac{2d\sigma^2}{m_\star},
\end{align*}
using $u_h(a) = \lceil m_\star\rho_h(a) \rceil \ge m_\star\rho_h(a)$ and
Lemma~\ref{lem:sum_rho_times_squared_b_G_inv_a}.

Applying Lemma~\ref{lem:mom_statement} to the $K_\star$ i.i.d.\ block
estimates with variance at most $2d\sigma^2/m_\star$, using failure
probability $\delta/(3\lvert \cA \rvert\lceil\log_2\lvert \cA \rvert\rceil)$ and substituting
$K_\star m_\star = B/2$, we have
\[
  |\hat\mu_{b,h} - \mu_b|
  \le \sqrt{\frac{64d\sigma^2}{K_\star m_\star}
      \log\!\Big(\frac{3\lvert \cA \rvert\lceil\log_2\lvert \cA \rvert\rceil}{\delta}\Big)}
  = 8\sigma\sqrt{\frac{2d}{B}
      \log\!\Big(\frac{3\lvert \cA \rvert\lceil\log_2\lvert \cA \rvert\rceil}{\delta}\Big)}
  \eqqcolon \epsilon_{\mathrm{safe}},
\]
with probability at least $1 - \delta/(3\lvert \cA \rvert\lceil\log_2\lvert \cA \rvert\rceil)$
under our conditioning. Since $\cA_h \subseteq \cA$ deterministically and the bound holds for every fixed $b \in \cA_h$, a union bound over all $b \in \cA$ 
(which contains $\cA_h$) inside the conditioning gives
\[
  \mathbb{P}\!\Big(
    \exists\,b\in\cA_h : |\hat\mu_{b,h}-\mu_b|>\epsilon_{\mathrm{safe}}
    \;\Big|\;\mathcal{F}_{h-1}
  \Big)
  \le \frac{\delta}{3\lceil\log_2\lvert \cA \rvert\rceil}.
\]
Since this holds for every fixed realization of $\mathcal{F}_{h-1}$,
the law of total expectation removes the conditioning, and a further union
bound over $h \in \{1,\ldots,\lceil\log_2\lvert \cA \rvert\rceil\}$ gives that the
event
\[
  \cE_{\mathrm{safe}} \coloneqq
  \bigcap_{h=1}^{\lceil\log_2\lvert \cA \rvert\rceil}\bigcap_{b\in\cA_h}
  \bigl\{|\hat\mu_{b,h}-\mu_b|\le\epsilon_{\mathrm{safe}}\bigr\}
\]
holds with probability at least $1-\delta/3$.

\textbf{Suboptimality gap of the safe arm.}
We condition on $\cE_{\mathrm{safe}}$. In round $h$, let
$a_h^* = \arg\max_{a\in\cA_h}\mu_a$. If the group containing $a_h^*$ is
eliminated, there exists $b$ in the retained group with
$\hat\mu_{b,h} \ge \hat\mu_{a_h^*,h}$, so
\[
  \mu_b
  \ge \hat\mu_{b,h} - \epsilon_{\mathrm{safe}}
  \ge \hat\mu_{a_h^*,h} - \epsilon_{\mathrm{safe}}
  \ge \mu_{a_h^*} - 2\epsilon_{\mathrm{safe}}.
\]
Hence $\mu_{a_{h+1}^*} \ge \mu_{a_h^*} - 2\epsilon_{\mathrm{safe}}$,
and induction from $\mu_{a_1^*} = \mu_{a^*}$ yields 
$\mu_{a_{h+1}^*} \ge \mu_{a^*} - 2h\,\epsilon_{\mathrm{safe}}$. In other words, $\Delta_{a_{h+1}^*} \le 2h\,\epsilon_{\mathrm{safe}}$.
After the final round $h = \lceil\log_2\lvert \cA \rvert\rceil$, the active set 
$\cA_{\lceil\log_2\lvert \cA \rvert\rceil + 1}$ contains only a single arm  $a_{\mathrm{safe}}$, i.e., 
$a_{\mathrm{safe}} = a_{\lceil\log_2\lvert \cA \rvert\rceil + 1}^*$. 
Applying the induction bound at $h = \lceil\log_2\lvert \cA \rvert\rceil$ gives
$\Delta_{a_{\mathrm{safe}}} \le 2\lceil\log_2\lvert \cA \rvert\rceil\,\epsilon_{\mathrm{safe}}$. Substituting $\epsilon_{\mathrm{safe}}$ yields~\eqref{eq:Delta_a_safe}, i.e., event $\cE_1$ holds. It follows that $\mathbb{P}(\cE_1) \ge \mathbb{P}(\cE_{\mathrm{safe}}) \ge 1-\delta/3$.

\subsubsection{Proof of Lemma~\ref{lem:warm_start} (Warm Start Guarantees)}
\label{subsec:warm_start_proof}

Since Algorithm~\ref{alg:warm_start} uses arm pulls entirely disjoint from
those of Algorithm~\ref{alg:safe_arm}, all concentration bounds below are
independent of $\cE_1$, and hold with the stated unconditional probabilities
conditionally on any fixed value of $a_{\mathrm{safe}}$.
We prove $\mathbb{P}(\cE_2 \mid \cE_1) \ge 1-\delta/3$ by establishing
an event of the form $\cE_{\mathrm{ws}} \coloneqq \cE_{\mathrm{ws,loc}} \cap
\cE_{\mathrm{ws,ref}}$ (see below for the definitions), from which both conditions of $\cE_2$ follow.

\textbf{Part 1: Localization event $\cE_{\mathrm{ws,loc}}$.}
For each $a \in \supp(\rho_0)$, \textsc{1BitLocalize} is called with
effective standard deviation $\sigma' = \sigma/\sqrt{u_0(a)}$ and failure
probability $\delta/(6S_{\max})$. Since $|\supp(\rho_0)| \le S_{\max}$,
Proposition~\ref{prop:localize} and a union bound give that
\begin{equation}
  \label{eq:E_ws_loc}
  \cE_{\mathrm{ws,loc}} \coloneqq
  \bigcap_{a\in\supp(\rho_0)}\!\!
  \Bigl\{
    \mu_a \in [L_a,U_a]
    \;\text{ and }\;
    U_a-L_a \le \tfrac{8\sigma}{\sqrt{u_0(a)}}
  \Bigr\}
\end{equation}
holds with probability at least $1-\delta/6$.

\textbf{Part 2: Refinement event $\cE_{\mathrm{ws,ref}}$.}
We condition on fixed intervals satisfying $\cE_{\mathrm{ws,loc}}$, 
which makes them valid for \textsc{1BitRefine}.
Since each block~$j$ uses independent arm pulls with the same fixed parameters, the $K_0$ estimates $b^\top\hat\theta_0^{(1)},\ldots,b^\top\hat\theta_0^{(K_0)}$ are i.i.d.\
for any fixed $b \in \cA$. We bound their variance and bias as follows:

\begin{itemize}[leftmargin=4ex]
  \item \textit{Variance.}
  Each call to \textsc{1BitRefine} for arm $a$ uses
  $n_{\mathrm{q}} = \lceil C_V\log_2 T\rceil$ queries. Because arm $a$ is pulled only $u_0(a)$ times per query (with the rest of the batch padded by $a_{\mathrm{safe}}$), the effective standard deviation is $\sigma' = \sigma/\sqrt{u_0(a)}$. 
  Applying the bound from Corollary~\ref{cor:refinement_fixed_n} yields
  \[
    \mathrm{Var}(\hat{Y}_{a,0}^{(j)}) 
    \le \frac{C_V (\sigma')^2 \log_2 T}{n_{\mathrm{q}}} 
    \le \frac{C_V (\sigma^2 / u_0(a)) \log_2 T}{C_V \log_2 T} 
    = \frac{\sigma^2}{u_0(a)}.
  \]
  Combining this with $u_0(a) \ge B\rho_0(a)$ and Lemma~\ref{lem:sum_rho_times_squared_b_G_inv_a}, we obtain
  \[
  \mathrm{Var}(b^\top\hat\theta_0^{(j)})
    = \sum_{a}
      (b^\top G_0^{-1}a)^2\rho_0(a)^2
      \cdot\mathrm{Var}(\hat{Y}_{a,0}^{(j)}) 
    \le \frac{\sigma^2}{B}
      \sum_{a}
      (b^\top G_0^{-1}a)^2\rho_0(a) 
    \le \frac{2d\sigma^2}{B}.
  \]

  \item \textit{Bias.}
  Corollary~\ref{cor:refinement_fixed_n} gives
  $|\mathrm{Bias}(\hat{Y}_{a,0}^{(j)})| \le 3\sigma/(T^2\sqrt{u_0(a)})
  \le 3\sigma/T^2$. By writing $\mu_b = b^\top G_0^{-1} G_0\theta$ and using the triangle inequality alongside Corollary~\ref{cor:sum_rho_times_b_G_inv_a}, we obtain
  \[
    |\mathbb{E}[b^\top\hat\theta_0^{(j)}] - \mu_b|
    \le \frac{3\sigma}{T^2}
       \sum_{a\in\supp(\rho_0)}\rho_0(a)|b^\top G_0^{-1}a|
    \le \frac{3\sqrt{2d}\,\sigma}{T^2}
    \le 2\sigma\sqrt{\frac{2d}{B}},
  \]
  where the last step uses $T \ge B$ and $B \ge 4d(\log\log d+11) \ge 4$,
giving $T^2 \ge B^2 \ge 4B \ge \frac{3}{2}\sqrt{B}$.
\end{itemize}

Applying Lemma~\ref{lem:mom_statement} to the $K_0$ i.i.d.\ block
estimates with variance at most $2d\sigma^2/B$ and failure probability
$\delta/(6\lvert \cA \rvert)$, then combining with the bias bound via the triangle
inequality, we obtain
\[
  |\hat\mu_{b,0} - \mu_b|
  \le \sqrt{\frac{64d\sigma^2}{BK_0}
      \log\!\Big(\frac{6\lvert \cA \rvert}{\delta}\Big)}
    + 2\sigma\sqrt{\frac{2d}{B}}
  \le 4\sigma\sqrt{\frac{2d}{B}},
\]
with probability at least $1-\delta/(6\lvert \cA \rvert)$ under our conditioning,
where the last step uses $K_0 = \lceil 8\log(6\lvert \cA \rvert/\delta)\rceil$.
Since $\cA$ is a fixed non-random set, a union bound over all $b \in \cA$
inside the conditional measure gives
\[
  \mathbb{P}\!\Big(
    \exists\,b\in\cA :
    |\hat\mu_{b,0}-\mu_b| > 4\sigma\sqrt{2d/B}
    \;\Big|\;\cE_{\mathrm{ws,loc}}
  \Big)
  \le \frac{\delta}{6},
\]
so the event
\[
    \cE_{\mathrm{ws,ref}} \coloneqq
\bigcap_{b\in\cA}\left\{|\hat\mu_{b,0}-\mu_b| \le 4\sigma\sqrt{\frac{2d}{B}} \right\}
\]
satisfies $\mathbb{P}(\cE_{\mathrm{ws,ref}}^c \mid \cE_{\mathrm{ws,loc}})
\le \delta/6$.

\textbf{Part 3: Optimality of retained arms.}
Conditioned on $\cE_{\mathrm{ws}} = \cE_{\mathrm{ws,loc}} \cap
\cE_{\mathrm{ws,ref}}$, we have $|\hat\mu_{b,0}-\mu_b| \le
4\sigma\sqrt{2d/B}$ for all $b \in \cA$.  We then have the following two conditions.

\textit{Condition~(i): Gap bound.}
For any $a \in \cA_1$, the elimination rule (see Line~\ref{algline:warm_start_elim_rule}) gives
\[
    \hat\mu_{a^*,0} - \hat\mu_{a,0} \le 8\sigma\sqrt{2d/B}.
\]
Combining this with standard decomposition and the estimation error guarantee under $\cE_{\mathrm{ws,ref}}$ yields
\[
  \Delta_a
  = \mu_{a^*} - \mu_a
  \le
 \underbrace{(\mu_{a^*} - \hat\mu_{a^*,0})}_{\le 4\sigma\sqrt{2d/B} \text{ under } \cE_{\mathrm{ws,ref}}}
    + (\hat\mu_{a^*,0} - \hat\mu_{a,0})
    +  \underbrace{(\hat\mu_{a,0} - \mu_a)}_{\le 4\sigma\sqrt{2d/B} \text{ under } \cE_{\mathrm{ws,ref}}}
  \le 16\sigma\sqrt{\frac{2d}{B}}.
\]

\textit{Condition~(ii): Optimal arm retained.}
If $a^* \notin \cA_1$, some $b$ satisfies
$\hat\mu_{b,0} - \hat\mu_{a^*,0} > 8\sigma\sqrt{2d/B}$, which gives
$\mu_b - \mu_{a^*}
> 8\sigma\sqrt{2d/B} - 2\cdot 4\sigma\sqrt{2d/B} = 0$,
contradicting the optimality of $a^*$.  Thus, it must hold that $a^* \in \cA_1$.

\textbf{Part 4: Conclusion.}
Since \textsc{WarmStart} uses arm pulls independent of \textsc{SafeArm},
the probabilities of $\cE_{\mathrm{ws,loc}}$ and $\cE_{\mathrm{ws,ref}}$
are unchanged conditionally on $\cE_1$. Therefore,
\[
  \mathbb{P}(\cE_2 \mid \cE_1)
  \ge \mathbb{P}(\cE_{\mathrm{ws}} \mid \cE_1)
  = \mathbb{P}(\cE_{\mathrm{ws,ref}} \mid \cE_1\cap\cE_{\mathrm{ws,loc}}) 
  \mathbb{P}(\cE_{\mathrm{ws,loc}} \mid \cE_1)
  \ge \Big(1-\frac{\delta}{6}\Big)^2
  \ge 1-\frac{\delta}{3}. 
\]

\subsection{Proof of Corollary~\ref{cor:gap_poly_A} (Gap to The Lower Bound)} \label{subsec:bound_gap}

In this section, we provide the analysis for Corollary~\ref{cor:gap_poly_A}. This establishes that the expected regret upper bound of Corollary~\ref{cor:poly_A_upper} is order-optimal up to $\mathrm{polylog}(d, T)$ factors provided that either $B = \Omega(d^3)$ or $T = \Omega(d^2 B)$. Furthermore, in all remaining regimes, we show that it is suboptimal by at most an $\widetilde{O}(\sqrt{d})$ factor compared to the lower bound $\mathrm{LB}(B,T,d) = \Omega(B+\sqrt{dT})$ implied by Corollary~\ref{cor:general_lower_bound} for $|\mathcal A|=\mathrm{poly}(d)$. Note that throughout we slightly abuse the $\widetilde{O}(\cdot)$ notation to include ${\rm polylog}(d,T)$ factors, even when they do not appear in the argument.

We first recall the simplified expected regret bound of Corollary~\ref{cor:poly_A_upper}, under the assumptions that $|\mathcal A|=\mathrm{poly}(d)$ and $\sigma = \Theta(1)$:
\[
  \mathbb{E}[R_T] = \widetilde{O}\left(B + d^{3/2}\sqrt{B} + \sqrt{dT}\right).
\]
We first observe that the middle term is dominated by the first or last whenever
\[
  B = \Omega(d^3)
  \implies d^{3/2} = O(\sqrt{B})
  \implies \mathbb{E}[R_T] = \widetilde{O}(B + \sqrt{dT}),
\]
or
\[
  T = \Omega(d^2B)
  \implies d^{3/2}\sqrt{B} = O(\sqrt{dT})
  \implies \mathbb{E}[R_T] = \widetilde{O}(B + \sqrt{dT}).
\]

We now bound the general suboptimality by $\widetilde{O}(\sqrt{d})$ by considering two different regimes for $T$.

\paragraph{Case~1: $T = \Omega(dB)$.} 
The assumption $T = \Omega(dB)$ implies $\sqrt{dB} = O(\sqrt{T})$, and thus $d^{\frac{3}{2}} \sqrt B \le O(d \sqrt{T})$. It follows that the ratio to the lower bound satisfies
\[
  \frac{\mathbb{E}[R_T]}{\mathrm{LB}(B,T,d)}
  = \widetilde{O}\!\left(\frac{B + d\sqrt{T} + \sqrt{dT}}{B + \sqrt{dT}}\right)
  = \widetilde{O}(\sqrt{d}).
\]


\paragraph{Case~2: $T = O(dB)$.}
We can further split the case $T = O(dB)$ into two smaller sub-cases: $T = O(B \log d)$, and $T = \Omega(B \log d)$. In each case, we analyze the regret accumulated by Algorithm~\ref{alg:poly_action_set} up to early termination.

First, if $T = O(B \log d)$, the regret is trivially bounded by $2T = O(B \log d)$, yielding
\[
  \frac{\mathbb{E}[R_T]}{\mathrm{LB}(B,T,d)}
  \le O\left(\frac{T}{B + \sqrt{dT}}\right)
  \leq O\left(\frac{T}{B} \right)
    = O(\log d).
\]

For the regime where $T = \Omega(B \log d)$ and $T = O(dB)$, we split the analysis into two cases: (i) the pre-processing steps in Algorithm \ref{alg:poly_action_set} are not performed; and (ii) the pre-processing steps in Algorithm \ref{alg:poly_action_set} are performed. For case (i), Line \ref{algline:tau} of Algorithm \ref{alg:poly_action_set} implies that $B = \widetilde{O}(d)$. Using this and the trivial bound $\mathbb{E}[R_T] \le 2T$, we have
\[
  \frac{\mathbb{E}[R_T]}{\mathrm{LB}(B,T,d)}
  \le O\!\left(\frac{T}{B + \sqrt{dT}}\right)
  \le O\!\left(\frac{T}{\sqrt{dT}}\right)
  = O\!\left(\sqrt{\frac{T}{d}}\right)
  = O\!\left(\sqrt{B}\right)
  = \widetilde{O}(\sqrt{d}),
\]
where the last two steps follow from $T = O(dB)$ and $B = \widetilde{O}(d)$.

For case (ii), we may assume that there are sufficiently many arm pulls for the \textsc{SafeArm} subroutine to complete (otherwise we are in the $T = O(B\log d)$ sub-case already handled above). By Lemma~\ref{lem:safe_arm} with $\delta = 1/T$, the returned arm satisfies $\Delta_{a_{\mathrm{safe}}} = \widetilde{O}(\sqrt{d/B})$ with probability
at least $1 - 1/(3T)$; on the complementary failure event (probability
at most $1/(3T)$), $\Delta_{a_{\mathrm{safe}}} \le 2$ since all mean
rewards are bounded. Hence
\[
  \mathbb{E}[\Delta_{a_{\mathrm{safe}}}]
  \le \widetilde{O}\left(\sqrt{\frac{d}{B}}\right)
    + 2 \cdot \frac{1}{3T}
  = \widetilde{O}\left(\sqrt{\frac{d}{B}} +\frac{1}{T}\right).
\]
During the subsequent \textsc{WarmStart} epoch, the core-set arms are pulled at most $\widetilde{O}(B)$ times in total while $a_{\mathrm{safe}}$
fills all remaining slots. Thus, even if the algorithm terminates midway through this epoch, the total expected regret incurred is bounded by $\widetilde{O}(B)$ (for the safe arm identification and core set pulls) plus the regret from pulling $a_{\mathrm{safe}}$ up to $T$ times:
\[
  \mathbb{E}[R_T]
  \le \widetilde{O}(B)
    + T\cdot\mathbb{E}[\Delta_{a_{\mathrm{safe}}}]
  = \widetilde{O}\left(B + T\sqrt{\frac{d}{B}} + 1\right)
  = \widetilde{O}\left(B + T\sqrt{\frac{d}{B}}\right).
\]
We can rewrite the second term as $T\sqrt{\frac{d}{B}} = \sqrt{\frac{T}{B}} \cdot \sqrt{dT}$. Because we are in the regime where $T = O(dB)$, it follows that $\frac{T}{B} = O(d)$, and therefore $\sqrt{\frac{T}{B}} = O(\sqrt{d})$. Substituting this into the ratio yields
\[
    \frac{\mathbb{E}[R_T]}{\mathrm{LB}(B, T, d)} 
    \le \widetilde{O} \left(\frac{B+ \sqrt{\frac{T}{B}} \cdot \sqrt{dT} }{B+\sqrt{dT}} \right)
    = \widetilde{O} \left(\frac{B+ \sqrt{d} \cdot \sqrt{dT} }{B+\sqrt{dT}} \right)
    = \widetilde{O}(\sqrt{d}).
\]
This completes the proof of Corollary~\ref{cor:gap_poly_A}.

\end{document}